\documentclass[11pt,letterpaper]{article}
\usepackage[letterpaper, total={6.5in, 9in}]{geometry}

\usepackage{amsmath}
\usepackage{amssymb}
\usepackage{enumitem}
\usepackage{fancyhdr}
\usepackage{tikz}
\usepackage{bbm}
\usetikzlibrary{trees}
\usepackage{hyperref}

\usepackage{url}
\usepackage{graphicx}%
\usepackage{multirow}%
\usepackage{amsmath,amsfonts}%
\usepackage{mathrsfs}%
\usepackage[title]{appendix}%
\usepackage{xcolor}%
\usepackage{textcomp}%
\usepackage{manyfoot}%
\usepackage{booktabs}%
\usepackage{algorithm}%
\usepackage{algpseudocode}%
\usepackage{listings}%
\usepackage{chngcntr}
\usepackage{dsfont}
\usepackage{comment}

\usepackage[skip=10pt plus1pt]{parskip}

\newcommand{\ball}{\mathrm{B}}
\newcommand{\R}{\mathbb{R}}

\newcommand{\norm}[1]{\left\lVert #1 \right\rVert}
\newcommand{\abs}[1]{\left\vert #1 \right\rvert}

\newcommand{\cov}[1]{\mathrm{Cov}{\left( #1\right)}}

\newcommand{\local}{\mathrm{loc}}

\newcommand{\K}{\widetilde{K}}
\newcommand{\Q}{\widetilde{Q}}

\newcommand{\opnorm}[1]{\norm{#1}_{\operatorname{op}}}

\newcommand{\E}{\mathbb{E}}

\newcommand{\proj}{\operatorname{proj}}
\newcommand{\prox}{\operatorname{prox}}
\newcommand{\dist}{\operatorname{dist}}
\newcommand{\epi}{\operatorname{epi}}
\newcommand{\rin}{r_{\mathrm{in}}}

\newcommand{\argmin}{\mathrm{argmin}  }

\newcommand{\lam}{\lambda}
\newcommand{\nn}{\nonumber}

\newcommand{\D}{\mathrm{d}}

\newcommand{\Var}{\mathsf{Var}}

\newcommand{\vol}{\mathsf{vol}}

\newcommand{\inner}[2]{\left\langle #1  ,#2 \right\rangle}

\newcommand{\brac}[1]{\left( #1\right)}

\newcommand*{\QEDA}{\hfill\hbox{\vrule width1.5ex height1.5ex}}

\allowdisplaybreaks

\newtheorem{thm}{Theorem}[section]
\newtheorem{theorem}[thm]{Theorem}

\newtheorem{lemma}[thm]{Lemma}

\newtheorem{proposition}[thm]{Proposition}

\newtheorem{example}[thm]{Example}

\newtheorem{remark}[thm]{Remark}

\makeatletter
\newcounter{subsubsubsection}[subsubsection]
\renewcommand\thesubsubsubsection{\thesubsubsection.\arabic{subsubsubsection}}

\newcommand{\subsubsubsection}[1]{%
  \par\addvspace{0.00ex \@plus 1ex \@minus .2ex}
  \refstepcounter{subsubsubsection}%
  \noindent{%
    \normalfont\normalsize\rmfamily\mdseries\scshape\raggedright
    \thesubsubsubsection.\enskip #1\par
  }%
  \addvspace{0.05in \@plus .2ex}
}
\makeatother

\usepackage{float}
\usepackage{caption}    
\usepackage{subcaption} 

\begin{document}
	\title{Constrained and Composite Sampling via Proximal Sampler}
	\date{}
	\author{
    	Thanh Dang  \thanks{Department of Computer Science, University of Rochester, Rochester, NY 14620. 
        (email: {\tt ycloud777@gmail.com}).}
        \qquad
	 	Jiaming Liang \thanks{
        Goergen Institute for Data Science and Artificial Intelligence (GIDS-AI) and Department of Computer Science, University of Rochester, Rochester, NY 14620 (email: {\tt jiaming.liang@rochester.edu}). This work was partially supported by GIDS-AI seed funding.}
	}
\maketitle
	
	\begin{abstract}	
We study two log-concave sampling problems: constrained sampling and composite sampling.
First, we consider sampling from a target distribution with density proportional to $\exp(-f(x))$ supported on a convex set $K \subset \R^d$, where $f$ is convex. The main challenge is enforcing feasibility without degrading mixing. Using an epigraph transformation, we reduce this task to sampling from a nearly uniform distribution over a lifted convex set in $\R^{d+1}$. We then solve the lifted problem using a proximal sampler.
Assuming only a separation oracle for $K$ and a subgradient oracle for $f$, we develop an implementation of the proximal sampler based on the cutting-plane method and rejection sampling. Unlike existing constrained samplers that rely on projection, reflection, barrier functions, or mirror maps, our approach enforces feasibility using only minimal oracle access, resulting in a practical and unbiased sampler without knowing the geometry of the constraint set.

Second, we study composite sampling, where the target is proportional to $\exp(-f(x)-h(x))$ with closed and convex $f$ and $h$. This composite structure is standard in Bayesian inference with $f$ modeling data fidelity and $h$ encoding prior information.
We reduce composite sampling via an epigraph lifting of $h$ to constrained sampling in $\R^{d+1}$, which allows direct application of the constrained sampling algorithm developed in the first part. This reduction results in a double epigraph lifting formulation in $\R^{d+2}$, on which we apply a proximal sampler. By keeping $f$ and $h$ separate, we further demonstrate how different combinations of oracle access (such as subgradient and proximal) can be leveraged to construct separation oracles for the lifted problem.
For both sampling problems, we establish mixing time bounds measured in Rényi and $\chi^2$ divergences.

	{\bf Key words.}   Constrained sampling, composite sampling, epigraph lifting, proximal sampler, mixing times. 
		\end{abstract}

\section{Introduction}
\label{sec:intro}


This paper studies two log-concave sampling problems: constrained sampling and composite sampling. Using an epigraph transformation, we reduce both problems to nearly uniform sampling in lifted spaces, which can then be solved via a proximal sampler.
The proximal sampler was introduced by~\cite{leestructured21} as an unbiased method for sampling from log-concave targets, and was later substantially extended by~\cite{chen2022improved} to cover distributions satisfying general functional inequalities. The algorithm is closely related to the proximal point method in optimization, hence its name. Given a step size $\eta>0$ and a target $\pi(x)\propto \exp(-f(x))$, the proximal sampler performs Gibbs sampling on the augmented distribution $\Pi^{X,Y}(x,y)\ \propto\ \exp\!\left(-f(x)-\|x-y\|^2/(2\eta)\right)$, whose $X$-marginal is exactly $\pi$. Each iteration alternates between sampling  $y \sim \Pi^{Y|X}$, a Gaussian distribution, and sampling $x\sim \Pi^{X|Y}$, known as the restricted Gaussian oracle (RGO). In general, the RGO is unavailable or difficult to implement. In this work, we provide efficient unbiased implementations of the RGO in both sampling problems under various oracle models.

Let $f : \mathbb{R}^d \to \mathbb{R}$ be a convex function and $K \subseteq \mathbb{R}^d$ be a convex set. \emph{In the first part of this paper,} we consider sampling from a distribution $\nu$ for which
\begin{equation}
\label{silif:def:nu}
\nu(x) \propto \exp(-f(x)) \mathbf{1}_K(x) = \exp \left(-f(x) - I_K(x)\right),
\end{equation}
where $I_K$ is the extended indicator function of the set $K$, taking the value $0$ if $x \in K$ and $+\infty$ otherwise; while $ \mathbf{1}_K(x)$ equals $1$ on $K$ and $0$ otherwise. To motivate the sampling approach studied in this paper, let us consider the analogy with convex optimization. The problem $\min\{f(x): x\in K\}$ can be rewritten in epigraph form $\min \{t: (x,t) \in Q\}$,
where $Q = \{(x,t) \in K \times \mathbb{R} : f(x) \le t\}$ is the \emph{epigraph} of $f$ restricted to $K$. 
This reformulation lifts the problem from $\mathbb{R}^d$ to $\mathbb{R}^{d+1}$ and indirectly handles $f$ through the constraint $f(x) \leq t$ instead of an objective function. It has been widely used in optimization, for example, the bundle methods \cite{lemarechal1975extension,lemarechal1995new,wolfe1975method}. 

Motivated by this idea from optimization, we reformulate constrained sampling \eqref{silif:def:nu} in an epigraph form. Let us define the lifted set
\begin{equation}\label{silif:def:Q}
    Q = \{(x,t) \in \mathbb{R}^d \times \mathbb{R} : x \in K,   f(x) \le a t\},
\end{equation}
for some scaling constant $a > 0$, and consider the distribution
\begin{equation}
\label{silif:def:pi}
\pi(x,t) \propto e^{-a t} \mathbf{1}_Q(x,t) = \exp(-at-I_Q(x,t)).
\end{equation}
The crucial observation is that the $X$-marginal of $\pi$ coincides with the target distribution $\nu$.
Indeed, integrating out $t$ gives $\pi^X(x)=\int \pi(x,t) \mathrm{d}t \propto \left(\int_{f(x)/a}^{\infty} e^{-at} \mathrm{d}t\right)\mathbf{1}_K(x)=\frac{1}{a}e^{-f(x)}\mathbf{1}_K(x)\propto\nu(x)$. Thus, if one can efficiently sample $(x,t)$ from $\pi$ over the lifted set $Q$, then discarding the $t$-coordinate produces samples from the target $\nu$. There are advantages to sampling from the lifted target $\pi$ in~\eqref{silif:def:pi} rather than directly from $\nu$ in~\eqref{silif:def:nu}.

    \textit{First}, aside from the constraint $Q$, the lifted $\pi$ in~\eqref{silif:def:pi} depends only on the last coordinate $t$ through an exponential factor, and is therefore a nearly uniform distribution on the convex set $Q$. This links our setting to uniform sampling on convex bodies where a broad range of algorithmic tools are available. In particular, motivated by the recent works ~\cite{dangliang2025oracle,kook2024inandout}, we will employ the proximal sampler by~\cite{leestructured21}, to sample from the nearly uniform lifted measure $\pi$.

\textit{Second}, recall that the proximal sampler relies on the RGO, which is typically difficult to implement. For the target $\pi$ in~\eqref{silif:def:pi}, however, the RGO reduces to sampling a Gaussian distribution truncated to $Q$, which can be achieved via rejection sampling combined with the cutting-plane (CP) method by \cite{jiang2020cuttingplane}. This implementation requires only a subgradient oracle for $f$ and a separation oracle for $K$. Unlike existing constrained samplers that depend on
projections, reflections, barrier functions, or mirror maps, 
our approach enforces feasibility using only minimal separation-based oracle access.

Let $f$ and $h$ be closed convex functions on $\R^d$.    \emph{In the second part of the paper,} we study the composite sampling problem of drawing samples from
\begin{equation}
\label{dolif:def:nu}
    \tilde{\nu}(x)\ \propto\ \exp \left(-f(x)-h(x)\right), \qquad x\in\R^d.
\end{equation}
Such composite structure arises naturally in Bayesian models: $f$ typically represents the negative log-likelihood, while $h$ plays the role of a negative log-prior. Although one may instead seek a maximum a posteriori estimator by solving the corresponding composite optimization problem $\min\{f(x)+h(x):x\in \R^d\}$, sampling from $\tilde \nu$ is often more informative, as it enables uncertainty quantification beyond a single point estimate. 
Moreover, the composite sampling formulation is especially valuable when $f$ and $h$ are provided through different oracle models, which can be analyzed in a unified way by the sampling algorithm developed in this work but cannot be handled as naturally in the composite optimization setting.

We now describe a two-step epigraph lifting (\emph{double lifting}) that turns sampling $\tilde{\nu}$ on $\R^d$ into sampling a nearly uniform
distribution over a convex set in $\R^{d+2}$.
We first apply the epigraph lifting to $h$:
\begin{equation}
\label{dolif:def:gammaandK}
\gamma(x,s)\ \propto\ \exp \left(-\tilde f(x,s)-I_{\K}(x,s)\right),
\,\,
\K=\{(x,s) \in \R^d \times \R:\ h(x)\le a s\}, \,\, \tilde f(x,s)=f(x)+as.
\end{equation}
Since $\gamma$ is supported on the convex set $\K$, the first lifting reduces the composite problem to constrained sampling. We then apply epigraph lifting again to $\tilde{f}$ and obtain a nearly uniform distribution in $\R^{d+2}$:
\begin{equation}
\label{dolif:def:piandQ}
\tilde{\pi}(x,s,t)\ \propto\ \exp \left(-bt-I_{\Q}(x,s,t)\right),
\qquad
\Q=\{(x,s,t):\ (x,s)\in\K,\ \tilde f(x,s)\le bt\}.
\end{equation}
Integrating out $(s,t)$ shows that $\tilde{\pi}^X=\tilde{\nu}$, since $\int_{h(x)/a}^{\infty}\int_{\tilde f(x,s)/b}^{\infty} e^{-bt} \D t \D s=\frac{1}{ab}\exp\!\left(-f(x)-h(x)\right)$; thus sampling $(x,s,t)\sim\tilde{\pi}$ and discarding $(s,t)$ yields samples from $\tilde{\nu}$. As in the single-lifting case~\eqref{silif:def:pi}, we develop a proximal sampler for the double-lifted target~\eqref{dolif:def:piandQ}, with the RGO implemented via a separation oracle for $\Q$ and the CP method. The composite viewpoint is useful here, since different oracles for $f$ and $h$ can be combined to construct a separation oracle for $\Q$.

\paragraph*{Our contributions.}
A central goal of our work is to bring ideas from optimization into sampling. To this end, we develop efficient algorithms for both  constrained sampling \eqref{silif:def:nu} and composite sampling \eqref{dolif:def:nu} via epigraph lifting and the proximal sampler. We summarize our main theoretical results as follows.

\emph{Constrained sampling.}
We consider the lifted target $\pi$ in~\eqref{silif:def:pi} with $a=d$, assuming $K\subseteq\R^d$ is closed and convex with $\ball_d\subseteq K\subseteq R\ball_d$ ($\ball_d$ as the Euclidean unit ball in $\R^d$); $f$ is closed, convex and Lipschitz-continuous with constant $L$ on $K$; and $\nu_0$ is $M$-warm with respect to $\nu$. For the proximal sampler (Algorithm~\ref{silif:alg:ASF_pi}), we implement the RGO via Algorithm~\ref{silif:RGO} by combining the CP method with a rejection sampler. Combining Theorem~\ref{silif:theo:outer} and Theorem~\ref{silif:theo:separation} yields an \textbf{iteration complexity} of $O\!\left(d^2\log(d+1) q\left(\|\cov{\nu}\|_{\mathrm{op}}+1\right)\log\!\left(2\frac{\log M}{\epsilon}\right)\right)$ to reach $\epsilon$-accuracy in R\'enyi divergence $\mathcal{R}_q$, with an analogous bound in $\chi^2$ divergence. Regarding \textbf{oracle complexity}, each iteration of Algorithm~\ref{silif:alg:ASF_pi}  makes ${\cal O}\!\left(d\log\frac{d\gamma}{\alpha}\right)$ calls to the separation oracle for $K$ and the subgradient oracle for $f$, where $\gamma=R/\mbox{minwidth}(K)$ and $1/\alpha$ is at most polynomial in $d$ with high probability. Finally, regarding \textbf{proposal complexity}, if we further assume $L={\cal O}(\sqrt{d})$, $R={\cal O}(\sqrt{d})$, and $M={\cal O}(1)$, then the average number of proposals is $\mathcal{O}(1)$ per proximal sampler iteration.

\emph{Composite sampling.} We consider the lifted target $\tilde{\pi}$ in~\eqref{dolif:def:piandQ} under scaling $a=b=d$, and assume that $f$ and $h$ are convex and Lipschitz continuous on $\R^d$ (with respective constants $L_f$ and $L_h$), and that $\tilde{\nu}_0$ is $M$-warm with respect to $\tilde{\nu}$. For the proximal sampler (Algorithm~\ref{dolif:alg:ASF_pi}), we implement the RGO via Algorithm~\ref{dolif:RGO} by combining the CP method with a rejection sampler. Section~\ref{sec:composite:mainpaper} explains how different oracle combinations for $f$ and $h$ can be used to construct a separation oracle for $\Q$, justifying the composite structure $f+h$. Combining Theorem~\ref{dolif:theo:outer} and Theorem~\ref{dolif:theo:separation} yields an \textbf{iteration complexity} of $O\!\left(d^2\log(d+2) q\left(\|\cov{\tilde{\nu}}\|_{\mathrm{op}}+1\right)\log\!\left(2\frac{\log M}{\epsilon}\right)\right)$ to reach $\epsilon$-accuracy in R\'enyi divergence $\mathcal{R}_q$, with a similar guarantee in $\chi^2$ divergence. Regarding \textbf{oracle complexity}, each iteration of Algorithm~\ref{dolif:alg:ASF_pi} makes ${\cal O}\!\left(d\log\frac{d\gamma}{\alpha}\right)$ calls to the separation oracle for $\Q$, where $\gamma$ is bounded by a constant independent of $d$ and $1/\alpha$ is at most polynomial in $d$ with high probability. Regarding \textbf{proposal complexity}, if we further assume $L_f={\cal O}(d)$, $L_h={\cal O}(d)$, and $M={\cal O}(1)$, then the average number of proposals is $\mathcal{O}(1)$ per proximal sampler iteration.

\paragraph*{Comparison to existing works.} 
Our work fits into a growing literature on the interplay between sampling and optimization. One line of works including~\cite{durmus2019analysis,jordan1998variational,wibisono2018sampling} and others views sampling dynamics through an optimization lens in the space of probability measures. Another line of works designs sampling algorithms by importing tools from convex optimization. In this second line, constrained samplers typically enforce feasibility using projections~\cite{brosse2017sampling,Bubeck2018}, reflections~\cite{du2025non,wang2025accelerating}, barrier functions~\cite{kook2022sampling,kook2024gaussian}, or mirror maps~\cite{ahn2021efficient,hsieh2018mirrored,srinivasan2024fast,zhang2020wasserstein}. Our work falls into the second line, but the difference is that we enforce feasibility using only minimal oracle access, namely separation oracles. We refer the reader to Appendix~\ref{appen:literaturereview} for additional related works.

We highlight a few works from Appendix~\ref{appen:literaturereview}. 
\cite{kook2025algodiffusion} also studies log-concave sampling via lifting, closely related in spirit to our approach, but assumes a ground-set condition on the potential $V$. 
In contrast, we work under (A1)-(A2) for constrained sampling (Section~\ref{sec:constrained:mainpaper}) and (B1)-(B2) for composite sampling (Section~\ref{sec:composite:mainpaper}). 
These assumptions are incomparable: the ground-set condition neither implies nor is implied by ours, and it excludes some convex Lipschitz potentials covered here (see Example~\ref{ex:groundset} in Appendix~\ref{appen:generaltechnical}). Furthermore, in the composite setting, \cite{yuan2023class} apply Gibbs sampling to an augmented target that (in its simplest formulation) has density proportional to $\exp\!\left(-f(x)-h(y)-\|x-y\|_2^2/(2\eta)\right)$, assuming $f$ and $h$ are strongly convex and smooth. Meanwhile in Section~\ref{sec:composite:mainpaper}, we work with closed, convex and Lipschitz continuous $f$ and $h$. Moreover, \cite{mou2022composite} propose a Metropolis-Hastings sampler for $\pi(x)\propto e^{-f(x)-g(x)}$ using a proximal proposal
$Y\sim p(x,y)\propto \exp\!\left(-g(y)-\|y-(x-\eta\nabla f(x))\|_2^2/(4\eta)\right)$.
It requires sampling from $p$ and its normalizer; \cite{mou2022composite} note tractable cases (separable penalties, $\ell_1$/Laplace, group Lasso), but in general the normalizer is hard to estimate.

\section{Notation and Definitions}
\label{sec:prelim}
Throughout the paper, we denote $\|\cdot\|$ the Euclidean norm, $\|\cdot\|_{\mathrm{op}}$ the operator norm, and $I_n$ the $n\times n$ identity matrix. Let $B_d(c)$ denote an $\ell_2$ unit ball in $\R^d$ centered at $c$. We write $[a]_+=\max\{a,0\}$. Let ${\cal O}(\cdot)$ denote the standard big-O notation.

\textbf{Absolute continuity.} For measures $\mu,\nu$ on $(E,\mathcal{F})$, we write $\mu\ll\nu$ if there exists $f:E\to\R$ such that $\mu(A)=\int_A f d\nu$ for all $A\in\mathcal{F}$. The function $f$ is the Radon-Nikodym derivative, denoted $\frac{d\mu}{d\nu}$.

\textbf{Metric.} Let $\phi:\R_{\ge0}\to\R$ be convex with $\phi(1)=0$. For probability measures $\mu\ll\nu$ on $(E,\cal F)$, define the $\phi$-divergence
$D_\phi(\mu\|\nu)=\int_E \phi \left(\frac{d\mu}{d\nu}\right) d\nu$.
This recovers, e.g., Kullback-Leibler divergence for $\phi(x)=x\log x$ and $\chi^2$ for $\phi(x)=x^2-1$.
For $q>0$, the Rényi divergence is $\mathcal{R}_q(\mu\|\nu)=\frac{1}{q-1}\log \left(\chi^q(\mu\|\nu)+1\right)$.

\textbf{Normalizing constants.} 
For a measurable $\Theta:\R^d\to\R\cup\{+\infty\}$, define the normalizing constant $N_\Theta:=\int_{\R^d} e^{-\Theta(z)}dz$ whenever this integral is finite.

\textbf{Poincar\'e inequality.} A distribution $\nu$ satisfies the Poincar\'e inequality (PI) with constant $C_{\mathrm{PI}}$ if $\Var_\nu(\psi)\le C_{\mathrm{PI}}\E_\nu[\|\nabla\psi\|^2]$ for all smooth bounded $\psi$. In Lemma~\ref{silif:lem:PIconstant} of Appendix~\ref{silif:appen:supportlem} and Lemma~\ref{dolif:lem:PIconstant} of Appendix~\ref{dolif:appen:supportlem}, we bound PI constants of $\pi$ in \eqref{silif:def:pi} and of $\tilde{\pi}$ in \eqref{dolif:def:piandQ}, respectively. 

\textbf{Oracles.} Assume $S\subseteq \R^d$ and $f:\R^d\to\R$ are both closed and convex . A membership oracle for $S$ decides if $x\in S$. A separation oracle for $S$ either certifies $x\in S$ or returns $g(x)$ such that $\langle g(x),x-y\rangle >0$ for all $y\in S$ (hence it subsumes membership). A projection oracle returns $\proj_S(y)=\argmin\{\norm{x-y}^2: x\in S\}$; clearly $\proj_S(y)=y$ if $y\in S$.
An evaluation oracle returns $f(x)$, and a subgradient oracle returns some subgradient of $f$. Finally, a proximal oracle for $f$ returns $\prox_{\lambda f}(x):=\argmin_{y\in \R^d}\left\{f(y)+\frac{1}{2\lambda}\|y-x\|^2\right\}$ for a given pair $(x,\lambda)$. 

\textbf{Volumes, distance to a set, and $\mbox{minwidth}$.} Let $\vol(S)$ and $\vol_{d-1}(\partial S)$ denote the volumes of $S\subseteq \R^d$ and the boundary set $\partial S \subseteq \R^{d-1}$, respectively. Let $\dist (y,S):=\inf_{z\in S}\|y-z\|$ denote the Euclidean distance from $y$ to $S$. Let $\mbox{minwidth}(S)=\min_{\norm{a}=1}\left\{\max_{y\in S}a^\top y-\min_{y\in S}a^\top y\right\}$.

\textbf{Parallel sets} For $r\ge 0$, define the parallel set $S_r := S + r \ball_d(0) = \{y\in \R^d:\dist (y,S)\le r\}$. 

\textbf{Warmness.} We say $\mu$ is $M$-warm with respect to\ $\nu$ if $\mu\ll\nu$ and $d\mu/d\nu\le M$ a.e.

\section{Constrained Sampling}
\label{sec:constrained:mainpaper}
Recall from Section \ref{sec:intro} that, starting from the constrained target $\nu(x)\propto \exp\!\left(-f(x)-I_K(x)\right)$ as in \eqref{silif:def:nu}, we perform a \emph{single lifting} and obtain the lifted measure as in \eqref{silif:def:pi}, i.e.,
\[
\pi(x,t)\ \propto\ e^{-at} \mathbf{1}_Q(x,t)\;=\;\exp\!\left(-at-I_Q(x,t)\right),
\qquad 
Q=\{(x,t)\in\R^d\times\R:\ x\in K,\ f(x)\le at\}.
\]
If one can efficiently sample $(x,t)\sim\pi$ on the lifted set $Q$, then discarding the $t$-coordinate yields samples from the desired distribution $\nu$. In this way, the original general log-concave sampling problem is reduced to sampling from a nearly uniform distribution over a convex set. We will employ the proximal sampler for the lifted target $\pi$ in this section.

For the remainder of Section~\ref{sec:constrained:mainpaper}, we will make use of the following conditions:
\begin{itemize}
\item[(A1)] $K\subset \mathbb{R}^d$ is nonempty, closed, convex, with $\ball_d(0)\subseteq K\subseteq R \ball_d(0)$;
\item[(A2)] $f$ is convex and Lipschitz continuous on $K$ with constant $L>0$;
\item[(A3)] there exists an initial distribution $\nu_0$ that is $M$-warm w.r.t.\ $\nu$;
\item[(A4)] there exists a subgradient oracle for $f$ and a separation oracle for $K$. 
\end{itemize} 

We will use the notation $\norm{f}_\infty:=\sup_{x\in K}|f(x)|<\infty$. Furthermore, denote
\begin{equation}\label{silif:def:notation}
    w=(x,t), \quad z=(y,s-a\eta), \quad C_f=\sqrt{a^2+L^2}, \quad C_K=LR+d, \quad C_L=C_f+C_K.
\end{equation}
 For a measurable set  $S\subseteq \R^{d+1}$ and $a>0$, define
\begin{equation}
\label{def:ZandB}
Z_{S}:=\int_{S} e^{-a t} \D q,
\qquad
B_{S}:=\int_{\partial S} e^{-a t} \D S,
\end{equation}
where $\D q:=\D x \D t$ is the Lebesgue measure on $\R^{d+1}$ and $dS$ is the infinitesimal surface element on $\partial S$.

\subsection{Proximal sampler for the lifted distribution $\pi$}
\label{silif:sec:asf}
Per the idea of the proximal sampler, to sample $\pi(x,t)$ in \eqref{silif:def:pi}, we consider the augmented distribution
\begin{equation}
\label{silif:def:bigpi}
    \Pi((x,t),(y,s))\propto \exp\left(-I_Q(x,t) - at - \frac{1}{2\eta}\norm{(x,t)-(y,s)}^2 \right).
\end{equation}
For fixed $(y,s)\in \R^{d+1}$, the above potential (i.e., negative log-density) function becomes
\begin{equation}\label{silif:def:Theta}
\Theta^{\eta,Q}_{y,s}(x,t)
:= I_Q(x,t)+at+\frac{\|(x,t)-(y,s)\|^2}{2\eta}
= I_Q(x,t)+\frac{\|(x,t)-(y,s-a\eta)\|^2}{2\eta}-\frac{a^2\eta}{2}+as.
\end{equation} 
Hence, $ \Pi^{Y,S|X,T}(y,s|x,t) = \mathcal{N}\brac{(x,t),\eta I_{d+1}}$ and 
$\Pi^{X,T|Y,S}(x,t|y,s) = \mathcal{N}\brac{(y,s-a\eta),\eta I_{d+1}}|_Q$, i.e., a Gaussian distribution restricted to $Q$. 
Thus, the proximal sampler for the target $\pi(x,t)$ in \eqref{silif:def:pi} is as follows.
\begin{algorithm}[H]
	\caption{Proximal sampler for the target $\pi$ in \eqref{silif:def:pi}}
	\label{silif:alg:ASF_pi}
	\begin{algorithmic}[1]
		\State  \textbf{Gaussian}: generate $(y_k,s_k)\sim  \mathcal{N}\brac{(x_k,t_k),\eta I_{d+1}}$;
		\State \textbf{RGO}: generate $(x_{k+1},t_{k+1})\sim {\cal N}\brac{(y_k,s_k-a\eta),\eta I_{d+1}}|_Q. $
	\end{algorithmic}
\end{algorithm}

Algorithm \ref{silif:alg:ASF_pi} describes the iterative steps starting from $k = 0$.
To initialize, we generate $x_0\sim \nu_0$, an $M$-warm start for $\nu$ in~\eqref{silif:def:nu}, which guarantees $x_0\in K$ almost surely by the definition of warmness (see Section ~\ref{sec:prelim}). Next, we generate $t_0\sim e^{-at}\mathbf{1}_{\{t\ge f(x_0)/a\}}$, which can be done via the inverse transformation method with CDF $F(t)=1-e^{f(x_0)-at}$ for $t\ge f(x_0)/a$. Hence, $t_0\sim \pi^{T| X=x_0}$ and $(x_0,t_0)\sim \pi_0(x,t) = \nu_0(x) \pi^{T| X=x}(t)$. 
Starting from $(x_0,t_0) \in Q$, Algorithm \ref{silif:alg:ASF_pi} keeps generating $(x_k,t_k)\sim \pi_k$ at Step 2 and maintains feasibility $(x_k,t_k) \in Q$.
Lemma~\ref{silif:lem:warmstart} in Appendix~\ref{silif:appen:supportlem} further shows that $\pi_0$ is $M$-warm for $\pi$ in~\eqref{silif:def:pi}, and retains $M$-warmness of $\pi_k$ for every $k\ge 1$.

Assuming the RGO (i.e., Step 2) is available, we obtain the iteration complexity of Algorithm \ref{silif:alg:ASF_pi} in the next theorem. The proof is  deferred to Appendix~\ref{silif:appen:missingproofs},
as it follows directly from the one-step contraction in $\chi^2$ and R\'enyi divergences given by Proposition~\ref{prop:kook} in Appendix~\ref{appen:outer} (restated from \cite[Lemma~2.9]{kook2025algodiffusion}).
Note that we also apply Lemma~\ref{silif:lem:PIconstant} on the PI constant of $\pi$, which requires $a=d$.

\begin{theorem}\label{silif:theo:outer}
Assume (A1)-(A3) holds and $a=d$ in~\eqref{silif:def:pi}. Denote by $C_{\mathrm{PI}}(\pi)$ the Poincar\'e constant of $\pi$ in~\eqref{silif:def:pi}. By Lemma~\ref{silif:lem:PIconstant}, with $\nu$ in \eqref{silif:def:nu},
$C_{\mathrm{PI}}(\pi)=  \mathcal{O}\brac{\log (d+1)\brac{\opnorm{\cov{\nu}}+1 }}$.
Given $\epsilon>0$, the iteration complexity in $\chi^2$ divergence to reach $\epsilon$-accuracy is
$\mathcal{O} \left(\frac{C_{\mathrm{PI}}(\pi)}{\eta}\log \frac{M^2}{\epsilon}\right)$.
Moreover, for $\mathcal{R}_q$ with $q\ge2$, if $M\le e^{ 1-1/q}$, then the iteration complexity in $\mathcal{R}_q$ to reach $\epsilon$-accuracy is
${\cal O} \left(\frac{C_{\mathrm{PI}}(\pi) q}{\eta}\log \left(2\frac{\log M}{\epsilon}\right)\right)$.
\end{theorem}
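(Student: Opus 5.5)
The plan is to derive the result directly from the one-step contraction in Proposition~\ref{prop:kook} (restated from \cite[Lemma~2.9]{kook2025algodiffusion}). I assume it has the standard proximal-sampler form. For a target satisfying PI with constant $C_{\mathrm{PI}}$, one full iteration (Gaussian step followed by the exact RGO) maps $\mu$ to $\mu'$ with
\[
\chi^2(\mu'\|\pi)\le \frac{\chi^2(\mu\|\pi)}{(1+\eta/C_{\mathrm{PI}})^{2}},
\]
and there is an analogous Rényi contraction. In the Rényi version, $\mathcal{R}_q$ decreases by a factor like $(1+\eta/C_{\mathrm{PI}})^{-2/q}$ once $\mathcal{R}_q\le 1$, with a linear decrease before that regime.

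The argument has three steps.

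\textbf{Step 1 (target and PI constant).} The lifted $\pi$ in \eqref{silif:def:pi} is the target, and Algorithm~\ref{silif:alg:ASF_pi} with an exact RGO is exactly the proximal sampler for it. With $a=d$, Lemma~\ref{silif:lem:PIconstant} supplies $C_{\mathrm{PI}}(\pi)=\mathcal{O}(\log(d+1)(\opnorm{\cov{\nu}}+1))$, so the PI hypothesis of Proposition~\ref{prop:kook} holds.

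\textbf{Step 2 (initial bounds from warmness).} By Lemma~\ref{silif:lem:warmstart}, $\pi_0$ is $M$-warm with respect to $\pi$. Hence $\chi^2(\pi_0\|\pi)\le M^2-1\le M^2$, and $\mathcal{R}_q(\pi_0\|\pi)\le \frac{q}{q-1}\log M$, since $\int (d\pi_0/d\pi)^q\,d\pi\le M^{q-1}$. Here $\frac{q}{q-1}\log M\le 2\log M$ for $q\ge 2$. The hypothesis $M\le e^{1-1/q}$ gives $\mathcal{R}_q(\pi_0\|\pi)\le 1$, so we start directly in the geometric regime of the Rényi contraction.

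\textbf{Step 3 (iterate).} Iterating the $\chi^2$ contraction $k$ times gives $\chi^2(\pi_k\|\pi)\le M^2(1+\eta/C_{\mathrm{PI}})^{-2k}$. Using $\log(1+u)\ge u/2$ for $u\le1$ (or $\ge \log 2$ otherwise), $k=\mathcal{O}\big(\frac{C_{\mathrm{PI}}}{\eta}\log\frac{M^2}{\epsilon}\big)$ suffices. For Rényi, $\mathcal{R}_q(\pi_k\|\pi)\le 2\log M\,(1+\eta/C_{\mathrm{PI}})^{-2k/q}$, which yields $k=\mathcal{O}\big(\frac{qC_{\mathrm{PI}}}{\eta}\log\frac{2\log M}{\epsilon}\big)$. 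Substituting the bound on $C_{\mathrm{PI}}(\pi)$ finishes the proof.

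The main obstacle is matching the exact form of Proposition~\ref{prop:kook}'s Rényi statement to this use. In particular, I need to confirm that its geometric regime is entered under $\mathcal{R}_q\le 1$, which is precisely what the $M\le e^{1-1/q}$ hypothesis is designed to guarantee, and to handle the $\eta/C_{\mathrm{PI}}\gtrsim1$ case in the logarithm bound. The remaining steps are bookkeeping.
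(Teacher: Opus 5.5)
Your proposal is correct and follows the paper's proof essentially step for step. Lemma~\ref{silif:lem:warmstart} gives $\chi^2(\pi_0\|\pi)\le M^2-1$ and $\mathcal{R}_q(\pi_0\|\pi)\le\frac{q}{q-1}\log M\le 1$ under $M\le e^{1-1/q}$; Proposition~\ref{prop:kook} is then iterated and Lemma~\ref{silif:lem:PIconstant} substituted. The differences are only cosmetic: the paper uses $\log(1+u)\ge u/(1+u)$ to get $k\ge\frac{C_{\mathrm{PI}}(\pi)+\eta}{2\eta}\log\frac{M^2}{\epsilon}$ instead of your case split, and your use of $\frac{q}{q-1}\le 2$ makes explicit where the factor $2$ inside $\log(2\log M/\epsilon)$ comes from (your moment bound $M^{q-1}$ in fact already gives the sharper $\mathcal{R}_q(\pi_0\|\pi)\le\log M$).
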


\subsection{RGO implementation}\label{subsec:RGO}
The complexity bounds in Theorem~\ref{silif:theo:outer} quantify how many calls to RGO are needed to reach a target accuracy, under conditions~(A1)-(A3), the choice $a=d$ and in particular the idealization that the conditional update in Step~2 is an exact RGO. 
To turn these iteration counts into end-to-end oracle complexities, we must consider how to implement this RGO. The goal of this subsection is to implement the RGO step in Algorithm~\ref{silif:alg:ASF_pi} via rejection sampling, assuming access to a subgradient oracle for $f$ and a separation oracle for $K$ (i.e., condition (A4)). We begin by explaining why these oracles are needed.

In rejection sampling, one typically draws samples from a proposal distribution that is easier to sample from than the target, while being close to the latter to keep the rejection rate low. 
In our setting, the target distribution in RGO is $\Pi^{X,T|Y,S}(x,t|y,s) = \mathcal{N} \left((y, s - a\eta),   \eta I_{d+1}\right)|_Q$, which is concentrated around
\begin{equation}
\label{silif:optimizingThetaasprojection}
\underset{(x,t) \in \R^{d+1}}{\argmin}   \Theta^{\eta,Q}_{y,s}(x,t)=\underset{w \in Q}{\argmin}  \big\|w - z\big\|^2 
= \proj_Q(z),
\end{equation}
where $w$ and $z$ are as in \eqref{silif:def:notation}.
This suggests the ideal proposal $\mathcal{N}\left(\proj_Q(z),\eta I_{d+1}\right)$ assuming we know the exact projection $\proj_Q(z)$. In our setting, we do not assume projection onto $Q \subset \R^{d+1}$, and we therefore aim to compute an approximate solution to \eqref{silif:optimizingThetaasprojection}. Via the CP method by~\cite{jiang2020cuttingplane}, more specifically Lemma~\ref{silif:lem:equivopt}(c) and Theorem~\ref{silif:theo:cuttingplane}(a), we are able to construct an approximate solution $\tilde w$ satisfying 
\begin{equation}
\label{silif:difference:tildextildet:mainpaper}
    \norm{\tilde{w}- \proj_Q(z)}\leq \brac{1+\frac{L}{a}}\sqrt{\frac{2\eta}{d+1}}. 
\end{equation}
We set $\tilde{w}$ to be the center of the proposal in our rejection sampler for the RGO, i.e., the proposal density is proportional to $\exp\brac{-\mathcal{P}_1(w)}$ where  
\begin{equation}\label{silif:def:P1}
\mathcal{P}_1(w)
=\frac{\|w-\tilde{w}\|^2+\|\tilde{w}-z\|^2}{2\eta}
-\frac{\sqrt{2}(L+d)}{a\sqrt{\eta (d+1)}}\left(\|w-\tilde{w}\|+\|\tilde{w}-z\|\right)
-\frac{6(L+d)^2}{a^2 (d+1)}+as-\frac{a^2\eta}{2}.
\end{equation}
Notably, the CP method of~\cite{jiang2020cuttingplane} needs (i) subgradients of $\|w-z\|^2$ in~\eqref{silif:optimizingThetaasprojection} (available explicitly) and (ii) a separation oracle for $Q$, which we construct below from oracle access to $f$ and $K$.
\begin{lemma}\label{silif:lem:sep} (proof in Appendix~\ref{appen:oracles})
    Evaluation and subgradient oracles of $f$, and membership and separation oracles of $K$ give a valid separation oracle for $Q$ in \eqref{silif:def:Q}.
\end{lemma}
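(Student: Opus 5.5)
Given a query point $(\bar x,\bar t)\in\R^d\times\R$, the oracle for $Q$ will run through two tests in order, returning a separating vector $g\in\R^{d+1}$ with $\inner{g}{(\bar x,\bar t)-(x,t)}>0$ for all $(x,t)\in Q$ at the first failed test, and certifying membership if both pass. This matches the definition in Section~\ref{sec:prelim}. The structure rests on $Q=(K\times\R)\cap\epi(f/a)$, where $f$ is restricted to $K$, so it suffices to separate from each piece.

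\emph{Step 1 (constraint $x\in K$).} Call the membership oracle of $K$ at $\bar x$. If $\bar x\notin K$, call the separation oracle of $K$, which returns $g_K$ with $\inner{g_K}{\bar x-x}>0$ for all $x\in K$. Output $g=(g_K,0)$. For any $(x,t)\in Q$ we have $x\in K$, hence $\inner{g}{(\bar x,\bar t)-(x,t)}=\inner{g_K}{\bar x-x}>0$.

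\emph{Step 2 (constraint $f(x)\le at$).} Now $\bar x\in K$, so $f(\bar x)$ is well defined and finite by (A2). Call the evaluation oracle. If $f(\bar x)\le a\bar t$, declare $(\bar x,\bar t)\in Q$. Otherwise, call the subgradient oracle to get $v\in\partial f(\bar x)$ and output $g=(v,-a)$. For $(x,t)\in Q$, convexity gives $f(x)\ge f(\bar x)+\inner{v}{x-\bar x}$. Therefore
\[
\inner{g}{(\bar x,\bar t)-(x,t)}=\inner{v}{\bar x-x}-a\bar t+at\ \ge\ f(\bar x)-f(x)-a\bar t+at\ \ge\ f(\bar x)-a\bar t>0,
\]
where the middle inequality uses the subgradient inequality and the last uses $f(x)\le at$. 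The argument is thus just the standard epigraph cut.

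\emph{Main obstacle.} The only delicate point is the meaning of ``subgradient'' when $f$ is only assumed convex on $K$. The subgradient inequality must hold for all $x\in K$; this holds if the oracle returns a subgradient of $f$, or of its convex extension to $\R^d$ (e.g., $f+I_K$), at points $\bar x\in K$. If $\bar x\in\partial K$ and $\partial f(\bar x)$ for the restricted function might be empty, I would use a Lipschitz extension of $f$ to $\R^d$, which is convex and $L$-Lipschitz, such as $\inf_{u\in K}\{f(u)+L\|\cdot-u\|\}$, or assume the oracle returns subgradients of such an extension. The inequality is only needed for $x\in K$, which Step 1 guarantees. I expect nothing else to require care. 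Step 1 is also where the ordering matters: the evaluation and subgradient oracles are invoked only at feasible $\bar x\in K$.
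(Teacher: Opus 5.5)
Your proposal is correct and follows the paper's proof essentially verbatim: the paper normalizes $a=1$, writes $Q=(K\times\R)\cap\epi(f)$, lifts the separator of $K$ to $(p,0)$ when $\hat x\notin K$, and otherwise uses the epigraph cut $(g,-1)$ from Lemma~\ref{lem:sep}, which is your $(v,-a)$ for general $a$. Your concern about subgradients at boundary points of $K$ does not arise in the paper, because Appendix~\ref{appen:oracles} assumes $f$ is convex on all of $\R^d$; if $f$ were convex only on $K$, your convex Lipschitz extension would be a valid fix.
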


It is worth noting that the above claim also holds when a proximal oracle for $f$ is available (see Lemma~\ref{silif:lem:sep2}). However, for simplicity of exposition, we assume access to a subgradient oracle for $f$ throughout this section. Now we state the RGO implementation of Algorithm~\ref{silif:alg:ASF_pi}. Recall the notation $w=(x,t)$ and $z=(y,s-a\eta)$. 
\begin{algorithm}[H]
	\caption{RGO implementation  of Algorithm~\ref{silif:alg:ASF_pi}} 
	\label{silif:RGO}
	\begin{algorithmic}[1]
		\State  Compute an approximate solution $\tilde{w}$ to \eqref{silif:optimizingThetaasprojection} satisfying~\eqref{silif:difference:tildextildet:mainpaper} via the CP method by \cite{jiang2020cuttingplane}. 
        \State  Generate $U\sim {\cal U}[0,1]$ and $w\sim \exp\brac{-\mathcal{P}_1(w)}$ via Algorithm \ref{silif:alg:samplingP1} in Appendix \ref{silif:appen:specialalgo}. 
		\State  If 
		\begin{equation}\label{silif:event:separation}
			U \leq  \exp\brac{-\Theta_{y,s}^{\eta,Q}(w)+ \mathcal{P}_1(w)},
		\end{equation}
		then accept $w$; otherwise, reject $w$ and go to step 2.
	\end{algorithmic}
\end{algorithm}

\textit{Remark:}
By Lemma~\ref{silif:lem:compareThetaP1} in Appendix~\ref{silif:appen:supportlem}, $\mathcal{P}_1\le \Theta_{y,s}^{\eta,Q}$ on $\R^{d+1}$, so the acceptance test~\eqref{silif:event:separation} is well-defined. Lemma~\ref{lem:rejection} in Appendix~\ref{appen:generaltechnical} then implies Algorithm~\ref{silif:RGO} is unbiased, i.e., $w\sim \Pi^{X,T\mid Y,S}$. Any accepted sample is also feasible since $w\notin Q$ gives $I_Q(w)=\infty$ and the RHS of~\eqref{silif:event:separation} equals $0$.

The following theorem about the RGO (Algorithm~\ref{silif:RGO}) is the main result of this section; its full proof is given in Appendix~\ref{silif:appen:sepatheoproof}. Since we will combine it with the proximal sampler complexity bound in Theorem~\ref{silif:theo:outer} (see the Contributions paragraph in the introduction), we also adopt the scaling $a=d$.

\begin{theorem}
\label{silif:theo:separation} 
Assume conditions (A1)-(A4) hold, $a=d$, and $\eta =1/d^2$. Then, regarding Algorithm~\ref{silif:RGO}, 
\begin{itemize}
    \item[(a)] \textbf{(oracle complexity)} there are at most ${\cal O}\left(d \log \frac{d \gamma}{\alpha}\right)$ calls to the separation oracle of $K$ and to the subgradient oracle of $f$. Here $\gamma=R/\mbox{minwidth}(K)$. Moreover, it holds that $\Pr\brac{\alpha\leq \frac{B}{(d+1)^3}}\leq 6\exp\brac{-\frac{d^2}{8}} $ for $B:=\min\left\{\frac{1}{2R(1+3R)}, \frac{1}{12R^2 \max\{L,1\} \max\{\norm{f}_\infty,1\} }\right\}$. 
    
  \item[(b)] \textbf{(proposal complexity)} the average number of proposals is no more than 
  \begin{equation}\label{silif:proofboundsepa}
      M\exp\brac{\frac{2(L+d)^2}{d^2}+\frac{16(L+d)^2}{d^2(d+1)} + \frac{3C_L(L+d)}{d^{5/2}}} \cdot\Bigg[\frac{\sqrt{2\pi} C_L}{d}
\exp \Biggl(\frac{C_L^2}{2d^2}\Biggr)+1\Bigg].
  \end{equation}
Moreover, if we assume $L={\cal O}(\sqrt{d})$, $R={\cal O}(\sqrt{d})$, and $M={\cal O}(1)$, then \eqref{silif:proofboundsepa} simplifies to ${\cal O}(1)$.
\end{itemize}
\end{theorem}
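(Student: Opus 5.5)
For part (a), I would treat the computation of $\tilde w$ in Step~1 of Algorithm~\ref{silif:RGO} as an instance of the cutting-plane method of \cite{jiang2020cuttingplane} applied to $\min\{\|w-z\|^2 : w\in Q\}$.

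\emph{Oracle count.} Each separation query for $Q$ costs one call to the separation oracle of $K$ and one subgradient/evaluation call for $f$, by Lemma~\ref{silif:lem:sep}. The CP method needs $\mathcal{O}(n\log(n\gamma_Q/\alpha))$ queries in dimension $n=d+1$ to reach relative accuracy $\alpha$, where $\gamma_Q$ is the ratio of an outer radius to the minwidth of the search region. To make this concrete I would:
\begin{enumerate}
\item Intersect $Q$ with a box $t\le T$, where $T\lesssim (\norm{f}_\infty + \mathrm{poly})/a$ is large enough to contain $\proj_Q(z)$ with high probability.
\item Bound $\gamma_Q$ by $\mathcal{O}(d\gamma)$ using (A1)–(A2) and $a=d$.
\end{enumerate}

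\emph{Accuracy.} I would use Lemma~\ref{silif:lem:equivopt}(c) and Theorem~\ref{silif:theo:cuttingplane}(a) to convert an $\alpha$-relative optimality gap into \eqref{silif:difference:tildextildet:mainpaper}. This conversion works by strong convexity of $\|w-z\|^2$: a function-value gap $\delta$ gives a distance bound $\sqrt{\delta}$. So $\alpha$ must be chosen as the ratio of $\eta/(d+1)$ to the range of $\|w-z\|^2$ over the region.

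\emph{Probability bound on $\alpha$.} I would bound $\|z\|$ with high probability. Since $(x_k,t_k)\in Q$ and $(y,s)$ is Gaussian with variance $\eta=1/d^2$, a chi-square tail bound gives $\|(y,s)-(x_k,t_k)\|\le 2\sqrt{\eta(d+1)}$ except with probability $\exp(-\Omega(d^2))$. That tail term would be the source of the $6e^{-d^2/8}$. Combined with $\|x\|\le R$ and $|t|\le \norm{f}_\infty/d$, this yields $1/\alpha\le (d+1)^3/B$, and the two branches in $B$ correspond to the $R(1+3R)$ and $R^2L\norm{f}_\infty$ contributions.

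For part (b), I would use the standard rejection sampling identity: the expected number of proposals equals $\int e^{-\mathcal P_1}/\int e^{-\Theta}$.

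\emph{Numerator.} Write $r=\|w-\tilde w\|$. The integral of $e^{-\mathcal P_1}$ is computed in polar coordinates around $\tilde w$ as a one-dimensional integral in $r$. Completing the square in $r$ gives a Gaussian normalizer times a factor like $\exp(2(L+d)^2/d^2)$ and the polynomial-in-$C_L$ bracket.

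\emph{Denominator.} I would lower bound $\int e^{-\Theta}$ by $\int_Q e^{-\|w-z\|^2/2\eta}$. Using \eqref{silif:difference:tildextildet:mainpaper}, I would reduce this to a Gaussian mass of $Q$ centred near $\proj_Q(z)$. The key estimate is that a Gaussian centred at a boundary point of a convex set of bounded surface/curvature retains a constant fraction of its mass. I would use the divergence-theorem quantities $Z_S$, $B_S$ from \eqref{def:ZandB} with $C_K=LR+d$, $C_f$ controlling the boundary-to-volume ratio $B_Q/Z_Q\le C_L$.

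\emph{Averaging.} Averaging over $(y,s)$ and using $M$-warmness (Lemma~\ref{silif:lem:warmstart}) would turn the expected ratio into $M$ times an integral over the stationary law. In that integral the normalizers cancel, giving the bound in \eqref{silif:proofboundsepa}.

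\emph{Simplification.} Under $L,R=\mathcal{O}(\sqrt d)$ we have $C_L=\mathcal{O}(d)$, so every exponent is $\mathcal{O}(1)$ and the whole bound is $\mathcal{O}(1)$.

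\emph{Main obstacle.} I expect the hardest step to be the denominator lower bound, which requires handling the boundary of $Q$ via the surface-area-to-volume ratio $B_Q/Z_Q$. I would first try the divergence theorem applied to the vector field $e^{-at}(w-c)$ for an interior point $c$ of $Q$.
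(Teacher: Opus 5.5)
There are genuine gaps in both parts.

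In (a), you run the cutting-plane method on the lifted set $Q\cap\{t\le T\}$, and the bookkeeping there does not hold.

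\begin{itemize}
\item \emph{Unbounded $t$.} On $Q$ the coordinate $t$ is bounded only from below ($t_k\ge f(x_k)/d\ge-\norm{f}_\infty/d$), so $|t_k|\le\norm{f}_\infty/d$ is false. Since $\proj_Q(z)$ has $t$-coordinate $\max\{f(x_*)/a,s-a\eta\}\ge s-a\eta$, any truncation containing it has $t$-extent growing with $s\approx t_k$.
\item \emph{Consequence for $\gamma_Q$ and $\alpha$.} Because $\mbox{minwidth}(Q\cap\{t\le T\})\le\mbox{minwidth}(K)$, both your $\gamma_Q$ and the range of $\norm{w-z}^2$ that defines your $\alpha$ grow with $t_k$, which the Gaussian step does not control. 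So neither $\gamma_Q=\mathcal{O}(d\gamma)$ nor your bound on $1/\alpha$ follows, and in any case these are not the $\gamma$ and $\alpha$ of the statement.
\item \emph{Containment only with high probability.} This is not acceptable: the RGO must be exact for every input. On the failure event, \eqref{silif:difference:tildextildet:mainpaper} can fail, and with it $\mathcal{P}_1\le\Theta^{\eta,Q}_{y,s}$ (Lemma~\ref{silif:lem:compareThetaP1}).
\item \emph{Tail estimate.} For $Z\sim\mathcal{N}(0,I_{d+1})$, $\Pr(\norm{Z}\ge2\sqrt{d+1})$ is exponentially small in $d$, not in $d^2$. The $e^{-d^2/8}$ terms come from the thresholds $\norm{Z_1}\ge d$ and $Z_2\le-d$, i.e.\ displacements of order one since $\sqrt\eta=1/d$.
\end{itemize}

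The missing idea is Lemma~\ref{silif:lem:equivopt}. Minimizing out $t$ in closed form, $t(x)=\max\{f(x)/a,s-a\eta\}$, turns the projection onto the unbounded $Q$ into the $\eta^{-1}$-strongly convex problem $\min_{x\in K}\zeta^{\eta}_{y,s}(x)$ over the compact set $K$. The CP method then uses the separation oracle of $K$ and subgradients of $f$ directly, with exactly $\gamma=R/\mbox{minwidth}(K)$. Moreover, $t$ enters $\alpha$ in \eqref{silif:def:alpha} only through $[\max_{x\in K}f(x)/a-s+a\eta]_+$, so only the lower bound on $t_{k-1}$ is needed. The results you cite for the accuracy step are precisely this reduction; they say nothing about a CP run on the lifted set.

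In (b), the step you single out as the main obstacle is false under (A1)--(A2), which give no curvature control. That step is a pointwise lower bound on $\int e^{-\Theta^{\eta,Q}_{y,s}}$, asserting that a Gaussian centred near $\proj_Q(z)$ keeps a constant fraction of its mass in $Q$. Take $K=[-1,1]^d$, $f\equiv0$, and $z$ the vertex $(\mathbf{1},0)$ of $Q=K\times[0,\infty)$. With $\eta=1/d^2$ the mass of $\mathcal{N}(z,\eta I_{d+1})$ in $Q$ is about $2^{-(d+1)}$, so $n_{y,s}$ is exponentially large at such inputs.

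The bound therefore holds only on average, and on average no denominator bound is needed: $\int e^{-\Theta^{\eta,Q}_{y,s}}$ cancels against the density of $\Pi^{Y,S}$, cf.\ \eqref{silif:for:piYS}--\eqref{eq:int-theta}. What remains is the real core, which your proposal passes over with ``giving the bound'':
\begin{itemize}
\item lower-bound $\mathcal{P}_1$ by $\mathcal{P}_2$ (Lemma~\ref{silif:lem:compareP1P2}), so that the numerator is expressed through $\dist(z,Q)$ rather than $\norm{\tilde w-z}$;
\item bound the tube integral $\int_{\R^{d+1}}e^{-az_{d+1}}\exp(-(\dist(z,Q)-\tau)^2/(2\eta))\,\D z$ by $Z_Q$ times the bracket in \eqref{silif:proofboundsepa} (Lemma~\ref{silif:lem:keytechnical}). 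That bracket comes from this integral, not from the numerator.
\end{itemize}

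This requires controlling the growth of $Z_{Q_r}$ for every $r>0$, not a single ratio $B_Q/Z_Q$. The paper uses the co-area formula, $\frac{\D}{\D r}Z_{Q_r}=B_{Q_r}$, integration by parts, and $Z_{Q_r}\le e^{rC_L}Z_Q$. The last bound comes from the envelope $Q_r\subseteq\mathcal{E}_r$ together with $K_r\subseteq(1+r)K$ (Lemma~\ref{silif:lem:envelope}).

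Your divergence-theorem idea can replace the envelope step if you apply it to every parallel set $Q_r$. Each $Q_r$ contains the ball of radius $1+r$ about $c=(0,\max_{x\in K}f(x)/d+1)$ and satisfies $t\ge\min_{x\in K}f(x)/d-r$. This gives $B_{Q_r}\le(2LR+2d+1)Z_{Q_r}$, hence a bound of the same form with $C_L$ replaced by $2C_K+1$. That still yields $\mathcal{O}(1)$ under the stated scaling, but it is weaker than \eqref{silif:proofboundsepa}.
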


As an example, it is easy to verify that
$K=[-1,1]^d \subseteq \sqrt d\,\ball_d(0)$ and hence $R=\sqrt d$. Also for $f(x)=\|x\|_1$, we can show $\|f'(x)\|_2\le \sqrt d$ for every $x\in \R^d$, and hence $L=\sqrt d$.


\subsubsection{Proof sketch of Theorem~\ref{silif:theo:separation}}

\textbf{Part~a:}  This part follows from Lemma~\ref{silif:lem:equivopt} and Theorem~\ref{silif:theo:cuttingplane}, the latter of which applies the CP method by~\cite{jiang2020cuttingplane} (Theorem~\ref{theo:jiang}). 
A technical issue in solving~\eqref{silif:optimizingThetaasprojection} is that CP cannot be applied directly to $\Theta^{\eta,Q}_{y,s}$, since $Q$ in \eqref{silif:def:Q} is unbounded, whereas the CP method requires an initial bounded set. 
To address this, we invoke Lemma~\ref{silif:lem:equivopt} in Appendix~\ref{silif:appen:cp} to rewrite~\eqref{silif:optimizingThetaasprojection} as an equivalent strongly convex program over the compact set $K$, namely
\begin{equation}
\label{silif:equivopt:mainpaper}
\min_{(x,t)\in \R^{d+1}}\Theta^{\eta,Q}_{y,s}(x,t)\;=\;\min_{x\in K}\zeta^{\eta}_{y,s}(x),
\qquad 
\text{with }t(x)=\max\left\{\frac{f(x)}{a}, s-a\eta\right\},
\end{equation}
where $\zeta^{\eta}_{y,s}$ is defined in~\eqref{silif:def:zeta}. 
We then run CP on $\zeta^{\eta}_{y,s}$ for an approximate minimizer and transfer the resulting guarantee back to $\Theta^{\eta,Q}_{y,s}$ via~\eqref{silif:equivopt:mainpaper}. 
The CP method requires a separation oracle for $K$ and a subgradient oracle for $\zeta^{\eta}_{y,s}$, and the latter reduces to a subgradient oracle for $f$. 
Finally, Theorem~\ref{theo:jiang} provides a relative-accuracy parameter $\alpha\in(0,1)$ in~\eqref{ineq:opt}. We choose $\alpha$ as in~\eqref{silif:def:alpha} so that the CP output $\tilde x$ is a $(d+1)^{-1}$-solution of $\min_{x\in K}\zeta^{\eta}_{y,s}(x)$, then we compute $\tilde{t}$ using the formula for $t(x)$ in~\eqref{silif:equivopt:mainpaper}. Finally, we set $\tilde{w}=(\tilde{x},\tilde{t})$, which satisfies the important relation~\eqref{silif:difference:tildextildet:mainpaper} per Lemma~\ref{silif:lem:equivopt}(c). 
With this choice of $\alpha$, Theorem~\ref{theo:jiang} yields ${\cal O}\!\left(d\log\frac{d\gamma}{\alpha}\right)$ calls to the separation oracle for $K$ and to the subgradient oracle for $f$, where $\gamma=R/\mbox{minwidth}(K)$. 

Next, we need to show that the positive parameter $\alpha$ which is chosen in~\eqref{silif:def:alpha} and appears  in the CP oracle complexity is not too small with high probability. To do so, we derive a concentration bound for $\alpha$ using the Gaussian perturbation in Step~1 of the proximal sampler (Algorithm~\ref{silif:alg:ASF_pi})  in Theorem~\ref{silif:theo:cuttingplane}.

\textbf{Part~b:} Fix $(y,s)$. Steps~2 and~3 of Algorithm~\ref{silif:RGO} form a rejection sampler: it generates $w$ from the proposal proportional to $e^{-\mathcal{P}_1}$ and accepts with probability
$\exp \left(-\Theta^{\eta,Q}_{y,s}(w)+\mathcal{P}_1(w)\right)\le1$. 
By a standard Bayes-rule calculation (see Lemma~\ref{lem:rejection}) and recalling the definition of normalizing constants in~Section~\ref{sec:prelim}, the number of proposals is geometric with mean
\begin{equation}
\label{silif:formulanys:mainpaper}
    n_{y,s}=\frac{N_{\mathcal{P}_1}}{N_{\Theta^{\eta,Q}_{y,s}}}, \quad \text{conditioned on $(y,s)$.}
\end{equation} 
Let $\mu_k$ be the law of $(y,s)$. Then the average number of proposals
over the random input $(Y,S)\sim\mu_k$ is $\E_{\mu_k}[n_{y,s}]$.
By Lemma~\ref{silif:lem:warmstart}, $\frac{d\mu_k}{d\Pi^{Y,S}}\le M$, so $\E_{\mu_k}[n_{y,s}]\le M  \E_{\Pi^{Y,S}}[n_{y,s}]$, 
and it suffices to bound $\E_{\Pi^{Y,S}} [n_{y,s}]$ which satisfies
\[
    \E_{\Pi^{Y,S}} [n_{y,s}]\stackrel{\eqref{silif:formulanys:mainpaper}}{=}\E_{\Pi^{Y,S}} \left[\frac{N_{\mathcal{P}_1}}{N_{\Theta^{\eta,Q}_{y,s}}}\right]\stackrel{\text{Lemma~\ref{silif:lem:compareP1P2}}}{\leq} \E_{\Pi^{Y,S}} \left[\frac{N_{\mathcal{P}_2}}{N_{\Theta^{\eta,Q}_{y,s}}}\right], 
\]
where $\mathcal{P}_2$ is the auxiliary function defined in \eqref{silif:def:P2}. The usefulness of the auxiliary  $\mathcal{P}_2$ is that $N_{\mathcal{P}_2}$ as a Gaussian-integral-like expression is easy to bound via Lemma~\ref{lem:gaussianint}(b). 

At this point, bounding $\E_{\Pi^{Y,S}}$ reduces to controlling the integration
\[
    \int_{\R^{d+1}} e^{-d z_{d+1}}  
\exp \left(-\frac{(\dist(z,Q)-\tau)^2}{2\eta}\right)  \D z
\]
with $\tau:=2\left(1+\frac{L}{d}\right)\sqrt{\frac{2\eta}{d+1}}$ and $z_{d+1}=(y,s-d\eta)_{d+1}=s-d\eta$. The \emph{crucial} tool is Lemma~\ref{silif:lem:keytechnical} (stated after this proof) with the choice $a=d$ to bound the above integral. The rest of the proof follows by applying $\E_{\mu_k}[n_{y,s}]\le M  \E_{\Pi^{Y,S}}[n_{y,s}]$, setting $\eta=1/d^2$. 

Finally, recall that $C_L=\sqrt{d^2+L^2}+LR+d$ from~\eqref{silif:def:notation}, then assuming $L={\cal O}(\sqrt{d})$, $R={\cal O}(\sqrt{d})$, and $M={\cal O}(1)$, one can verify that $C_L={\cal O}(d)$ and \eqref{silif:proofboundsepa} becomes ${\cal O}(1)$. Therefore, the proof is completed.
\QEDA


\underline{\emph{Key lemmas:}} Next, we present the key lemmas used in the above proof sketch for Part~b. For readability, we again only sketch their proofs below and present the full proofs in Appendix~\ref{silif:appen:missingproofs}. 

\begin{lemma}
\label{silif:lem:keytechnical}
Assume conditions (A1) and (A2) hold. For any $\tau>0$ and $w=(x,t)\in \R^{d+1}$,  we have 
\begin{equation}\label{ineq:lem5}
    \int_{\R^{d+1}} e^{-a t}  
\exp \left(-\frac{(\dist(w,Q)-\tau)^2}{2\eta}\right)  \D w
\ \le\
Z_Q \exp(\tau C_L) \left[
 C_L\sqrt{2\pi\eta}\exp \left(\frac{\eta C_L^2}{2}\right)+1\right].
\end{equation}
\end{lemma}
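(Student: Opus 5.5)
Split the integral over $\R^{d+1}$ into the contribution from $Q$ and from $Q^c$. On $Q$ we have $\dist(w,Q)=0$, so the integrand is $e^{-at}e^{-\tau^2/(2\eta)}\le e^{-at}$. This piece is at most $Z_Q\le Z_Q e^{\tau C_L}$, which accounts for the ``$+1$'' term. For the exterior we use the coarea/layer-cake decomposition along the distance function. Writing $Q_r=Q+r\ball_{d+1}$ and $r=\dist(w,Q)$, which is $1$-Lipschitz with $\norm{\nabla \dist}=1$ a.e. outside $Q$, we get
\[
\int_{Q^c} e^{-at}e^{-(\dist(w,Q)-\tau)^2/(2\eta)}\D w=\int_0^\infty e^{-(r-\tau)^2/(2\eta)}\, B_{Q_r}\,\D r ,
\]
with $B_{Q_r}$ as in \eqref{def:ZandB}. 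The proof then reduces to two ingredients: a growth bound $Z_{Q_r}\le e^{rC_L}Z_Q$, and a surface bound $B_{Q_r}=\frac{\D}{\D r}Z_{Q_r}$ (Steiner-type/coarea) with $B_{Q_r}\le C_L Z_{Q_r}$, hence $B_{Q_r}\le C_Le^{rC_L}Z_Q$.

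For the growth bound, the natural route is the inclusion $Q_r\subseteq\{(x,t): x\in K_r,\ f(P_K x)\le at + c\,r\}$, where $P_K=\proj_K$. We compare $e^{-at}$ over this set with its integral over $Q$ in two steps. Enlarging in the $t$-direction costs a factor $e^{a\cdot(\text{shift})}$. Extending $f$ Lipschitz-ly from $K$ costs $L\cdot r$, and together these give the $C_f=\sqrt{a^2+L^2}$ factor: for a point at distance $r$ from $Q$, the normal cone of $Q$ is spanned by $(g,-a)$ with $g\in\partial f$ together with normals of $K\times\R$, so $t$ can decrease by at most $r\,\|(L,a)\|/a$ relative to $f(x)/a$. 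The enlargement of $K$ to $K_r$ is handled by the homothety $K_r\subseteq (1+r)K$, valid since $\ball_d\subseteq K$. The change of variables $x\mapsto x/(1+r)$ costs $(1+r)^d\le e^{rd}$ in volume and perturbs $f$ by at most $LRr$; this is where $C_K=LR+d$ comes from. Combining the two gives $Z_{Q_r}\le e^{r(C_f+C_K)}Z_Q=e^{rC_L}Z_Q$. An alternative that also yields $B_{Q_r}\le C_LZ_{Q_r}$ directly is to show that $r\mapsto \log Z_{Q_r}$ has derivative at most $C_L$, using the same homothety argument applied to $Q_r$ itself. I would use $B_{Q_r}=\frac{\D}{\D r}Z_{Q_r}$ together with the monotone differentiation $\frac{\D}{\D r}\log Z_{Q_r}\le C_L$ obtained from the growth bound applied at every base radius. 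This works because $Q_r$ satisfies the same structure with $K$ replaced by $K_r$, so the estimate can be restarted from any $r$.

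With $B_{Q_r}\le C_Le^{rC_L}Z_Q$ in hand, the exterior integral is at most
\[
C_LZ_Q\int_0^\infty e^{-(r-\tau)^2/(2\eta)+rC_L}\D r .
\]
Substituting $u=r-\tau$ gives
\[
C_LZ_Qe^{\tau C_L}\int_{\R} e^{-u^2/(2\eta)+uC_L}\D u=C_LZ_Qe^{\tau C_L}\sqrt{2\pi\eta}\,e^{\eta C_L^2/2},
\]
which matches \eqref{ineq:lem5} once added to the interior piece.

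The main obstacle is the rigorous growth estimate on $Z_{Q_r}$, and in particular making the derivative statement $B_{Q_r}\le C_L Z_{Q_r}$ legitimate. This requires handling the unbounded $t$-direction: $Z_{Q_r}$ is finite because $e^{-at}$ decays and $t\ge (\min_K f - \text{const}\cdot r)/a$ on $Q_r$. It also requires the coarea formula on parallel sets of a convex set, which is standard since $\dist(\cdot,Q)$ is convex and $C^{1}$ off $Q$.
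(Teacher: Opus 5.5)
Your skeleton matches the paper: splitting into $Q$ and $Q^c$, the coarea formula in $r=\dist(w,Q)$, and the growth bound $Z_{Q_r}\le e^{rC_L}Z_Q$ via the Lipschitz envelope and the homothety $K_r\subseteq(1+r)K$. Your closing Gaussian computation is also correct.

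The gap is the pointwise inequality $B_{Q_r}\le C_L Z_{Q_r}$, on which your whole exterior estimate rests. You justify it by saying the growth estimate can be ``restarted'' at any base radius because $Q_r$ has the same structure as $Q$ with $K$ replaced by $K_r$. That is false. Already for $f\equiv 0$ you have $Q=K\times[0,\infty)$ and $Q_r=\{(x,t): x\in K_r,\ t\ge -\sqrt{r^2-\dist(x,K)^2}\}$. The lower boundary of this set over $K_r\setminus K$ has unbounded slope near $\partial K_r$, so $Q_r$ is not of the form $\{(x,t): x\in K_r,\ \tilde f(x)\le at\}$ with $\tilde f$ being $L$-Lipschitz. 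Moreover, $K_r$ is only contained in $(R+r)\ball_d(0)$, so $C_K=LR+d$ would not survive a restart either. More basically, the integrated bound $Z_{Q_r}\le e^{rC_L}Z_Q$ alone never yields a pointwise bound on $\frac{\D}{\D r}Z_{Q_r}$; some extra structure is needed.

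There are two ways to close this.

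\emph{The paper's route.} It never bounds $B_{Q_r}$ pointwise. Using $\frac{\D}{\D r}Z_{Q_r}=B_{Q_r}$ a.e.\ (Lemma~\ref{lem:coarea}) and writing $w_{\eta,\tau}(r)=e^{-(r-\tau)^2/(2\eta)}$, it integrates by parts: $\int_0^\infty B_{Q_r}w_{\eta,\tau}(r)\,\D r=-Z_Qe^{-\tau^2/(2\eta)}+\int_0^\infty Z_{Q_r}\frac{r-\tau}{\eta}w_{\eta,\tau}(r)\,\D r$. It then needs only the two-sided bound $Z_Q\le Z_{Q_r}\le e^{rC_L}Z_Q$. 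The lower bound is used on $[0,\tau]$, where the factor $\frac{r-\tau}{\eta}$ is nonpositive; the upper bound is used on $[\tau,\infty)$, followed by Lemma~\ref{lem:basicfacts}(a). This produces an extra term $Z_Q(e^{C_L\tau}-1)$, which the stated right-hand side absorbs.

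\emph{Repairing your route.} Your pointwise inequality is in fact true, but for a different reason. Since $Q$ is convex, $(1-\lambda)Q_{r_0}+\lambda Q_{r_1}=Q_{(1-\lambda)r_0+\lambda r_1}$, and $e^{-at}$ is log-concave, so Pr\'ekopa--Leindler makes $r\mapsto\log Z_{Q_r}$ concave. Hence at a.e.\ $r>0$, $B_{Q_r}/Z_{Q_r}=(\log Z_{Q_r})'\le \frac1r\log(Z_{Q_r}/Z_Q)\le C_L$. With this substitution your argument goes through. It even gives a slightly sharper bound, with no $e^{C_L\tau}-1$ term.
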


\subsubsubsection{Proof sketch of Lemma~\ref{silif:lem:keytechnical}}
The integrand depends on $w$ only through $\dist(w,Q)$, so we split the domain into $Q$ and its complement $Q^c$.  Concretely, $\int_{\R^{d+1}} e^{-a t}  
\exp \left(-\frac{(\dist(w,Q)-\tau)^2}{2\eta}\right)  \D w= \int_{Q} \ldots  \D w+\int_{Q^C} \ldots  \D w. $ For  $w \in Q$, $\dist(w,Q)=0$ so the exponential term becomes a constant $\exp \left(-\tau^2/(2\eta)\right)$, giving the contribution $Z_Q  \exp \left(-\tau^2/(2\eta)\right)$ in view of \eqref{def:ZandB}. For $w \in Q^c$, we group points by their distance $r=\dist(w,Q)$: the level set $\{w: \dist(w,Q)=r\}$ is just $\partial Q_r$. So via the co-area formula and \eqref{def:ZandB}, we have 
\[
    \int_{Q^c} e^{-at}\exp \left(-\frac{(\dist(w,Q)-\tau)^2}{2\eta}\right)  \D w
\stackrel{\eqref{def:ZandB}}=\int_0^\infty \exp \left(-\frac{(r-\tau)^2}{2\eta}\right)  B_{Q_r}  \D r. 
\]
The difficulty is how to bound the integral on the right-hand side, which is  handled by Lemma~\ref{silif:lem:envelope} below. Adding the $Q$ and $Q^c$ contributions yields the final estimate.  \QEDA

\begin{lemma}
\label{silif:lem:envelope}
Assume conditions (A1) and (A2) hold. For every $\tau\geq 0$, we have 
\[
\int_{0}^{\infty} \exp \left(-\frac{(r-\tau)^2}{2\eta}\right)  
B_{Q_r}  \D r
\ \le\
 Z_Q C_L   \sqrt{2\pi\eta}\;
\exp \left(\tau C_L + \frac{\eta C_L^2}{2}\right)+Z_Q\left(e^{C_L\tau}-1\right).
\]
\end{lemma}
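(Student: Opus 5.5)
The plan is to obtain a differential inequality for the weighted surface measure of the parallel sets. Concretely, I aim to show
\[
B_{Q_r}\le C_L\, Z_{Q_r}\quad\text{and}\quad Z_{Q_r}\le Z_Q e^{C_L r}\qquad\text{for all } r\ge 0,
\]
and then integrate against the Gaussian kernel.

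\textbf{Step 1: a Steiner/co-area identity.} By the co-area formula, $\frac{d}{dr}Z_{Q_r}=B_{Q_r}$ for a.e.\ $r$, since $\dist(\cdot,Q)$ is $1$-Lipschitz with gradient of norm one off $Q$.

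\textbf{Step 2: the key surface bound $B_{Q_r}\le C_L Z_{Q_r}$.} This is where the constants $C_f=\sqrt{a^2+L^2}$ and $C_K=LR+d$ must come from. I would use the divergence theorem on $Q_r$ with a vector field $V$ satisfying $\langle V,n\rangle\ge 1$ on $\partial Q_r$ and $\operatorname{div}(e^{-at}V)\le C_L e^{-at}$.

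The boundary of $Q$ has two parts: the "bottom" $\{f(x)=at\}$ and the "side" $\partial K\times\R$. On the bottom, the outward normal is proportional to $(g,-a)$ with $\|g\|\le L$. Taking $V_1=-e_{d+1}\cdot\sqrt{a^2+L^2}/a$ gives $\langle V_1,n\rangle\ge 1$ there, and $\operatorname{div}(e^{-at}V_1)=\sqrt{a^2+L^2}\,e^{-at}=C_f e^{-at}$.

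On the side, use the radial field. Since $\ball_d\subseteq K$, we have $\langle x,n_x\rangle\ge 1$. The field $V_2=(x,\ \cdot\,)$ has divergence $d$ in $x$. One must also correct the $t$-direction so that $V_2$ is compatible with the bottom, which I expect is where the term $LR$ enters. A suitable choice is $V_2=(x,(f(x)-f(0))/a)$-like, whose $t$-component is controlled by $|f(x)-f(0)|\le LR$, so that $e^{-at}$ differentiates to at most $LR$.

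Set $V=V_1+V_2$ and check $\langle V,n\rangle\ge 1$ on all of $\partial Q$. Then extend to $Q_r$: the normals of $\partial Q_r$ at points $w$ are normals of $Q$ at $\proj_Q(w)$, and the field is affine-ish, so the inequality persists up to the bounded components. This yields $B_{Q_r}\le C_L Z_{Q_r}$.

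\textbf{Step 3: integrate.} Gr\"onwall applied to Step 1 and Step 2 gives $Z_{Q_r}\le Z_Qe^{C_Lr}$, hence $B_{Q_r}\le C_LZ_Qe^{C_Lr}$. Then
\[
\int_0^\infty e^{-(r-\tau)^2/(2\eta)}B_{Q_r}\,dr\le C_LZ_Q\int_{\R} e^{-(r-\tau)^2/(2\eta)+C_Lr}\,dr
= C_LZ_Q\sqrt{2\pi\eta}\,e^{\tau C_L+\eta C_L^2/2}.
\]
The extra term $Z_Q(e^{C_L\tau}-1)$ in the statement suggests a sharper route. Split at $r=\tau$. On $[0,\tau]$, bound the kernel by $1$ and use $\int_0^\tau B_{Q_r}\,dr=Z_{Q_\tau}-Z_Q\le Z_Q(e^{C_L\tau}-1)$. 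On $[\tau,\infty)$, use the Gaussian computation above. Both pieces together give exactly the stated bound.

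\textbf{Main obstacle.} The main obstacle is Step 2: constructing the vector field (or a direct geometric argument) that gives the surface-to-volume ratio $C_L$ uniformly for all parallel sets $Q_r$ of an unbounded set with weight $e^{-at}$. If the vector field approach becomes messy, I would first try the alternative of writing $Q_r\subseteq$ a dilation/translation of $Q$. That is, show $Q_r\subseteq (1+r)\,$-scaled $K$ in $x$ combined with a downward shift of $t$ by $r\sqrt{a^2+L^2}/a$ plus $LRr/a$. Then $Z_{Q_r}\le (1+r)^d e^{r(C_f+LR)}Z_Q\le e^{C_Lr}Z_Q$, and the Steiner derivative at the level of $Z$ suffices through the $[0,\tau]$/Gaussian split after integrating by parts in $r$.
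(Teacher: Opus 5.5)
Your primary route has a gap in Step~2, at the passage from $Q$ to $Q_r$. At $r=0$ your field $V=\bigl(x,\,(f(x)-f(0)-C_f)/a\bigr)$ does work:
\begin{itemize}
\item on the bottom of $Q$, $\langle V,n\rangle\ge C_f/\sqrt{\|g\|^2+a^2}\ge 1$, by the convexity inequality $\langle g,x\rangle\ge f(x)-f(0)$ and $\|g\|\le L$;
\item on the side, $\langle x,\nu\rangle\ge 1$ because $\ball_d(0)\subseteq K$;
\item $\operatorname{div}(e^{-at}V)=e^{-at}\bigl(d+f(0)-f(x)+C_f\bigr)\le C_Le^{-at}$ on $Q$.
\end{itemize}
So $B_Q\le C_LZ_Q$, after truncating at $t=T$ and letting $T\to\infty$.

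However, the claim that $\langle V,n\rangle\ge 1$ persists on $\partial Q_r$ is false. Passing from $p=\proj_Q(w)$ to $w=p+rn$ changes the flux by $r\|n_x\|^2+n_t\bigl(\bar f(w_x)-\bar f(p_x)\bigr)/a$. When $n$ comes from a subgradient $g$ with $\|g\|<L$ at a kink of $f$, $\bar f$ can grow at rate $L$ along $n_x$ while $\langle x,g\rangle$ grows only at rate $\|g\|$. For instance, take $d=1$, $K=[-R,R]$, $f(x)=L|x|$, $p=(0,0)$ and $n=(L/2,-a)/N$ with $N=\sqrt{a^2+L^2/4}$. Then $\langle V(w),n\rangle=\bigl(C_f-rL^2/(4N)\bigr)/N$, which drops below $1$ for $r>4N(C_f-N)/L^2$ on a patch of $\partial Q_r$ of positive measure. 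The loss is linear in $r$ and cannot be absorbed, so your argument does not prove $B_{Q_r}\le C_LZ_{Q_r}$ for $r>0$.

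If you want to keep this route, the missing idea is log-concavity in $r$. Since $Q_{\lambda r_1+(1-\lambda)r_2}=\lambda Q_{r_1}+(1-\lambda)Q_{r_2}$ and $e^{-at}$ is log-concave, Pr\'ekopa--Leindler makes $r\mapsto\log Z_{Q_r}$ concave. Hence $B_{Q_r}/Z_{Q_r}$ is nonincreasing, and the $r=0$ case propagates. Alternatively, $B_{Q_r}/Z_{Q_r}\le(\log Z_{Q_r}-\log Z_Q)/r\le C_L$ follows directly from the envelope bound below. The repaired route even gives the bound without the term $Z_Q(e^{C_L\tau}-1)$.

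Your fallback is correct and is essentially the paper's proof. $Q_r$ is contained in the image of $Q$ under $(y,s)\mapsto\bigl((1+r)y,\,s-r(C_f+LR)/a\bigr)$. This uses $K_r\subseteq(1+r)K$, the Lipschitz thickening $at\ge\bar f(x)-rC_f$ of the epigraph with $\bar f=f\circ\proj_K$, and $\bar f(x)\ge f(x/(1+r))-LRr$ for $x\in(1+r)K$. The image has weighted volume $(1+r)^de^{r(C_f+LR)}Z_Q\le e^{rC_L}Z_Q$. The paper gets the same estimate from the envelope $\mathcal E_r$ plus the change of variables $x=(1+r)y$.

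Then integrate by parts using $\frac{\D}{\D r}Z_{Q_r}=B_{Q_r}$. The paper does this on all of $[0,\infty)$, using $Z_{Q_r}\ge Z_Q$ on $[0,\tau]$, where $r-\tau\le 0$. Your variant (kernel $\le 1$ on $[0,\tau]$, integration by parts only on $[\tau,\infty)$) also gives exactly the stated bound, provided you keep the boundary term $-Z_{Q_\tau}$ exact. It then cancels against $\int_0^\tau B_{Q_r}\,\D r=Z_{Q_\tau}-Z_Q$; bounding the two pieces separately would double the term $Z_Q(e^{C_L\tau}-1)$. That extra term is the price of controlling $Z_{Q_r}$ rather than $B_{Q_r}$ pointwise, so your split is not "sharper" than your first display. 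It is what you need once you give up the pointwise surface bound.
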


\textbf{Proof sketch of Lemma~\ref{silif:lem:envelope}} (full proof in Appendix~\ref{silif:appen:missingproofs}):
Bounding $\int_0^\infty e^{-(r-\tau)^2/(2\eta)} B_{Q_r} \D r$ directly is difficult since
$B_{Q_r}=\int_{\partial Q_r} e^{-at} \D S$ as a boundary integral is sensitive to the geometry of the boundary $\partial Q_r$.
In contrast, $Z_{Q_r}=\int_{Q_r} e^{-at} \D x \D t$ is a volume integral that can be bounded by enclosing $Q_r$ in an
envelope set. Hence, we will eliminate $B_{Q_r}$ in favor of $Z_{Q_r}$ via integration by parts (IBP) and
$\frac{d}{\D r}Z_{Q_r}=B_{Q_r}$ a.e.

\underline{\emph{Step 1 (envelope set).}}
We bound $Z_{Q_r}$ by replacing $Q_r$ with a slightly larger set whose
integral is easy. Recall $Q=\{(x,t):x\in K,\ f(x)\le a t\}$ and
$Q_r=Q+r\ball_{d+1}(0)$.  Define the envelope $\mathcal E_r=
\left\{(x,t): x\in K_r,\ f(x)\le a t + r C_f\right\}$. The point is that moving a distance at most $r$ from a feasible point $(x,t)\in Q$
can only (i) move $x$ into the enlarged set $K_r=K+r\ball_{d}(0)$, and (ii) violate the inequality
$f(x)\le at$ by at most a constant times $r$ (per the Lipschitz property of $f$). This gives the inclusion $Q_r\subseteq \mathcal E_r$ and $Z_{Q_r}\le Z_{\mathcal E_r}$. Then note that $\mathcal E_r$ is designed so $Z_{\mathcal E_r}$ is easy to compute exactly, and we arrive at 
$Z_{Q_r}\le Z_{\mathcal E_r}\le e^{rC_L}Z_Q$, which relates $Z_{Q_r}$ back to $Z_Q$, a quantity of our interest.

\underline{\emph{Step 2 (IBP).}}
Lemma~\ref{lem:coarea} reveals a key observation $\frac{d}{\D r}Z_{Q_r}=B_{Q_r}$ a.e. Thus, using IBP, we can rewrite $\int_0^\infty e^{-(r-\tau)^2/(2\eta)} B_{Q_r} \D r$ into an expression in terms of $Z_{Q_r}$ instead of $B_{Q_r}$.
This is advantageous because Step~1 controls $Z_{Q_r}$ by
$e^{rC_L}Z_Q$, while the Gaussian weight $w_{\eta,\tau}(r)=e^{-(r-\tau)^2/(2\eta)}$ decays fast enough that
the boundary term at $r=\infty$ vanishes. The problem then reduces to bounding a
one-dimensional integral of the form
$\int_0^\infty e^{rC_L}w_{\eta,\tau}(r) \D r$. This last part is simple and yields the desired final estimate.  \QEDA

\section{Composite Sampling}
\label{sec:composite:mainpaper}

This section follows the general structure of Section~\ref{sec:constrained:mainpaper}, so we provide only a sketch of the proximal sampler development targeting $\tilde{\pi}$ in~\eqref{dolif:def:piandQ} and of the associated RGO implementation, deferring full details to Appendix~\ref{dolif:appen:fulldetails}. We will also state our main results on iteration complexity and oracle complexity. Our goal is to highlight the key differences in our treatments of composite sampling and constrained sampling.

With respect to the quantities at~\eqref{dolif:def:nu} and~\eqref{dolif:def:piandQ}, we will use the following conditions:
\begin{itemize}
\item[(B1)] $f$ and $h$ are closed and convex functions on $\R^d$;

\item[(B2)] $f$ and $h$ are Lipschitz continuous with constants $L_f$ and $L_h$, respectively;

    \item[(B3)] there exists an initial distribution $\tilde{\nu}_0$ that is $M$-warm with respect to $\tilde{\nu}$ i.e., $\frac{d\tilde{\nu}_0}{d \tilde{\nu}}\leq M$;
    
\item[(B4)] there exists a separation oracle for $\Q$.
\end{itemize}

We will continue to use the notations $Z_S$ and $B_S$ in~\eqref{def:ZandB}. Also, denote $\delta = \sqrt{\frac{2\eta}{d+2}}$ and 
\begin{equation}
\label{dolif:def:notation}
    p=(x,s,t), \, q=(y,u,v-b\eta);\quad \widetilde{C}_f=\sqrt{b^2+a^2+L_f^2}, \, \widetilde{C}_h=\sqrt{a^2+L_h^2}, \, \widetilde{C}_L=\widetilde{C}_f+\widetilde{C}_h. 
\end{equation}
\textbf{Double lifting and oracle construction:} As discussed in the introduction, for the composite target $\tilde{\nu}(x)\propto \exp\left(-f(x)-h(x)\right)$ in $\R^d$, we apply a \emph{double lifting} that keeps $f$ and $h$ separate, lifting one function at a time. In particular, we first lift $\tilde{\nu}$ to $\gamma$ supported on $\K\subseteq \R^{d+1}$ as in~\eqref{dolif:def:gammaandK}, which we observe is a constrained log-concave measure in $\R^{d+1}$ and thus links the current composite problem to the constrained problem of Section~\ref{sec:constrained:mainpaper}. Next, we lift $\gamma$ to the nearly uniform  measure $\tilde{\pi}$ supported on $\Q\subseteq \R^{d+2}$ as in~\eqref{dolif:def:piandQ}. Then, sampling $(x,s,t)\sim\tilde{\pi}$ and discarding $(s,t)$ yields samples from $\tilde{\nu}$. Thus, this \emph{double lifting} reduces sampling from the composite target $\tilde{\nu}$ to sampling a nearly uniform target $\tilde{\pi}$, and results of Section \ref{sec:constrained:mainpaper} would apply in principle.

Similar to the discussion in Subsection \ref{subsec:RGO}, a separation oracle for $\Q$ is needed for applying the CP method to solve an optimization problem on $\Q$. Assuming different oracle access to $f$ and $h$, we provide the implementations of the separation oracle for $\Q$. First, if both subgradient oracles for $f$ and $h$ are available, then we need not apply the lifting technique and can simply treat $f+h$ as one potential. Hence, any log-concave sampling algorithm using the subgradient of $f+h$ (e.g., Langevin Monte Carlo) applies to this composite sampling problem. Next, we discuss two concrete implementations of separation oracles for $\Q$.
For simplicity, we denote $\prox_f$ and $f'$ the proximal and subgradient oracles for a function $f$, respectively. Moreover, we assume the evaluation oracles for $f$ and $h$ are available.

\textit{Case 1: $\prox_f$ and $\prox_h$ are available.} Indeed, $\prox_h$ gives a projection oracle for $\epi(h)$ by Lemma~\ref{lem:proj-epi}, and hence a separation oracle for $\epi(h)$ by Lemma~\ref{lem:proj-sep}. By a simple scaling, this yields a separation oracle for $\K$ in \eqref{dolif:def:gammaandK}. In addition, Lemma \ref{silif:lem:proxfas} shows that $\prox_f$ yields a proximal oracle for $\tilde f(x,s)$ in \eqref{dolif:def:gammaandK}.
     Now, applying Lemma \ref{silif:lem:sep2} to $\K$ and $\tilde f$, we have a separation oracle for $\Q$ in \eqref{dolif:def:piandQ}.

\textit{Case 2: $f'$ and $\prox_h$ are available.} The same argument as in the above case gives a separation oracle for $\K$. 
    In addition, it is straightforward to see that $f'$ yields a subgradient oracle for $\tilde f(x,s)$ in \eqref{dolif:def:gammaandK}.
    Now, applying Lemma \ref{silif:lem:sep} to $\K$ and $\tilde f$, we have a separation oracle for $\Q$ in \eqref{dolif:def:piandQ}.

\textbf{The proximal sampler:} (Algorithm~\ref{dolif:alg:ASF_pi} in Appendix~\ref{dolif:appen:fulldetails}) Similar to the augmented distribution $\Pi$ in \eqref{silif:def:bigpi}, we also have in the composite setting the augmented distribution
\begin{equation}
\label{dolif:def:Pi}
    \widetilde{\Pi}\brac{(x,s,t),(y,u,v)}\propto \exp\brac{ -\widetilde{\Theta}^{\eta,\Q}_{y,u,v}(x,s,t)}
\end{equation}
where
\begin{equation}
\label{dolif:def:Theta}
    \widetilde{\Theta}^{\eta,\Q}_{y,u,v}(x,s,t)=I_{\Q}(x,s,t)+\frac{1}{2\eta}\|(x,s,t)-(y,u,v-\eta b)\|^2
+bv-\frac{\eta b^2}{2}. 
\end{equation}
The proximal sampler for $\widetilde{\Pi}$ alternates between a Gaussian step $\widetilde{\Pi}^{Y,U,V|X,S,T}(\cdot|\,x,s,t)=\mathcal{N}(p,\eta I_{d+2})$ and an RGO step $ \widetilde{\Pi}^{X,S,T|Y,U,V}(\cdot|y,u,v)=\mathcal{N}(q,\eta I_{d+2})\big|_{\Q}$. Assuming an exact RGO, Proposition~\ref{prop:kook} gives the iteration complexity of Algorithm~\ref{dolif:alg:ASF_pi}. The proof parallels Theorem~\ref{silif:theo:outer} and hence is omitted. 

\begin{theorem}\label{dolif:theo:outer}
Assume (B1) and (B3) hold, and $a=b=d$ in~\eqref{dolif:def:piandQ}. Let $\epsilon>0$ and denote by $C_{\mathrm{PI}}(\tilde{\pi})$ the Poincar\'e constant of $\tilde{\pi}$ at~\eqref{dolif:def:piandQ}. By Lemma~\ref{dolif:lem:PIconstant}, with $\tilde{\nu}$ at~\eqref{dolif:def:nu}, we know that
$C_{\mathrm{PI}}(\tilde{\pi})=\mathcal{O} \left(\log(d+2)\left(\|\mathrm{Cov}(\tilde{\nu})\|_{\mathrm{op}}+1\right)\right)$.
Then the iteration complexity in $\chi^2$ divergence to reach $\epsilon$-accuracy is
$\mathcal{O} \left(\frac{C_{\mathrm{PI}}(\tilde{\pi})}{2\eta}\log \frac{M^2}{\epsilon}\right)$.
Moreover, for $\mathcal{R}_q$ with $q\ge2$, if $M\le e^{1-1/q}$, then the iteration complexity in $\mathcal{R}_q$ to reach $\epsilon$-accuracy is
$O \left(\frac{C_{\mathrm{PI}}(\tilde{\pi}) q}{\eta}\log \left(2\frac{\log M}{\epsilon}\right)\right)$.
\end{theorem}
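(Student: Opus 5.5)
The argument mirrors that of Theorem~\ref{silif:theo:outer}. It combines the one-step contraction of the proximal sampler (Proposition~\ref{prop:kook}, restated from \cite[Lemma~2.9]{kook2025algodiffusion}) with a Poincar\'e bound for the lifted target $\tilde{\pi}$ (Lemma~\ref{dolif:lem:PIconstant}).

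\textbf{Step 1: warm start for the lifted chain.} Starting from $\tilde{\nu}_0$, which is $M$-warm for $\tilde{\nu}$, build the initial law on $\R^{d+2}$ as follows.
\begin{itemize}
\item Draw $x_0\sim\tilde{\nu}_0$.
\item Draw $s_0$ from the density proportional to $e^{-as}\mathbf{1}_{\{s\ge h(x_0)/a\}}$.
\item Draw $t_0$ from the density proportional to $e^{-bt}\mathbf{1}_{\{t\ge \tilde f(x_0,s_0)/b\}}$.
\end{itemize}
The resulting law is $\tilde{\pi}_0=\tilde{\nu}_0\,\tilde{\pi}^{S,T\mid X}$. This factorization holds because the disintegration computed in the introduction shows that $\tilde{\pi}^{S,T\mid X}$ is exactly this product of truncated exponentials, and that $\tilde{\pi}^X=\tilde{\nu}$. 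Hence
\[
d\tilde{\pi}_0/d\tilde{\pi}=d\tilde{\nu}_0/d\tilde{\nu}\le M .
\]
Warmness is preserved along the iterations, as in Lemma~\ref{silif:lem:warmstart}: both the Gaussian step and the exact RGO step are Markov kernels that leave the appropriate marginals of $\widetilde{\Pi}$ invariant, and a data-processing argument applied to density ratios keeps them bounded by $M$.

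\textbf{Step 2: contraction and iteration count.} Proposition~\ref{prop:kook} gives per-iteration contraction of the proximal sampler for a target satisfying PI with constant $C_{\mathrm{PI}}$:
\[
\chi^2(\tilde{\pi}_{k+1}\|\tilde{\pi})\le (1+\eta/C_{\mathrm{PI}})^{-2}\chi^2(\tilde{\pi}_k\|\tilde{\pi}).
\]
An analogous, slightly weaker contraction holds for $\mathcal{R}_q$. Iterate this from $\chi^2(\tilde{\pi}_0\|\tilde{\pi})\le M^2$, which follows from warmness. Using $\log(1+\eta/C)\gtrsim \eta/C$ when $\eta\lesssim C$, the $\chi^2$ error falls below $\epsilon$ after ${\cal O}\bigl(\tfrac{C_{\mathrm{PI}}(\tilde{\pi})}{\eta}\log\tfrac{M^2}{\epsilon}\bigr)$ iterations. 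For $\mathcal{R}_q$ with $q\ge2$, the Rényi version of the proposition gives the $q$-factor. The condition $M\le e^{1-1/q}$ places the initial $\mathcal{R}_q\le\log M$ inside the regime where the contraction is linear, which yields the bound $\tfrac{C_{\mathrm{PI}}q}{\eta}\log\bigl(2\tfrac{\log M}{\epsilon}\bigr)$. The final marginal bound for $X$ then follows by data processing, since discarding $(s,t)$ can only decrease the divergence.

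\textbf{Step 3 (main obstacle): the Poincar\'e constant.} The statement assumes Lemma~\ref{dolif:lem:PIconstant}, so in the proof I would simply invoke it with $a=b=d$. If it had to be justified, the route would be as follows. The target $\tilde{\pi}$ is log-concave on $\R^{d+2}$, so its PI constant is bounded, up to a KLS-type $\log$ factor in the dimension $d+2$, by $\|\mathrm{Cov}(\tilde{\pi})\|_{\mathrm{op}}$. The delicate part is then bounding $\|\mathrm{Cov}(\tilde{\pi})\|_{\mathrm{op}}$ by ${\cal O}(\|\mathrm{Cov}(\tilde{\nu})\|_{\mathrm{op}}+1)$. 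Decompose the covariance using the conditional structure.
\begin{itemize}
\item Conditionally on $x$, the variable $s$ equals $h(x)/a$ plus an $\mathrm{Exp}(a)$ variable, and $t$ equals $\tilde f/b$ plus an $\mathrm{Exp}(b)$ variable. Each has conditional variance ${\cal O}(1/d^2)$.
\item The variances of $h(X)/d$ and $f(X)/d$ under the log-concave $\tilde{\nu}$ need to be ${\cal O}(1)$. This uses that $d\cdot(s,t)$ behaves like the potential, whose variance under a log-concave law on $\R^{d+2}$ is ${\cal O}(d)$, and then the scaling by $1/d$ gives ${\cal O}(1/d)$.
\item The cross terms are then controlled by Cauchy--Schwarz.
\end{itemize}
Only (B1) and (B3) are needed here. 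Lipschitzness is not required for this outer bound, which is consistent with the hypotheses of the theorem.
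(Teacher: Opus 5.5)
Your Steps 1 and 2 match the paper's argument, which is omitted there as a parallel of the proof of Theorem~\ref{silif:theo:outer}: the warm start built by sampling $\tilde\pi^{S,T\mid X}$, iteration of Proposition~\ref{prop:kook}, and the check $\mathcal{R}_q(\tilde\pi_0\|\tilde\pi)\le 1$. Your bound $\mathcal{R}_q\le\log M$ is in fact sharper than the paper's $\frac{q}{q-1}\log M$.

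The gap is in Step~3, and your second bullet is false. Under $\tilde\nu\propto e^{-f-h}$, only the \emph{total} potential $f+h$ has variance at most $d$; $h$ alone can have arbitrarily large variance. Under $\tilde\pi$, the coordinate $s=h(x)/d+\mathrm{Exp}(d)$ tracks $h$, not the full potential. Your heuristic is valid only for $t$, since $dt$ equals $(f+h)(x)$ plus Gamma noise. For a concrete failure, take $h(x)=Nx_1$ and $f(x)=\tfrac12\|x\|^2-Nx_1$. Both are closed and convex, $\tilde\nu=\mathcal{N}(0,I_d)$, and (B3) holds with $M=1$. Yet $\mathrm{Var}_{\tilde\pi}(s)=(N^2+1)/d^2$, so $C_{\mathrm{PI}}(\tilde\pi)\ge\|\mathrm{Cov}(\tilde\pi)\|_{\mathrm{op}}\ge N^2/d^2\to\infty$, while $\|\mathrm{Cov}(\tilde\nu)\|_{\mathrm{op}}=1$. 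So no argument can give the claimed PI bound from (B1) and (B3) alone. Your closing remark that Lipschitzness is not needed is exactly where the argument breaks.

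This is not only a defect of your sketch. The lemma you (like the paper) invoke, Lemma~\ref{dolif:lem:PIconstant}, fails at the same place. Its first use of \cite[Lemma~2.5]{kook2025algodiffusion}, passing from $\tilde\pi$ to $\gamma$, is fine, because $\tilde f$ is the full potential of $\gamma$. Its second use, $\|\mathrm{Cov}(\gamma)\|_{\mathrm{op}}\le 2(\|\mathrm{Cov}(\tilde\nu)\|_{\mathrm{op}}+160)$, treats $\gamma$ as the epigraph lifting of $\tilde\nu$ by its full potential. In fact $\gamma$ lifts only $h$ and keeps $e^{-f}$ as a weight.

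A repair needs control of $\mathrm{Var}_{\tilde\nu}(h)$, for instance from (B2): $\mathrm{Var}_{\tilde\nu}(h)\le C_{\mathrm{PI}}(\tilde\nu)L_h^2$. This gives $\|\mathrm{Cov}(\tilde\pi)\|_{\mathrm{op}}\lesssim\|\mathrm{Cov}(\tilde\nu)\|_{\mathrm{op}}+C_{\mathrm{PI}}(\tilde\nu)L_h^2/d^2+1/d$. When $L_h={\cal O}(d)$, it recovers the stated order up to one more logarithmic factor. The iteration count expressed through $C_{\mathrm{PI}}(\tilde\pi)$ is unaffected; only the explicit bound on $C_{\mathrm{PI}}(\tilde\pi)$ needs the extra hypothesis.
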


\textbf{RGO implementation:} (Algorithm~\ref{dolif:RGO} in Appendix~\ref{dolif:appen:fulldetails}) Similar to the idea in Section \ref{sec:constrained:mainpaper}, to implement the RGO, we employ a rejection sampler for the truncated Gaussian $\mathcal{N}\brac{q,\eta I_{d+2}}|_{\Q}$. To find the center of the proposal in our rejection sampler, we apply the CP method by~\cite{jiang2020cuttingplane} with the separation oracle on $\Q$ (condition~(B4)) to approximately solve for 
\begin{equation}
\label{dolif:prob:minimizetheta}
    \argmin\left\{\widetilde{\Theta}^{\eta,\Q}_{y,u,v}(x,s,t): (x,s,t)\in\R^{d+2}\right\} =\argmin \left\{\|p-q\|^2: p\in\Q\right\} =\proj_{\Q}(q),
\end{equation} 
for $p, q$ in \eqref{dolif:def:notation}. Since the CP method begins with a bounded set initially, we will restrict the search to a local bounded region inside a ball that still contains the unique minimizer $\proj_{\Q}(q)$ in \eqref{dolif:prob:minimizetheta}. In particular, the reformulation in Lemma~\ref{silif:lem:equivopt} for a bounded convex body $K$ in Section~\ref{sec:constrained:mainpaper} does not extend to the composite setting since $f$ and $h$ are defined on the unbounded domain $\R^d$.

Let $(y,u,v)\in\R^{d+2}$ be the current input to Algorithm~\ref{dolif:RGO} (i.e., RGO implementation) and set $q=(y,u,v-b\eta)$. Let $p=(x,s,t)\in\Q$ a.s. be the previous RGO output and define $r_{\local}:=\|p-q\|$. Define the local ball and local region by
\begin{equation}
\label{dolif:def:localball}
    \ball_{\local}:=2r_{\local}\ball_{d+2}(q), \qquad  \Q_{\local}:=\Q\cap \ball_{\local}.
\end{equation}
To ensure that restricting the CP method to $\Q_{\local}$  is valid, we verify in Lemma~\ref{dolif:lem:minimizerinlocalball} in Appendix~\ref{dolif:appen:supportlem} that the minimizer $\proj_{\Q}(q)$
belongs to $\Q_{\local}$ a.s. Moreover,  the CP method requires a subgradient oracle for the objective $\|p-q\|^2$ in \eqref{dolif:prob:minimizetheta}, which is available explicitly; and a separation oracle for $\Q_{\local}$. The latter is obtained by combining the separation oracle for $\Q$ (which can be constructed from oracles for $f$ and $h$ per the earlier explanation) and a simple separation oracle for the local ball $2r_{\local}\ball_{d+2}(q)$.


Having verified the requirements, the CP method of \cite{jiang2020cuttingplane} applies and returns a $(d+2)^{-1}$-solution $\tilde p$ to $\min_{p\in\Q_{\local}}\|p-q\|^2$, with the complexity stated in Theorem~\ref{dolif:theo:cuttingplane}. We then center the RGO rejection proposal at $\tilde p$, i.e., the proposal density is proportional to $\exp\!\left(-\widetilde{\mathcal{P}}_1(p)\right)$, where
\begin{equation}
\label{dolif:def:tildeP1}
\widetilde{\mathcal{P}}_1(p)
= \frac{\|p-\tilde p\|^2 + \|\tilde p-q\|^2}{2\eta}  -\frac{\delta}{\eta} (\|p-\tilde p\|+\|\tilde p-q\|) + bv-\frac{b^2\eta}{2}-\frac{\delta^2}{\eta}.
\end{equation}
In summary, the RGO implementation (i.e., Algorithm~\ref{dolif:RGO}) consists of two steps: first, solving for $\tilde{p}$ via the CP method to define the potential $\widetilde{\mathcal{P}}_1$; and second building a rejection sampler with the proposal $\exp\brac{-\widetilde{\mathcal{P}}_1(p)}$, which can be implemented by Algorithm \ref{dolif:alg:samplingP1} in Appendix \ref{dolif:appen:specialalgo}.

The following is our main result for the RGO implementation; see its proof in Appendix~\ref{dolif:appen:proofsepatheorem}. Since this result will be paired with the proximal sampler iteration complexity in Theorem~\ref{dolif:theo:outer} (see the Contributions paragraph in the introduction), we specialize to the scaling $a=b=d$.

\begin{theorem}
\label{dolif:theo:separation}
Assume (B1)-(B4) holds, $a=b=d$, and $\eta =1/d^2$. Then regarding Algorithm~\ref{dolif:RGO}, 
\begin{itemize}
\item[(a)] \textbf{(oracle complexity)} there are at most ${\cal O} \left((d+2) \log \frac{(d+2) \gamma}{\alpha}\right)$ calls to the separation oracle of
$\Q$. Here $\gamma=\frac{2r_{\local}}{\mbox{minwidth}(\Q_{\local})}$
\begin{align*}
&\gamma\ \le\ \frac{2(3+\mu)}{\mu}
\quad\text{for}\quad
\mu=\min\left\{\frac{d}{\sqrt{d^2+L_h^2}},\ \frac{d}{\sqrt{2d^2+L_f^2}}\right\};\\
&\Pr \left(
\alpha\ \le\
\min\left\{
\frac{1}{2(d+2) (\sqrt{d+2}+d/2+1)^2},\ \frac{1}{2}
\right\}
\right)
\le 2\exp \left(-\frac{d^2}{8}\right).
\end{align*}
Thus, ${\cal O} \left((d+2) \log \frac{(d+2) \gamma}{\alpha}\right)={\cal O} \left(d\log d \right)$ in high probability. 
\item[(b)] \textbf{(proposal complexity)} the average number of proposals is at most 
\begin{equation}\label{dolif:middlestepsepa}
    M \exp\!\left(\frac74+\frac{16}{d}+ \frac{3\widetilde{C}_L}{d^{3/2}}\right)
\left[
\frac{\sqrt{2\pi} \widetilde C_L}{d} \exp\!\left(\frac{ \widetilde C_L^2}{2d^2}\right)
+1 \right].
\end{equation}
Moreover, if we assume $L_f={\cal O}(d)$, $L_h={\cal O}(d)$, and $M={\cal O}(1)$, then \eqref{dolif:middlestepsepa} becomes $\mathcal{O}(1)$. 
\end{itemize}
\end{theorem}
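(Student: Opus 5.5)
The plan is to mirror the proof of Theorem~\ref{silif:theo:separation}, with two changes. Compactness of $K$ is replaced by the local ball $\ball_{\local}$ in \eqref{dolif:def:localball}. The constants $C_f,C_K$ are replaced by $\widetilde C_f,\widetilde C_h$ from \eqref{dolif:def:notation}.

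\textbf{Part (a).} I would first invoke Lemma~\ref{dolif:lem:minimizerinlocalball}, so that $\proj_{\Q}(q)\in\Q_{\local}$ almost surely. Since $p\in\Q$ is feasible, $\|\proj_{\Q}(q)-q\|\le r_{\local}$, which is where the factor $2$ in the radius gives slack. A separation oracle for $\Q_{\local}$ is obtained by querying the ball first and then $\Q$ (condition (B4)). Theorem~\ref{theo:jiang} on the box $\ball_{\local}$ then gives ${\cal O}((d+2)\log\frac{(d+2)\gamma}{\alpha})$ calls. Choosing $\alpha$ so the output is a $(d+2)^{-1}$-solution, strong convexity of $\|p-q\|^2$ gives $\|\tilde p-\proj_{\Q}(q)\|\le\delta$.

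To bound $\gamma$, I would show that $\Q_{\local}$ contains a ball of radius comparable to $\mu r_{\local}$. The idea is to move from $p$ by $r_{\local}$ in a direction increasing $s$ and $t$. Each constraint $h(x)\le as$ and $f(x)+as\le bt$ has a normalized slack growth rate at least $d/\sqrt{d^2+L_h^2}$ or $d/\sqrt{2d^2+L_f^2}$, by Lipschitzness. This yields an inscribed ball of radius about $\mu r_{\local}/(3+\mu)$, and dividing gives $\gamma\le 2(3+\mu)/\mu$.

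For $\alpha$, I would proceed as in Theorem~\ref{silif:theo:cuttingplane}. Since $q-p$ is a Gaussian $\mathcal N(0,\eta I_{d+2})$ shifted by $-b\eta e_{d+2}$, the quantity $r_{\local}$ concentrates around $\sqrt{\eta(d+2)}$ plus $b\eta=1/d$. A chi-square tail bound (Laurent--Massart type) gives $r_{\local}\le(\sqrt{d+2}+d/2+1)/d$ with probability at least $1-2e^{-d^2/8}$. The explicit choice of $\alpha$, defined as a ratio of the target accuracy to the objective range $\max_{\ball_{\local}}\|p-q\|^2 - \min = {\cal O}(r_{\local}^2)$, then produces the stated bound.

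\textbf{Part (b).} I would first establish $\widetilde{\mathcal P}_1\le\widetilde\Theta$ by the triangle inequality and $\|\tilde p-\proj_{\Q}(q)\|\le\delta$, as in Lemma~\ref{silif:lem:compareThetaP1}. This, with Lemma~\ref{lem:rejection}, gives a mean number of proposals $N_{\widetilde{\mathcal P}_1}/N_{\widetilde\Theta}$. Warmness then gives $\E_{\mu_k}\le M\,\E_{\widetilde\Pi^{Y,U,V}}$. Next I would bound $N_{\widetilde{\mathcal P}_1}\le N_{\widetilde{\mathcal P}_2}$, reducing the expectation to $\int e^{-bq_{d+2}}\exp(-(\dist(q,\Q)-\tau)^2/(2\eta))\,\D q$ with $\tau=2\delta$.

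The key step is the analogue of Lemmas~\ref{silif:lem:keytechnical} and \ref{silif:lem:envelope} for $\Q$. Here I need the envelope $\Q_r\subseteq\{(x,s,t): h(x)\le as+r\widetilde C_h,\ f(x)+as\le bt+r\widetilde C_f\}$, which gives $Z_{\Q_r}\le e^{r\widetilde C_L}Z_{\Q}$. Integrating out $t$ and then $s$ exactly shows that each slack costs an exponential factor. Integration by parts with the coarea formula then gives the bound $Z_{\Q}e^{\tau\widetilde C_L}[\widetilde C_L\sqrt{2\pi\eta}e^{\eta\widetilde C_L^2/2}+1]$.

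Plugging in $\eta=1/d^2$ and $\delta^2/\eta=2/(d+2)$ produces \eqref{dolif:middlestepsepa}; $\widetilde C_L={\cal O}(d)$ under $L_f,L_h={\cal O}(d)$ makes it ${\cal O}(1)$.

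I expect the main obstacle to be the $\mathrm{minwidth}(\Q_{\local})$ bound. It requires a careful, uniform-in-$p$ construction of an inscribed ball despite the unbounded, non-compact geometry of $\Q$.
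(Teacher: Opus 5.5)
Your proposal is correct and follows essentially the paper's proof: Lemma~\ref{dolif:lem:minimizerinlocalball} plus Theorem~\ref{theo:jiang} on $\Q_{\local}$ with $\alpha=\min\{\delta^2/(4r_{\local}^2),1/2\}$; the Lipschitz-slack inscribed ball of radius $\mu r_{\local}/(3+\mu)$ (Lemma~\ref{dolif:lem:localset} with $c=2$); Gaussian concentration of $r_{\local}\le\sqrt{\eta}\|Z\|+b\eta$; and, for part (b), the chain $\widetilde{\mathcal P}_1\le\widetilde\Theta^{\eta,\Q}_{y,u,v}$, $N_{\widetilde{\mathcal P}_1}\le N_{\widetilde{\mathcal P}_2}$, the envelope bound $Z_{\Q_r}\le e^{r\widetilde C_L}Z_{\Q}$, and integration by parts with $\tau=2\delta$. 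The differences are cosmetic: you shift from the previous iterate $p$ rather than from $p_*=\proj_{\Q}(q)$, and you use a Laurent--Massart tail instead of Ledoux's Gaussian concentration. Both $p$ and $p_*$ lie in $\Q\cap r_{\local}\ball_{d+2}(q)$, so the inscribed-ball argument is identical; note that it is this step, not the containment of $p_*$, that actually consumes the factor $2$ in the radius.
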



The proposal complexity proof follows the argument of Theorem~\ref{silif:theo:separation}, but the oracle-complexity proof is subtler. In particular, to bound the number of separation-oracle calls we need a lower bound on $\mbox{\em minwidth}(\Q_{\local})$. We do this by inscribing a ball $\ball_{\mathrm{in}}\subseteq \Q_{\local}$ of radius $\rin$, giving $\mbox{\em minwidth}(\Q_{\local})\ge \mbox{\em minwidth}(\ball_{\mathrm{in}})=2\rin$. Constructing $\ball_{\mathrm{in}}$ in Lemma~\ref{dolif:lem:localset} is delicate and requires a slight enlargement of the local set, i.e., working with $\Q\cap c r_{\local}\ball_{d+2}(q)$ for some $c>1$ (the argument fails at $c=1$); this is why we take $\Q_{\local}=\Q\cap 2r_{\local}\ball_{d+2}(q)$.


\section{Conclusions}

This paper develops sampling algorithms for log-concave constrained and composite sampling via epigraph lifting and the proximal sampler.
Our algorithms are based on a sequence of liftings that connect these general settings to nearly uniform sampling in lifted spaces. Central to our approach is an efficient and implementable RGO used within proximal sampler.
Our RGO implementation is based on rejection sampling and the CP method. It requires only separation oracles for the lifted constraint sets $\K$ and $\Q$, and we provide concrete constructions of these oracles under various oracle models for accessing $f$, $h$, and $K$.
For both problems, we establish $\mathcal{O}\brac{d^2\log d \log \epsilon^{-1}}$ mixing time guarantees measured in Rényi and $\chi^2$-squared divergences.

A natural future direction is to relax the global Lipschitz continuity assumption in both the constrained and composite settings. In our analysis, global Lipschitz continuity provides uniform control that is crucial for the envelope arguments in Lemmas~\ref{silif:lem:envelope} and~\ref{dolif:lem:envelope}. It would be interesting to replace this requirement with weaker conditions, such as local Lipschitz continuity. For instance, \cite{kook2025algodiffusion} adopts a ground-set assumption in place of global Lipschitz continuity.

\bibliographystyle{plain}
\bibliography{refs}


\newpage
\appendix

\setcounter{tocdepth}{2}
\tableofcontents

\section{Additional Related Works}
\label{appen:literaturereview}

In the context of constrained sampling, uniform sampling on a convex set is a fundamental case as it is closely tied to efficient volume computation. Notable works on uniform sampling over convex sets include~\cite{kook2024inandout,diaconis2012gibbs,narayanan2022mixing,laddha2023convergence,lovasz1999hit,cheneldanhitandrun,chen2018fast,kannan2009random}, among others.

Next, we focus on works about sampling general constrained targets supported on a set $K$ and, when appropriate, classify them by the operations they employ on $K$. Works such as~\cite{brosse2017sampling,lamperski2021projected,durmus2018efficient,Bubeck2018} study algorithms similar to Langevin Monte Carlo (LMC) that rely on the Euclidean projection onto $K$. Meanwhile, \cite{du2025non} (see also~\cite{wang2025accelerating}) investigates LMC-type algorithms based on a skew-projection operator built from the standard Euclidean projection; and moreover, the algorithm in~\cite{du2025non} contains that of~\cite{Bubeck2018} as a special case. \cite{ahn2021efficient,srinivasan2024fast,hsieh2018mirrored,zhang2020wasserstein,li2022mirror,jiang2021mirror} incorporate a mirror map that is self-concordant on $K$ into the LMC framework. \cite{kook2024gaussian,cousins2018gaussian,kook2022sampling} combine barrier methods with Gaussian cooling in the first two papers, and with Riemannian HMC in the last one. \cite{kook2025algodiffusion} performs a epigraph lifting and then applies the proximal sampler as in our paper. Their proximal step is implemented via rejection sampling and a membership oracle, with a failure event triggered if the number of trials exceeds a prescribed cutoff. This is followed by \cite{kook2025zeroth}, which eliminates the failure event.

Finally, we turn to works on composite sampling. From the proximal perspective, \cite{mou2022composite} proposes a Metropolis-Hastings algorithm that uses a proximal proposal, while \cite{salim2019proximal,lau2022bregman,pereyra2016proximal,ehrhardt2024proximal} combine Langevin-type updates with proximal mappings. Meanwhile, a line of work develops distributed Gibbs samplers; see \cite{yuan2023class,vono2022admm,vono2019admm,rendell2020global}. For example, \cite{yuan2023class} considers targets of the form $\exp\!\big(-\sum_{i=1}^n f_i(x)-\sum_{j=1}^m g_j(x)\big)$ and designs Gibbs schemes for an augmented density
\[
\exp\!\left(-\sum_{i=1}^n f_i(x_i)-\sum_{j=1}^m g_j(y_j)-\sum_{i=1}^n\sum_{j=1}^m\frac{\sigma_{ij}}{2\eta}\|x_i-y_j\|^2\right),
\]
noting that when $\eta$ is sufficiently small, the augmented distribution is close to the original target. Other composite sampling works include \cite{eftekhari2023forward,ghaderi2024smoothing} which study Bregman forward-backward envelope smoothing schemes  and~\cite{habring2024subgradient} which study subgradient-based Langevin schemes.

\section{General Results}

\subsection{Outer analysis of the proximal sampler}\label{appen:outer} The following important result by~\cite{kook2025algodiffusion} is about contractivity in R\'{e}nyi divergence and $\chi^2$-divergence of the proximal sampler which is applicable to constrained targets. It is based on the main result by \cite{kook2024inandout}, which uses a smoothing argument to adapt the proof technique by \cite{chen2022improved} to the case of uniform sampling over a convex body. 

\begin{proposition}\label{prop:kook} 
(\cite[Lemma~2.9]{kook2025algodiffusion}) Let $\pi$ be a probability measure on $Q$ that is absolutely continuous w.r.t.\ Lebesgue, and let $P$ be the Markov kernel of the proximal sampler with step size $\eta$. Denote by $C_{\mathrm{PI}}$ the Poincar\'e constant of $\pi$. Then for any $\mu\ll\pi$ and any $q\ge2$ with $\mathcal{R}_q(\mu\|\pi)\le1$, we have
\[
\chi^2(\mu P\|\pi)\le \frac{\chi^2(\mu\|\pi)}{(1+\eta/C_{\mathrm{PI}}(\pi))^2}, \quad \mathcal{R}_q(\mu P\|\pi)\le \frac{\mathcal{R}_q(\mu\|\pi)}{(1+\eta/C_{\mathrm{PI}}(\pi))^{2/q}}.
\]
\end{proposition}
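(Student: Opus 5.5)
The plan is to follow the simultaneous heat-flow argument of \cite{chen2022improved}, which is the argument \cite{kook2024inandout,kook2025algodiffusion} adapt to measures supported on a convex set. Write one step of the kernel $P$ as two channels. The \emph{forward} channel is $\mu\mapsto\mu^Y:=\mu*\mathcal N(0,\eta I)$, which maps $\pi$ to $\pi^Y$. The \emph{backward} channel is $\nu\mapsto\int\pi^{X|Y=y}\,\nu(\D y)$, which maps $\pi^Y$ back to $\pi$. I will prove that each channel contracts $\chi^2$ by a factor $(1+\eta/C_{\mathrm{PI}}(\pi))^{-1}$, and each contracts $\mathcal R_q$ by $(1+\eta/C_{\mathrm{PI}}(\pi))^{-1/q}$. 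Composing the two channels then gives the squared factor for $\chi^2$ and the exponent $2/q$ for $\mathcal R_q$.

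\emph{Forward step.} Let $\mu_t=\mu*\mathcal N(0,tI)$ and $\pi_t=\pi*\mathcal N(0,tI)$ for $t\in[0,\eta]$. Both solve the heat equation, so a direct computation gives
\[
\tfrac{\D}{\D t}\chi^2(\mu_t\|\pi_t)=-\E_{\pi_t}\bigl\|\nabla(\mu_t/\pi_t)\bigr\|^2 .
\]
Next I use that the Poincaré constant is subadditive under convolution, so $C_{\mathrm{PI}}(\pi_t)\le C_{\mathrm{PI}}(\pi)+t$. Applying the Poincaré inequality of $\pi_t$ to $\mu_t/\pi_t$ gives
\[
\tfrac{\D}{\D t}\chi^2\le-\chi^2/(C_{\mathrm{PI}}(\pi)+t).
\]
Grönwall then yields $\chi^2(\mu_\eta\|\pi_\eta)\le\chi^2(\mu\|\pi)\,C_{\mathrm{PI}}/(C_{\mathrm{PI}}+\eta)$.

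For $\mathcal R_q$ the same differentiation gives a dissipation of the form $-q\,\E_{\pi_t}\|\nabla(\rho_t^{q/2})\|^2/\E_{\pi_t}[\rho_t^q]$, with $\rho_t=\mu_t/\pi_t$. Poincaré bounds this by $-\frac{1}{q\,C_{\mathrm{PI}}(\pi_t)}\bigl(1-e^{-\mathcal R_q}\bigr)$, in the style of Vempala--Wibisono. Here the hypothesis $\mathcal R_q\le1$ is used: on $[0,1]$ we have $1-e^{-r}\ge c\,r$ for a constant $c$. Since $\mathcal R_q$ is nonincreasing along the flow, the hypothesis persists for all $t$. Integrating gives the factor $(1+\eta/C_{\mathrm{PI}})^{-1/q}$, up to the constant being absorbed as in \cite{kook2025algodiffusion}.

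\emph{Backward step.} Here I realize the backward channel as the time reversal of the heat flow started at $\pi$. This is the SDE on $[0,\eta]$ with drift $\nabla\log\pi_{\eta-t}$. Started from $\pi_\eta$, it ends exactly at $\pi$, and its transition from time $0$ to time $\eta$ is precisely $\pi^{X|Y}$. Running the same SDE from $\mu^Y$ gives $\mu P$. Along this process both laws satisfy the same Fokker--Planck equation, so the identical $\chi^2$ and Rényi dissipation computation applies. The reference measure is now $\pi_{\eta-t}$, whose Poincaré constant is again at most $C_{\mathrm{PI}}(\pi)+\eta-t$. Integrating reproduces the same contraction factor as in the forward step.

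\emph{Main obstacle: regularity.} The target $\pi$ is supported on $Q$ and has a nonsmooth potential, so $\nabla\log\pi_t$ blows up as $t\to0$. The reverse SDE and the differentiation of the divergences are therefore not justified directly. Following \cite{kook2024inandout}, I would first run the argument only on $[\delta,\eta]$, or equivalently replace $\pi$ by $\pi_\delta$. There all densities are smooth and positive, and Poincaré constants are at most $C_{\mathrm{PI}}(\pi)+\delta$. I would then let $\delta\to0$, using weak convergence of the channels together with lower semicontinuity of $\chi^2$ and $\mathcal R_q$ to pass the inequalities to the limit. Checking that the backward channel built from $\pi_\delta$ converges to $\pi^{X|Y}$ is the step I expect to need the most care.
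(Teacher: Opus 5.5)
The overall route matches what the paper relies on. The paper gives no proof of its own; it cites \cite[Lemma~2.9]{kook2025algodiffusion}, which is the forward/backward simultaneous heat-flow argument of \cite{chen2022improved} made rigorous for constrained targets by smoothing, as in \cite{kook2024inandout}. Your $\chi^2$ half is fine.

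\textbf{The gap is in the R\'enyi half.} The exponent $2/q$ in the statement is exact, and your constants do not produce it.
\begin{itemize}
\item Your dissipation $-q\,\E_{\pi_t}\|\nabla\rho_t^{q/2}\|^2/\E_{\pi_t}[\rho_t^q]$ has the wrong prefactor.
\item The bound you then write, $-\frac{1}{q\,C_{\mathrm{PI}}(\pi_t)}\bigl(1-e^{-\mathcal R_q}\bigr)$, is inconsistent with it.
\item Taken at face value with the best linearization $1-e^{-r}\ge(1-e^{-1})r$ on $[0,1]$, that bound integrates only to $(1+\eta/C_{\mathrm{PI}})^{-(1-e^{-1})/q}$ per half-step. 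The total exponent is then $2(1-e^{-1})/q<2/q$.
\end{itemize}
Nothing is ``absorbed'' in the cited lemma.

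The correct computation is as follows. For the $\mathcal N(0,tI)$ flow (generator $\tfrac12\Delta$), and identically along the reverse Fokker--Planck equation, one has
$\partial_t\mathcal R_q(\mu_t\|\pi_t)=-\frac{2}{q}\,\E_{\pi_t}\|\nabla\rho_t^{q/2}\|^2/\E_{\pi_t}[\rho_t^q]$. You can check this on two Gaussians with equal covariance.

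Apply Poincar\'e to $\rho_t^{q/2}$ and use $(\E\rho_t^{q/2})^2/\E\rho_t^q=\exp\bigl((q-2)\mathcal R_{q/2}-(q-1)\mathcal R_q\bigr)\le e^{-\mathcal R_q}$. This gives $\partial_t\mathcal R_q\le-\frac{2(1-e^{-\mathcal R_q})}{q\,C_{\mathrm{PI}}(\pi_t)}$. The condition $\mathcal R_q\le1$ is preserved along both legs. On that set, $2(1-e^{-r})\ge 2(1-e^{-1})r\ge r$. Hence $\partial_t\mathcal R_q\le-\mathcal R_q/\bigl(q(C_{\mathrm{PI}}+t)\bigr)$, which integrates to exactly $(1+\eta/C_{\mathrm{PI}})^{-1/q}$; in the backward leg replace $C_{\mathrm{PI}}+t$ by $C_{\mathrm{PI}}+\eta-t$. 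So the sharp exponent hinges on the numerical fact $2(1-e^{-1})\ge1$. You need to verify this, not defer it to the reference.

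\textbf{On regularity:} use your ``$[\delta,\eta]$'' option in preference to replacing $\pi$ by $\pi_\delta$, because it removes the step you flagged as delicate.
\begin{itemize}
\item Since $X\to Y_\delta\to Y_\eta$ is Markov, the backward channel factors as the reverse SDE run only up to time $\eta-\delta$, where the drift $\nabla\log\pi_{\eta-t}$ is smooth, followed by the kernel $\pi^{X\mid Y_\delta}$.
\item Data processing through that last kernel gives the factors $(C_{\mathrm{PI}}+\delta)/(C_{\mathrm{PI}}+\eta)$ for $\chi^2$ and its $1/q$-th power for $\mathcal R_q$ directly. Letting $\delta\to0$ then needs no convergence of approximate backward channels.
\item For the forward leg, start the flow at $\epsilon>0$. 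Use $\chi^2(\mu_\epsilon\|\pi_\epsilon)\le\chi^2(\mu\|\pi)$ and $\mathcal R_q(\mu_\epsilon\|\pi_\epsilon)\le\mathcal R_q(\mu\|\pi)\le1$, then let $\epsilon\to0$.
\end{itemize}
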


\subsection{Function and set oracles}
\label{appen:oracles}

This subsection collects elementary results about oracles for functions (evaluation, subgradient, and proximal) and sets (membership, separation, and projection). 
Throughout this subsection, we assume that both $f:\R^d\to \R$ and $K\subset \R^d$ are closed and convex.

\begin{lemma}\label{silif:lem:proxfas}
The proximal oracle for $f(x)$ gives a proximal oracle for $\tilde f(x,s)$ defined in \eqref{dolif:def:gammaandK}.
\end{lemma}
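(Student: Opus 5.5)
Since $\tilde f(x,s)=f(x)+as$ is separable in $(x,s)$, I plan to compute $\prox_{\lambda\tilde f}$ coordinatewise and show it calls $\prox_f$ once.

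Fix $(x_0,s_0)\in\R^d\times\R$ and $\lambda>0$. By definition,
\[
\prox_{\lambda\tilde f}(x_0,s_0)=\argmin_{(x,s)\in\R^{d+1}}\left\{f(x)+as+\frac{1}{2\lambda}\|x-x_0\|^2+\frac{1}{2\lambda}(s-s_0)^2\right\}.
\]
The objective splits as a function of $x$ alone plus a function of $s$ alone. It is strongly convex, because $f$ is closed and convex. So the minimizer exists, is unique, and is obtained by minimizing each part separately.

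The $x$-part, $f(x)+\frac{1}{2\lambda}\|x-x_0\|^2$, has minimizer $\prox_{\lambda f}(x_0)$ by definition. This is exactly one call to the proximal oracle of $f$ with the pair $(x_0,\lambda)$.

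The $s$-part, $as+\frac{1}{2\lambda}(s-s_0)^2$, is a one-dimensional quadratic. Setting its derivative $a+(s-s_0)/\lambda$ to zero gives $s^\ast=s_0-\lambda a$.

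Putting the two parts together,
\[
\prox_{\lambda\tilde f}(x_0,s_0)=\left(\prox_{\lambda f}(x_0),\ s_0-\lambda a\right).
\]
This expression needs one call to $\prox_f$ and explicit arithmetic, which gives the claimed oracle. The only point to check is that the joint argmin equals the pair of separate argmins. This holds because, for separable sums, the infimum over the product space is the sum of the infima, and uniqueness follows from strong convexity. I do not expect a real obstacle beyond this justification.
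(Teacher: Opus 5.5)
Your proof is correct and follows the paper's argument exactly: you split the separable objective, identify the $x$-part as $\prox_{\lambda f}(x_0)$, and solve the scalar quadratic to get $s_0-\lambda a$. Your final formula $\left(\prox_{\lambda f}(x_0),\, s_0-\lambda a\right)$ is in fact the correct form of the paper's display, whose first component contains a typo ($\prox_{\tilde f}(x_0)$ instead of $\prox_{\lambda f}(x_0)$).
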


\begin{proof}
    Let $(x_0,s_0) \in \R^d \times \R$ and $\lam>0$ be given. By definitions of proximal mapping and $\tilde f(x,s)$ in \eqref{dolif:def:gammaandK}, we have
    \[
    \prox_{\lam \tilde f}(x_0, s_0)=\underset{x,s}\argmin\left\{f(x) + as + \frac{1}{2\lam}\|x-x_0\|^2 + \frac{1}{2\lam}(s-s_0)^2\right\}.
    \]
    This objective splits into an $x$-term and an $s$-term, so the proximal mapping can be easily obtained as
    \[
    \prox_{\lam \tilde f}(x_0, s_0) = (\prox_{\tilde f}(x_0), s_0 - \lam a).
    \]
    Hence, the lemma is proved.
\end{proof}

\begin{lemma}\label{lem:proj-sep}
    For a closed and convex set $K \subset \R^d$, a projection oracle for $K$ implies a separation oracle.
\end{lemma}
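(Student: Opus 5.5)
Given a query point $x\in\R^d$, we call the projection oracle to obtain $p=\proj_K(x)$ and split into two cases.

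\emph{Case $p=x$.} Then $x\in K$, since $K$ is closed and $\proj_K(x)\in K$. The separation oracle simply certifies membership.

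\emph{Case $p\neq x$.} Then $x\notin K$, because $\proj_K(y)=y$ for every $y\in K$. We return $g(x):=x-p$ and must show $\inner{g(x)}{x-y}>0$ for all $y\in K$. The key tool is the variational characterization of the projection onto a closed convex set: $p$ minimizes the strictly convex function $\|\cdot-x\|^2$ over $K$, so its first-order optimality condition gives
\[
\inner{x-p}{y-p}\le 0\qquad\text{for all } y\in K.
\]
To justify this directly, take $y\in K$ and $\lambda\in(0,1]$. Convexity gives $p+\lambda(y-p)\in K$, and minimality of $p$ gives
\[
\|p+\lambda(y-p)-x\|^2\ge\|p-x\|^2.
\]
Expanding, dividing by $\lambda$, and letting $\lambda\downarrow0$ yields the inequality above.

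We then decompose
\[
\inner{x-p}{x-y}=\inner{x-p}{x-p}+\inner{x-p}{p-y}=\|x-p\|^2-\inner{x-p}{y-p}\ge\|x-p\|^2>0,
\]
where the last inequality uses $p\ne x$. This gives strict separation, as the definition of a separation oracle in Section~\ref{sec:prelim} requires.

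The argument is elementary. The only step needing care is the first-order optimality inequality, and in particular ensuring strictness. Strictness comes entirely from the extra $\|x-p\|^2$ term, which is positive precisely because $x\notin K$. Each separation query costs a single projection call plus one vector subtraction.
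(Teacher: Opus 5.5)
Your proof is correct and follows the same route as the paper: return $g=x-\proj_K(x)$ and combine $\inner{g}{x-p}=\|g\|^2>0$ with the projection inequality $\inner{g}{y-p}\le 0$ for all $y\in K$. You additionally derive that variational inequality and handle the membership case explicitly, both of which the paper takes for granted.
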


\begin{proof}
    Given $x\in \R^d$, it suffices to discuss the case $x\notin K$. Let $p=\proj_K(x)$. Since $x\ne p$, then $g=x-p \ne 0$ satisfies $\inner{g}{x} > \inner{g}{p} \ge \inner{g}{y}$ for every $y \in K$. Hence, $g$ is a separator for $x$ and $K$.
\end{proof}

\begin{lemma}\label{lem:sep}
    Evaluation and subgradient oracles of $f$ give a valid separation oracle for $\epi(f)$.
\end{lemma}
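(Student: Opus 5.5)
Given a query point $(x_0,t_0)\in\R^d\times\R$, I will build the separation oracle for $\epi(f)=\{(x,t): f(x)\le t\}$ directly from one evaluation call and one subgradient call at $x_0$. The plan has two cases.

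\textbf{Step 1 (membership).} I call the evaluation oracle to obtain $f(x_0)$. If $f(x_0)\le t_0$, the oracle certifies $(x_0,t_0)\in\epi(f)$ and stops. This is exactly the membership part of a separation oracle.

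\textbf{Step 2 (separating vector).} Otherwise $f(x_0)>t_0$. I call the subgradient oracle to get some $g\in\partial f(x_0)$, and I return the vector $G:=(g,-1)\in\R^{d+1}$. To verify it separates, I take any $(y,t)\in\epi(f)$. The subgradient inequality together with $t\ge f(y)$ gives
\[
t\ \ge\ f(y)\ \ge\ f(x_0)+\inner{g}{y-x_0}.
\]
Hence
\[
\inner{G}{(x_0,t_0)-(y,t)}=\inner{g}{x_0-y}-t_0+t\ \ge\ f(x_0)-t_0\ >\ 0,
\]
which is precisely the required condition $\langle G,(x_0,t_0)-(y,t)\rangle>0$ for all points of $\epi(f)$. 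I note that $G\neq 0$ automatically because its last coordinate is $-1$.

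The only point requiring care is that $\partial f(x_0)$ is nonempty, so that the subgradient oracle returns something meaningful. This holds because $f:\R^d\to\R$ is finite-valued and convex, hence continuous, and therefore subdifferentiable everywhere. I do not expect a genuine obstacle beyond this. If the paper's version of $f$ is only finite on $K$ (as under (A2)), I would restrict the query to $x_0$ in the domain. Lemma~\ref{silif:lem:sep} then handles $x_0\notin K$ separately via the separation oracle of $K$.
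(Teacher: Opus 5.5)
Your proof is correct and follows the paper's argument essentially line for line. One evaluation call decides membership, and when $f(x_0)>t_0$ the vector $(g,-1)$ with $g\in\partial f(x_0)$ separates, because the subgradient inequality together with $t\ge f(y)$ gives the strict margin $f(x_0)-t_0>0$ (your remark that $\partial f(x_0)\neq\emptyset$ is a detail the paper leaves implicit).
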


\begin{proof}
    Consider a query point $(\hat x,\hat t) \in \R^d \times \R$. First, evaluating $f(\hat x)\le \hat t$ checks whether $(\hat x,\hat t)\in \epi(f)$. If infeasible (i.e., $f(\hat x) > \hat t$), then pick a subgradient $g \in \partial f(\hat x)$, and convexity of $f$ gives
    \[
    f(y) \ge f(\hat x)+\inner{g}{y-\hat x} \quad \forall y.
    \]
    For $(y,s) \in \epi(f)$,
    \[
    s\ge f(y) \ge f(\hat x)+\inner{g}{y-\hat x},
    \]
    and hence
    \[
    \inner{(g,-1)}{(y,s)} \le \inner{g}{\hat x} - f(\hat x) < \inner{g}{\hat x} - \hat t = \inner{(g,-1)}{(\hat x,\hat t)}.
    \]
    Therefore, $(g,-1)$ is a valid separator for $(\hat x,\hat t)$ and $\epi(f)$.
\end{proof}

\begin{lemma}\label{lem:proj-epi}
    Evaluation and proximal oracles of $f$ give a valid projection oracle for $\epi(f)$. 
\end{lemma}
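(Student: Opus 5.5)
We need to compute $\proj_{\epi(f)}(x_0,t_0)$ for a query $(x_0,t_0)\in\R^d\times\R$. The plan is to reduce this to a one-dimensional root-finding problem in a multiplier $\lambda\ge 0$, where each trial value of $\lambda$ costs one call to $\prox_{\lambda f}$ and one evaluation of $f$.

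\textbf{Feasible case.} First evaluate $f(x_0)$. If $f(x_0)\le t_0$, the point lies in $\epi(f)$ and we return $(x_0,t_0)$ itself.

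\textbf{Infeasible case.} Suppose $f(x_0)>t_0$. The projection solves
\[
\min\left\{\tfrac12\|x-x_0\|^2+\tfrac12(t-t_0)^2 : f(x)\le t\right\}.
\]
This is a strongly convex problem, so its minimizer $(x^*,t^*)$ is unique. Since the epigraph has nonempty interior (any $(x,t)$ with $t>f(x)$ is interior), Slater's condition holds and KKT applies.

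\textbf{KKT conditions.} There is a multiplier $\lambda\ge0$ with
\[
x_0-x^*\in\lambda\,\partial f(x^*),\qquad t^*-t_0=\lambda,\qquad \lambda\bigl(f(x^*)-t^*\bigr)=0.
\]
\begin{itemize}
\item If $\lambda=0$, then $x^*=x_0$ and $t^*=t_0$, contradicting infeasibility. Hence $\lambda>0$ and the constraint is active: $t^*=f(x^*)$.
\item The first condition is exactly the optimality condition for the proximal problem, so $x^*=\prox_{\lambda f}(x_0)$.
\end{itemize}
So everything reduces to finding $\lambda>0$ solving
\[
\phi(\lambda):=f\bigl(\prox_{\lambda f}(x_0)\bigr)-t_0-\lambda=0 .
\]

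\textbf{Properties of $\phi$.} The function $\lambda\mapsto f(\prox_{\lambda f}(x_0))$ is nonincreasing. To see this, compare the optimality of the prox problem at two values $\lambda_1<\lambda_2$ and add the two resulting inequalities. Since $-\lambda$ is strictly decreasing, $\phi$ is strictly decreasing. Moreover:
\begin{itemize}
\item $\phi(\lambda)\to f(x_0)-t_0>0$ as $\lambda\downarrow0$;
\item $\phi(\lambda)\to-\infty$ as $\lambda\to\infty$, because $f(\prox_{\lambda f}(x_0))$ is bounded above by $f(x_0)$.
\end{itemize}
Continuity follows from continuity of $\lambda\mapsto\prox_{\lambda f}(x_0)$ together with continuity of $f$, which is finite and convex on $\R^d$. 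Hence there is a unique root $\lambda^*$.

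\textbf{Output.} Return $\bigl(\prox_{\lambda^* f}(x_0),\,t_0+\lambda^*\bigr)$. By convexity, satisfying KKT is sufficient for optimality, so this point is the projection. It is computed with evaluation and proximal oracles only: bisection on $\phi$ makes one prox call and one evaluation per step.

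\textbf{Main obstacle.} The delicate points are the monotonicity and continuity of $\phi$, which guarantee the root exists and is unique, and the fact that $\lambda^*$ is in general found only to arbitrary precision by bisection rather than exactly. I would present the oracle as exact modulo this scalar root-finding, noting that bisection converges geometrically.
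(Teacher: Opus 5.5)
Your proposal is correct and follows essentially the same route as the paper. The paper also checks membership by evaluating $f$, uses the KKT conditions to reduce the projection to $x^*=\prox_{\lambda f}(x_0)$ and $t^*=t_0+\lambda$, and finds the unique root of the strictly decreasing $\phi(\lambda)=f(\prox_{\lambda f}(x_0))-t_0-\lambda$ by bisection. Your version is slightly more careful than the paper's: you rule out $\lambda=0$ explicitly, you invoke Slater's condition, and you derive strict monotonicity of $\phi$ from the $-\lambda$ term. The paper instead relies on $-\|x_2-x_1\|^2<0$, which fails when $\prox_{\lambda f}(x_0)$ does not change with $\lambda$, for example when $x_0$ minimizes $f$.
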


\begin{proof}
    Consider a query point $(\hat x,\hat t) \in \R^d \times \R$. First, evaluating $f(\hat x)\le \hat t$ checks whether $(\hat x,\hat t)\in \epi(f)$. If infeasible (i.e., $f(\hat x) > \hat t$), we consider
    \[
    \min_{x, t} \left\{\frac{1}{2}\|x-\hat x\|^2+\frac{1}{2}(t-\hat t)^2: f(x) \le t\right\}.
    \]
    Optimality conditions give $\lambda \geq 0$ such that
    \[
        \hat x-x \in \lam \partial f(x), \quad t-\hat t - \lam =0, \quad \lam (f(x) - t) = 0.
    \]
    So, we have $f(x) = t= \hat t + \lam$ and
    \[
    \hat x-x \in \lam \partial f(x) \Longleftrightarrow x=\prox_{\lam f}(\hat x).
    \]
    Define 
    \[
    \phi(\lam) = f(\prox_{\lam f}(\hat x)) -\hat t - \lam.
    \]
    Then $\phi$ is continuous and strictly decreasing on $[0, \infty)$, with $\phi(0^{+})=f(\hat x)-\hat t>0$ and $\phi(\lam) \rightarrow-\infty$ as $\lambda \rightarrow \infty$.
    Therefore, there is a unique $\lam_*>0$ such that $\phi(\lam_*)=0$, and the exact projection is
    \[
    x_* = \prox_{\lam_* f}(\hat x), \quad t_* = \hat t + \lam_*.
    \]

    Finally, we verify the monotonicity of $\phi(\lam)$. For $0< \lam_1 < \lam_2$, let $x_1= \prox_{\lam_1 f}(\hat x)$ and $x_2=\prox_{\lam_2 f}(\hat x)$, by optimality
    \begin{align*} 
    & \frac{1}{2}\|x_2-x_1\|^2 + \frac{1}{2}\|x_2-\hat x\|^2+\lam_2 f(x_2) \le \frac{1}{2}\|x_1-\hat x\|^2+\lam_2 f(x_1), \\ 
    & \frac{1}{2}\|x_2-x_1\|^2 + \frac{1}{2}\|x_1-\hat x\|^2+\lam_1 f(x_1) \le \frac{1}{2}\|x_2-\hat x\|^2+\lam_1 f(x_2) .
    \end{align*}
    Summing the two inequalities and rearranging, we have
    \[
    (\lambda_2-\lambda_1)(f(x_2)-f(x_1)) \le -\|x_2-x_1\|^2 <0 ,
    \]
    and hence $f(x_2)<f(x_1)$, i.e., $\phi$ is strictly decreasing. Therefore, $\lam_*$ can be solved analytically or numerically using a bisection search.
\end{proof}

We are ready to prove Lemma~\ref{silif:lem:sep}.

\noindent
\textbf{Proof of Lemma~\ref{silif:lem:sep}:} 
Without loss of generality, we assume $a=1$ in \eqref{silif:def:Q}, i.e., $Q= (K \times \R) \cap \epi(f)$. Consider a query point $(\hat x,\hat t) \in \R^d \times \R$. There are two infeasible cases: i) $\hat x \notin K$ and ii) $(\hat x, \hat t)\notin \epi(f)$. 

    Case i). Suppose $p\in \R^d$ is a separator for $\hat x$ and $K$, i.e., $\inner{p}{y} < \inner{p}{\hat x}$ for every $y \in K$ 
    Then the hyperplane given by $(p,0)$ separates $(\hat x,\hat t)$ and $Q$. Indeed,
    \[
    \inner{(p,0)}{(y,s)} \le \inner{(p,0)}{(\hat x,\hat t)}, \quad \forall (y,s) \in Q. 
    \]
    Case ii) directly follows from Lemma \ref{lem:sep}.
\QEDA

\begin{lemma}\label{silif:lem:sep2}
    Evaluation and proximal oracles of $f$, and membership and separation oracles of $K$ give a valid separation oracle for $Q$ as in \eqref{silif:def:Q}.
\end{lemma}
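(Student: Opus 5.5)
The plan mirrors the proof of Lemma~\ref{silif:lem:sep}, replacing the subgradient-based separator for the epigraph with a projection-based one. As there, we assume without loss of generality that $a=1$, so $Q=(K\times\R)\cap\epi(f)$. For general $a>0$, the map $(x,t)\mapsto(x,at)$ is a linear bijection sending $Q$ onto $(K\times\R)\cap\epi(f)$. If $(g,\beta)$ separates the image point from the image set, then $(g,a\beta)$ separates the original point from $Q$, so the reduction costs nothing. Given a query $(\hat x,\hat t)$, we first use the membership oracle of $K$ to test $\hat x\in K$, and the evaluation oracle of $f$ to test $f(\hat x)\le \hat t$. If both tests pass, we certify $(\hat x,\hat t)\in Q$.

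\emph{Case i)} $\hat x\notin K$. We call the separation oracle of $K$ to obtain $p$ with $\inner{p}{\hat x}>\inner{p}{y}$ for all $y\in K$. Exactly as in the proof of Lemma~\ref{silif:lem:sep}, $(p,0)$ then separates $(\hat x,\hat t)$ from $Q\subseteq K\times\R$.

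\emph{Case ii)} $\hat x\in K$ but $f(\hat x)>\hat t$, so $(\hat x,\hat t)\notin\epi(f)$. By Lemma~\ref{lem:proj-epi}, the evaluation and proximal oracles of $f$ yield the projection $(x_*,t_*)=\proj_{\epi(f)}(\hat x,\hat t)$, with $x_*=\prox_{\lambda_* f}(\hat x)$ and $t_*=\hat t+\lambda_*$. Here $\lambda_*$ is the root of the strictly decreasing function $\phi$ constructed in that lemma. Lemma~\ref{lem:proj-sep} then shows that $g=(\hat x-x_*,\hat t-t_*)\neq 0$ separates $(\hat x,\hat t)$ from $\epi(f)$. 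Since $Q\subseteq\epi(f)$, the same $g$ separates $(\hat x,\hat t)$ from $Q$.

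The only step that needs care is Case ii): Lemma~\ref{lem:proj-sep} requires an exact projection, while $\lambda_*$ is in general found only by bisection. I would first treat the proximal oracle as exact and $\lambda_*$ as exactly computable, as Lemma~\ref{lem:proj-epi} already asserts, so that the argument above is complete.

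If exactness of $\lambda_*$ is not granted, I would instead derive a separator directly from any prox point. Take any $\lambda>0$ and set $x_\lambda=\prox_{\lambda f}(\hat x)$. The optimality condition gives $(\hat x-x_\lambda)/\lambda\in\partial f(x_\lambda)$, so the proximal oracle supplies a subgradient at $x_\lambda$. The problem is that Lemma~\ref{lem:sep} needs a subgradient at $\hat x$ itself. The fix is to choose $\lambda$ small enough, via bisection on $\phi$ while keeping $\phi(\lambda)>0$, that the linearization $\ell(y)=f(x_\lambda)+\inner{(\hat x-x_\lambda)/\lambda}{y-x_\lambda}$ satisfies $\ell(\hat x)>\hat t$. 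This is feasible because $\ell(\hat x)\ge f(x_\lambda)$ and $f(x_\lambda)\to f(\hat x)>\hat t$ as $\lambda\to0$, which holds since $f$ is closed and continuous on its domain. With this $\lambda$, the vector $((\hat x-x_\lambda)/\lambda,-1)$ separates $(\hat x,\hat t)$ from $\epi(f)$, by the same computation as in Lemma~\ref{lem:sep}. This version uses only finitely many oracle calls, which makes the construction robust to not knowing $\lambda_*$ exactly.
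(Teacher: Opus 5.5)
Your argument is correct and is essentially the paper's proof: case i) reuses the separator $(p,0)$ from Lemma~\ref{silif:lem:sep}, and case ii) feeds the projection onto $\epi(f)$ from Lemma~\ref{lem:proj-epi} into Lemma~\ref{lem:proj-sep}; you also make explicit the rescaling to $a=1$, which the paper leaves implicit. Your fallback using $g_\lambda=(\hat x-x_\lambda)/\lambda\in\partial f(x_\lambda)$ is also valid and goes beyond the paper, which treats $\lambda_*$ as exactly computable; in fact any $\lambda$ with $f(x_\lambda)>\hat t$ already gives $\ell(\hat x)\ge f(x_\lambda)>\hat t$, so halving $\lambda$ until this holds suffices, and at $\lambda=\lambda_*$ your separator $(g_{\lambda_*},-1)$ is a positive rescaling of the projection separator $(\hat x-x_*,\hat t-t_*)=\lambda_*(g_{\lambda_*},-1)$.
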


\begin{proof}
    We consider a query point $(\hat x,\hat t) \in \R^d \times \R$ and the same two cases as in the proof of Lemma \ref{silif:lem:sep}.
    The same argument applies to case i) here. For case ii), Lemma \ref{lem:proj-epi} gives a projection oracle for $\epi(f)$. Since $\epi(f)$ is closed and convex, it further implies a separation oracle for $\epi(f)$ through Lemma \ref{lem:proj-sep}.
\end{proof}

\subsection{Technical results}
\label{appen:generaltechnical}

First, we recall some elementary results about Gaussian integrals from \cite[Lemma~A.1]{dangliang2025oracle}.

\begin{lemma}\label{lem:gaussianint}
The following statements hold for any $\eta>0$, $c\in \R^d$ and $b\in \R$.
\begin{itemize}
    \item[(a)] $ \int_{\R^d} \exp\brac{-\frac{1}{2\eta} \norm{x-c}^2} \D x=(2\pi \eta)^{d/2}$;

    \item[(b)] $ \int_{\R^d} \exp\left(-\frac{1}{2\eta}(\|x-c\|-b)^2\right) \D x
    \le \exp \left(\frac14 + \frac{d}{\eta}b^2\right)(2\pi\eta)^{d/2}.$

\end{itemize} 
\end{lemma}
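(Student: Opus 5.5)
For part (a) I would apply Lemma~\ref{lem:gaussianint}(a) directly. The substitution $u=x-c$ turns the integral into $\int_{\R^d} e^{-\|u\|^2/(2\eta)}\D u$. This factorizes into $d$ one-dimensional Gaussian integrals, each equal to $\sqrt{2\pi\eta}$, so the integral is $(2\pi\eta)^{d/2}$.

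For part (b) I would pass to polar coordinates around $c$. The integrand depends only on $r=\|x-c\|$, so it equals
\[
\omega_{d-1}\int_0^\infty r^{d-1}e^{-(r-b)^2/(2\eta)}\D r ,
\]
where $\omega_{d-1}$ is the surface area of the unit sphere. If $b\le 0$, then $(r-b)^2\ge r^2$ for $r\ge0$, and the bound follows from (a), since the right-hand side is at least $(2\pi\eta)^{d/2}$. So the substantive case is $b>0$.

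For $b>0$, the plan is to compare $(r-b)^2$ with a scaled square $\lambda r^2$. Young's inequality gives $2rb\le \epsilon r^2+b^2/\epsilon$, hence
\[
(r-b)^2\ \ge\ (1-\epsilon)r^2-\Bigl(\tfrac1\epsilon-1\Bigr)b^2 .
\]
Substituting and returning to Cartesian coordinates, the integral is at most
\[
\exp\Bigl(\tfrac{(1/\epsilon-1)b^2}{2\eta}\Bigr)\bigl(2\pi\eta/(1-\epsilon)\bigr)^{d/2} .
\]
I would then choose $\epsilon=1/(2d)$, or a constant multiple of it. With this choice $(1-\epsilon)^{-d/2}\le e^{\epsilon d/2\cdot(1+O(\epsilon))}$, which is bounded by $e^{1/4}$ up to a constant, and $(1/\epsilon-1)/(2\eta)\le d/\eta$. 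Together these give $\exp(1/4+d b^2/\eta)(2\pi\eta)^{d/2}$.

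The main obstacle is the constant $1/4$: I need $(1-\epsilon)^{-d/2}\le e^{1/4}$ exactly. Using $-\log(1-\epsilon)\le \epsilon/(1-\epsilon)$ with $\epsilon=1/(2d+1)$ gives $\frac d2\cdot\frac{1}{2d}=\frac14$. The exponent coefficient is then $(1/\epsilon-1)/2=d$, which matches the claimed $d b^2/\eta$, so this choice should close the argument.
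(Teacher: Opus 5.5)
Your proof is correct. With $\epsilon = 1/(2d+1)$, the pointwise bound $(r-b)^2 \ge (1-\epsilon)r^2 - 2d\,b^2$ produces exactly the factor $e^{d b^2/\eta}$. The remaining factor is $(1-\epsilon)^{-d/2} = (1+\tfrac{1}{2d})^{d/2} \le e^{1/4}$.

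The paper does not prove this lemma itself; it imports it from \cite[Lemma~A.1]{dangliang2025oracle}. So there is no in-paper route to compare against, and your argument serves as a self-contained proof of the stated constants.

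A few cleanups are worth making:
\begin{itemize}
\item In part (a), do not say you apply part (a) of the lemma, since that is circular. The substitution-and-factorization computation you give is itself the proof.
\item In part (b), the passage to polar coordinates and the case split on the sign of $b$ are unnecessary. The inequality $2rb \le \epsilon r^2 + b^2/\epsilon$ holds for all real $r,b$. So the pointwise bound $(\|x-c\|-b)^2 \ge (1-\epsilon)\|x-c\|^2 - (1/\epsilon - 1)b^2$ can be integrated directly in Cartesian coordinates for every $b\in\R$.
\item Drop the intermediate choice $\epsilon = 1/(2d)$. It does not give the constant $e^{1/4}$: for $d=1$ it yields $\sqrt{2} > e^{1/4}$. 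Only the final choice $\epsilon = 1/(2d+1)$ delivers the stated bound exactly.
\end{itemize}
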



The next lemma says that if you thicken an epigraph by radius $r$, you still stay inside an epigraph, but with the defining function shifted downward by an amount of order $r$.

\begin{lemma}  
\label{lem:parallelsetcontainment} Let $\phi:\R^{m}\to\R$ be $L$-Lipschitz continuous and $\mathrm{epi}(\phi)$ be the epigraph of $\phi$, while $(\mathrm{epi}(\phi))_r$ is the associated parallel set. Then
\[
(\mathrm{epi}(\phi))_r\ \subseteq\ \left\{(u,v):\ v\ \ge\ \phi(u)-r\sqrt{1+L^2}\right\}.
\]
\end{lemma}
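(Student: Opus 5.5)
The plan is to take an arbitrary point $(u,v)\in(\mathrm{epi}(\phi))_r$ and bound it directly from its definition. By the definition of the parallel set, $\dist((u,v),\mathrm{epi}(\phi))\le r$. Since $\phi$ is Lipschitz, hence continuous, the epigraph is closed, so the infimum defining the distance is attained. Therefore there is a point $(u',v')\in\mathrm{epi}(\phi)$ with $\|(u,v)-(u',v')\|\le r$. (Alternatively, one can take a near-minimizer within $r+\varepsilon$ and let $\varepsilon\downarrow 0$ at the end.) Write $\Delta u=u-u'$ and $\Delta v=v-v'$, so that $\|\Delta u\|^2+(\Delta v)^2\le r^2$.

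The key step is a single chain of inequalities. It combines $v'\ge\phi(u')$ with the Lipschitz bound $\phi(u')\ge\phi(u)-L\|\Delta u\|$:
\[
v = v'+\Delta v \ \ge\ \phi(u') - |\Delta v| \ \ge\ \phi(u) - L\|\Delta u\| - |\Delta v|.
\]
Next apply Cauchy--Schwarz in $\R^2$ to the vectors $(L,1)$ and $(\|\Delta u\|,|\Delta v|)$. This gives
\[
L\|\Delta u\|+|\Delta v|\le\sqrt{1+L^2}\,\sqrt{\|\Delta u\|^2+(\Delta v)^2}\le r\sqrt{1+L^2}.
\]
Substituting yields $v\ge\phi(u)-r\sqrt{1+L^2}$, which is the claimed inclusion.

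There is no real obstacle here. The only points needing a line of care are the attainment of the distance, which follows from closedness of the epigraph or from the $\varepsilon$-argument, and combining the two perturbation directions into one Euclidean norm. The Cauchy--Schwarz step handles the latter and is what produces the factor $\sqrt{1+L^2}$ instead of the cruder $1+L$.
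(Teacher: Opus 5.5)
Your proposal is correct and follows essentially the same route as the paper: pick a point of $\mathrm{epi}(\phi)$ within distance $r$, combine $v'\ge\phi(u')$ with the Lipschitz bound, and apply Cauchy--Schwarz to $(L,1)$ and $(\|\Delta u\|,|\Delta v|)$ to get the factor $\sqrt{1+L^2}$. Your remark on attainment of the distance is a harmless extra; the paper defines $S_r=S+r\ball(0)$, so the nearby point exists by definition.
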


\begin{proof}
If $(x,s)\in(\mathrm{epi}(\phi))_r$ then there exists $(y,t)\in\mathrm{epi}(\phi)$ with
$\|(x,s)-(y,t)\|\le r$ and $t\ge \phi(y)$. Put $\alpha:=\|x-y\|$, $\beta:=|t-s|$ so $\alpha^2+\beta^2\le r^2$.
Then
\[
\phi(x)-s \le |\phi(x)-\phi(y)| + (\phi(y)-s) \le L\alpha + (t-s) \le L\alpha + \beta
\le \sqrt{1+L^2} \sqrt{\alpha^2+\beta^2}\le r\sqrt{1+L^2},
\]
which is equivalent to $s\ge \phi(x)-r\sqrt{1+L^2}$.
\end{proof}

In the main paper, we repeatedly convert integrals over the parallel sets $Q_r$ into integrals over the boundary sets $\partial Q_r$. The following lemma formalizes this conversion. 

\begin{lemma}
\label{lem:coarea}    
Let $Q\subset\R^{m}$ be a measurable set, $Z_{Q_r}:=\int_{Q_r} e^{-bt} \D y$, and 
$B_{Q_r}:=\int_{\partial Q_r} e^{-bt} \D S^{m-1}(y)$ where $t=t(y)$ is the last coordinate in $y$. 
Then, $Z_{Q_r}$ is absolutely continuous in $r$ and for a.e.\ $r\ge 0$,
\begin{equation}\label{eq:IrderivativeofQr}
\frac{\D}{\D r}Z_{Q_r}=B_{Q_r}.
\end{equation}
\end{lemma}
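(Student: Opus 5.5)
The plan is to write $Z_{Q_r}$ as an integral over the distance function and apply the co-area formula. Set $g(y):=\dist(y,Q)$; we may assume $Q$ closed, and otherwise replace it by its closure, which has the same parallel sets for $r>0$. The function $g$ is $1$-Lipschitz, and $|\nabla g|=1$ for a.e.\ $y\notin \overline{Q}$. Indeed, $g$ is differentiable a.e.\ by Rademacher. Off the set, the gradient has norm one because $g$ decreases at unit rate along the segment toward a nearest point. This holds for arbitrary closed $Q$, without convexity: along the segment toward a nearest point $p$, $g(y+h(p-y)/|p-y|)= g(y)-h$ for small $h$. We then write
\[
Z_{Q_r}=\int_{Q} e^{-bt}\,\D y+\int_{\{0<g\le r\}} e^{-bt}\,\D y .
\]

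For the second term, use the co-area formula with the nonnegative integrand $e^{-bt}\mathbf 1_{\{0<g\le r\}}$. Since $|\nabla g|=1$ a.e.\ there, this gives
\[
\int_{\{0<g\le r\}} e^{-bt}\,\D y=\int_0^r\left(\int_{\{g=\rho\}} e^{-bt}\,\D \mathcal H^{m-1}\right)\D\rho .
\]
Next, identify the level set $\{g=\rho\}$ with $\partial Q_\rho$ for $\rho>0$. The inclusion $\partial Q_\rho\subseteq\{g=\rho\}$ is clear: $Q_\rho=\{g\le\rho\}$ is closed, and points with $g<\rho$ are interior by continuity. For the reverse inclusion, if $g(y)=\rho$ then points along the ray away from a nearest point have $g>\rho$, so $y$ is a boundary point. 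Hence the inner integral is exactly $B_{Q_\rho}$, with $\D S$ interpreted as $(m-1)$-dimensional Hausdorff measure.

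It then follows that $Z_{Q_r}=Z_Q+\int_0^r B_{Q_\rho}\,\D\rho$. Whenever $Z_{Q_r}<\infty$, the function $\rho\mapsto B_{Q_\rho}$ is locally integrable and $Z_{Q_r}$ is absolutely continuous. The Lebesgue differentiation theorem then gives $\frac{\D}{\D r}Z_{Q_r}=B_{Q_r}$ for a.e.\ $r$. In the applications, finiteness holds by the envelope bound $Z_{Q_r}\le e^{rC_L}Z_Q$. I would state this finiteness as an implicit standing assumption, or interpret the identity in $[0,\infty]$.

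The main obstacle is the measure-theoretic justification of $|\nabla g|=1$ a.e.\ off $Q$ and of the identification of level sets with boundaries for a general measurable set. The cleanest route is to pass to the closure and note that $\partial Q$ contributes measure zero only where needed. If that gets delicate, I would fall back on convexity of $Q$, which is the case used in the paper. For convex $Q$, $g$ is convex and $C^1$ off $Q$ with $\nabla g=(y-\proj_Q y)/g(y)$, so every step becomes elementary.
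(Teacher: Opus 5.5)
Your route is the paper's: the co-area formula for $g=\dist(\cdot,Q)$, $|\nabla g|=1$ off the set, and level sets identified with $\partial Q_\rho$. In one respect yours is cleaner. The paper uses the weight $e^{-bt}/\|\nabla\varphi\|$ and asserts $\|\nabla\varphi\|=1$ for a.e.\ $y\in\R^m$. That fails on the interior of $Q$, where $\varphi\equiv 0$, so the paper's formula $Z_{Q_r}=\int_0^r(\cdots)\,\D p$ silently drops $Z_Q$. Your decomposition $Z_{Q_r}=Z_Q+\int_0^r(\cdots)\,\D\rho$ is the correct statement. The constant does not affect the derivative, and $Z_{Q_0}=Z_Q$ is what the later integration by parts actually uses. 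Your finiteness caveat is also warranted.

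However, the reverse inclusion $\{g=\rho\}\subseteq\partial Q_\rho$ is argued incorrectly. Without convexity it is not true that $g>\rho$ along the ray from a nearest point through $y$. Take the annulus $Q=\{1\le\|y\|\le 2\}$. The origin has $g(0)=1$ and every point of the unit sphere is a nearest point. Moving away from any of them decreases $g$, and in fact $0$ is an interior point of $Q_1=\{\|y\|\le 3\}$. The discrepancy can even have positive $\mathcal H^{m-1}$-measure at isolated radii, e.g.\ $Q=[0,1]^{m-1}\times\{0,2\}$ at $\rho=1$. The paper asserts the same identification without proof, so this is a shared weak point rather than a departure.

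The repair only needs a.e.\ $\rho$:
\begin{itemize}
\item A point of $\{g=\rho\}\setminus\partial Q_\rho$ with $\rho>0$ is an interior point of $\{g\le\rho\}$ at which $g=\rho$, i.e.\ a local maximum of $g$ off $\overline Q$.
\item $g$ cannot be differentiable there, since you showed $|\nabla g|=1$ at every differentiability point off $\overline Q$. So all such points lie in the Lebesgue-null set $N$ of non-differentiability points of $g$.
\item Co-area with $\mathbf 1_N$ gives $\int_0^\infty\mathcal H^{m-1}(N\cap\{g=\rho\})\,\D\rho=\int_N|\nabla g|\,\D y=0$.
\end{itemize}
Hence your inner integrals equal $B_{Q_\rho}$ for a.e.\ $\rho$, which is all the conclusion needs. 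Alternatively, your convex fallback covers every use in the paper, since $Q$ and $\widetilde Q$ are convex. There the ray argument is valid because $\proj_Q$ is constant along the outward normal ray.
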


\begin{proof}
The co-area formula (e.g.,\ \cite[Thm.~3.2.12]{federer2014geometric}) says: if $\varphi:\R^{m}\to\R$ is Lipschitz and
$h\in L^{1}(\R^{m})$, then
\[
\int_{\R^{m}} h(y) |\nabla\varphi(y)| \D y
\;=\;
\int_{\R}\left(\int_{\{y:\varphi(y)= p\}} h(y) \D S^{m-1}(y)\right)\D p.
\]
For $y=(x,t)\in\R^{m-1}\times\R$, set $\varphi(y)=\dist (y,Q)$ and
$h(y)=\frac{e^{-bt}}{\|\nabla\varphi(y)\|}\mathbf 1_{\{y:\varphi(y)\le r\}}$.
Then, noting that $\{y:\dist (y,Q)\le r\}=Q_r$ and $\|\nabla \varphi(y)\| = \|\nabla \dist (y,Q)\|=1$ for $\mathcal{L}^{m}$ a.e.\ $y$, we have
\begin{equation}\label{eq:coarea-split}
Z_{Q_r} = \int_{\{y:\varphi(y)\le r\}} e^{-bt} \D y
=\int_{0}^{r}\left(\int_{\{y:\varphi(y)= p\}}
{e^{-bt}} \D S^{m-1}(y)\right)\D p.
\end{equation}
Thus, $Z_{Q_r}$ is absolutely continuous in $r$ and a.e. with respect to $r\geq 0$, 
\[
\frac{d}{\D r}Z_{Q_r}=\int_{\{y:\varphi(y)= r\}}
{e^{-bt}} \D S^{m-1}(y). 
\]
Moreover, we know $\partial Q_r=\{y:\dist (y,Q)= r\}$, so that~\eqref{eq:IrderivativeofQr} holds. 
\end{proof}


We record some elementary identities in the upcoming result. 
\begin{lemma} \label{lem:basicfacts}     
The following facts hold. 
\begin{enumerate}[label=\alph*)]
    \item Given $\eta>0$ and $\tau>0$, set $w_{\eta,\tau}(r):=\exp \left(-\frac{(r-\tau)^2}{2\eta}\right)$ and
let $C\in\R$. Then,
\begin{equation}\label{eq:kindofibp}
\int_{\tau}^{\infty} \frac{r-\tau}{\eta}e^{Cr}w_{\eta,\tau}(r) \D r
= C\int_{\tau}^{\infty} e^{Cr}w_{\eta,\tau}(r) \D r + e^{C\tau}.
\end{equation}
\item Let $a>0, \gamma\in\R$, $\psi:\R^d\to\R\cup\{+\infty\}$, and $F:\R^d\to[0,\infty]$ be measurable functions. Then,
\begin{equation}
    \label{eq:intoverQ}
   \int_{\R^d} F(x)\left(\int_{t\ge \psi(x)-\gamma} e^{-a t} dt\right)\D x
=\frac{e^{a\gamma}}{a}\int_{\R^d} F(x) e^{-a\psi(x)} \D x.  
\end{equation}
\end{enumerate}

\end{lemma}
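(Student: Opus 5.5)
Both identities in Lemma~\ref{lem:basicfacts} are elementary one-variable calculus facts. We verify them directly, with no earlier machinery needed beyond Tonelli's theorem for part (b).

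For part (a), we integrate by parts on $[\tau,\infty)$, using $\frac{\D}{\D r} w_{\eta,\tau}(r) = -\frac{r-\tau}{\eta}\,w_{\eta,\tau}(r)$. This gives
\[
\int_{\tau}^{\infty} \frac{r-\tau}{\eta}e^{Cr}w_{\eta,\tau}(r)\,\D r
= \Bigl[-e^{Cr}w_{\eta,\tau}(r)\Bigr]_{\tau}^{\infty} + C\int_{\tau}^{\infty} e^{Cr}w_{\eta,\tau}(r)\,\D r .
\]
At $r=\infty$ the boundary term vanishes, because the Gaussian factor $e^{-(r-\tau)^2/(2\eta)}$ dominates $e^{Cr}$ for any real $C$. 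At $r=\tau$ we have $w_{\eta,\tau}(\tau)=1$, so the boundary term contributes $+e^{C\tau}$, which yields \eqref{eq:kindofibp}. To make the integration by parts rigorous, we first note that all integrals converge absolutely, since each integrand is bounded by a polynomial times a Gaussian. We then apply integration by parts on $[\tau,T]$ and let $T\to\infty$.

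For part (b), fix $x$. If $\psi(x)=+\infty$, the inner set $\{t\ge \psi(x)-\gamma\}$ is empty, so the inner integral is $0$, and $e^{-a\psi(x)}=0$ as well, so both sides agree pointwise. Otherwise, since $a>0$,
\[
\int_{\psi(x)-\gamma}^{\infty} e^{-at}\,\D t = \frac{1}{a}e^{-a(\psi(x)-\gamma)} = \frac{e^{a\gamma}}{a}\,e^{-a\psi(x)} .
\]
We multiply this by $F(x)\ge 0$ and integrate over $\R^d$. Because everything is nonnegative, this holds in $[0,\infty]$ even when the integrals are infinite. Measurability of the inner integral as a function of $x$ follows from Tonelli's theorem applied to the nonnegative measurable function $F(x)e^{-at}\mathbf 1_{\{t\ge\psi(x)-\gamma\}}$.

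The only step needing care is the vanishing of the boundary term at infinity in part (a), together with absolute integrability; the Gaussian decay handles both. There is no real obstacle.
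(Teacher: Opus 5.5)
Your proof is correct and matches the paper's argument: in part (a) the paper writes $F_C(r)=e^{Cr}w_{\eta,\tau}(r)$, uses $\frac{r-\tau}{\eta}F_C=CF_C-F_C'$, and integrates over $[\tau,\infty)$ with $F_C(\infty)=0$ and $F_C(\tau)=e^{C\tau}$, which is your integration by parts; in part (b) it evaluates the inner exponential integral pointwise, exactly as you do. Your handling of the case $\psi(x)=+\infty$, the limit $T\to\infty$, and the Tonelli measurability remark are extra rigor that the paper leaves implicit.
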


\begin{proof}
Part~a is due to integration by parts. Define $F_C(r):=e^{Cr}w_{\eta,\tau}(r)$. Then
\[
F_C'(r)=Ce^{Cr}w_{\eta,\tau}(r)+e^{Cr}w_{\eta,\tau}'(r)
=\left(C-\tfrac{r-\tau}{\eta}\right)F_C(r),
\]
since $w_{\eta,\tau}'(r)=-(r-\tau)/\eta\;w_{\eta,\tau}(r)$.
Rearranging gives $\frac{r-\tau}{\eta}F_C(r)=C F_C(r)-F_C'(r)$.
Integrating over $[\tau,\infty)$ and using $F_C(r)\to 0$ as $r\to\infty$ and
$F_C(\tau)=e^{C\tau}$, we obtain the stated result \eqref{eq:kindofibp}.

Part~b is due to 
\[
    \int_{\R^d} F(x)\left(\int_{t\ge \psi(x)-\gamma} e^{-a t} dt\right)\D x
=\int_{\R^d} F(x) \left( \frac{1}{a}e^{-a(\psi(x)-\gamma)} \right) \D x \nonumber=\frac{e^{a\gamma}}{a}\int_{\R^d} F(x) e^{-a\psi(x)} \D x. 
\]
This completes the proof.
\end{proof}

At several locations in the main paper, we use a simple rejection-sampling scheme: propose from a tractable density proportional to $\exp(-\mathcal{P})$, and accept with the standard likelihood-ratio rule in order to target a density proportional to $\exp(-\Theta)$.
The following standard lemma records the resulting target law, the acceptance probability, and the average number of proposals.

\begin{lemma}\label{lem:rejection}  
Let $\Theta,\mathcal{P}:\R^{m}\to\R\cup\{+\infty\}$ be measurable and assume
\[
0<N_\Theta:=\int_{\R^m} e^{-\Theta(z)}  \D z<\infty,
\qquad
0<N_{\mathcal{P}}:=\int_{\R^m} e^{-\mathcal{P}(z)}  \D z<\infty,
\]
and that $\Theta(z)\ge \mathcal{P}(z)$ for all $z\in\R^m$.
Consider the rejection sampler that repeats the following trial until acceptance:
\begin{enumerate}
    \item draw a proposal $Z\in\R^m$ with density $k(z)=e^{-\mathcal{P}(z)}/N_{\mathcal{P}}$;
    \item draw $U\sim{\cal U}[0,1]$ independent of $Z$ and accept if and only if
    \begin{equation}\label{eq:generic-rejection-event}
        U\le \exp\left(-\Theta(Z)+\mathcal{P}(Z)\right).
    \end{equation}
\end{enumerate}
Then the output has density
\[
\pi(z)=\frac{e^{-\Theta(z)}}{N_\Theta}.
\]
Moreover, let $E$ denote the acceptance event \eqref{eq:generic-rejection-event} for a single trial and $F$ denote the number of trials until acceptance. Then, the acceptance probability and the average number of proposals are
\begin{equation}\label{eq:generic-accept-rate}
p:=\Pr(E)=\frac{N_\Theta}{N_{\mathcal{P}}},
\qquad
\mathbb{E}[F] =\frac{1}{p}=\frac{N_{\mathcal{P}}}{N_\Theta}.
\end{equation}
\end{lemma}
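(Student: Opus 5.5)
The plan is to prove Lemma~\ref{lem:rejection} by a direct computation with a single trial of the sampler, and then use the independence of successive trials. For one trial, $Z$ has density $k(z)=e^{-\mathcal{P}(z)}/N_{\mathcal{P}}$ and $U\sim\mathcal{U}[0,1]$ is independent of $Z$. The hypothesis $\Theta\ge\mathcal{P}$ gives $\exp(-\Theta(z)+\mathcal{P}(z))\in[0,1]$, so the acceptance test is well defined. On $\{\Theta=+\infty\}$ I will use the convention that the ratio equals $0$; there $e^{-\Theta}=0$, and such points are never accepted.

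\textbf{Step 1 (joint law of a trial).} For a measurable set $A\subseteq\R^m$, I condition on $Z$ and use $\Pr(U\le c)=c$ for $c\in[0,1]$. This gives
\[
\Pr(Z\in A,\,E)=\int_A k(z)\,e^{-\Theta(z)+\mathcal{P}(z)}\,\D z=\frac{1}{N_{\mathcal{P}}}\int_A e^{-\Theta(z)}\,\D z.
\]
Taking $A=\R^m$ yields $p=\Pr(E)=N_\Theta/N_{\mathcal{P}}$, which lies in $(0,1]$ because both constants are positive and finite.

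\textbf{Step 2 (output law).} Trials are i.i.d.\ copies of $(Z,U)$, so the output is distributed as $Z$ conditioned on $E$. Concretely, sum over the index $n$ of the first accepted trial:
\[
\Pr(\text{output}\in A)=\sum_{n\ge1}(1-p)^{n-1}\Pr(Z\in A,E)=\frac{\Pr(Z\in A,E)}{p}=\frac{1}{N_\Theta}\int_A e^{-\Theta(z)}\,\D z.
\]
The geometric series converges because $p>0$. This identifies the output density as $\pi(z)=e^{-\Theta(z)}/N_\Theta$.

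\textbf{Step 3 (number of trials).} The number of trials $F$ is geometric with success probability $p$, since the events $E$ in different trials are independent. Therefore $\E[F]=\sum_{n\ge1} n(1-p)^{n-1}p=1/p=N_{\mathcal{P}}/N_\Theta$.

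No step is a real obstacle. The only point needing care is the measure-theoretic handling of $\Theta=+\infty$ and the Fubini/conditioning in Step 1, which is justified because the integrand is nonnegative.
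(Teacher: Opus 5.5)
Your proposal is correct and follows essentially the same route as the paper: it computes the single-trial acceptance probability by conditioning on $Z$, identifies the output law as the law of $Z$ given acceptance, and notes that the number of trials is geometric with mean $1/p$. Your explicit sum over the index of the first accepted trial and your convention for $\Theta=+\infty$ are slightly more careful than the paper's direct Bayes-rule step, but the argument is the same.
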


\begin{proof}
For one trial, let $E$ denote the event \eqref{eq:generic-rejection-event} happens.
Since $\Theta\ge \mathcal{P}$, we have $0\le \Pr(E| Z=z)=\exp(-\Theta(z)+\mathcal{P}(z))\le 1$,
so the acceptance rule is well-defined.

We first compute the acceptance probability:
\begin{align*}
p=\Pr(E)
=\int_{\R^m}\Pr(E| Z=z) k(z)  \D z &=\int_{\R^m}\exp\left(-\Theta(z)+\mathcal{P}(z)\right)\cdot \frac{e^{-\mathcal{P}(z)}}{N_{\mathcal{P}}}  \D z \\
&=\frac{1}{N_{\mathcal{P}}}\int_{\R^m}e^{-\Theta(z)}  \D z
=\frac{N_\Theta}{N_{\mathcal{P}}}.
\end{align*}
Next, we compute the conditional density of the accepted proposal:
\begin{align*}
k(z| E)
&=\frac{\Pr(E| Z=z) k(z)}{\Pr(E)} =\frac{\exp\left(-\Theta(z)+\mathcal{P}(z)\right)\cdot e^{-\mathcal{P}(z)}/N_{\mathcal{P}}}{N_\Theta/N_{\mathcal{P}}}
=\frac{e^{-\Theta(z)}}{N_\Theta}.
\end{align*}
Thus the accepted proposal has density $\pi(z)\propto e^{-\Theta(z)}$, as claimed.

Finally, each trial is i.i.d. and succeeds with probability $p$, hence $F\sim\mathrm{Geom}(p)$ and
$\E[F]=1/p$.
\end{proof}


The following example illustrates the ground-set assumption in~\cite[beginning of Sec.~2.1]{kook2025algodiffusion} is not automatic under our standing assumptions (A1) and (A2) in Section~\ref{sec:constrained:mainpaper}. 
\begin{example}
\label{ex:groundset}
Fix $d\ge 4$ and take $K=2\ball_{d}(0)\subset\R^d$ and $f(x)=d^3\sum_{i=1}^d|x_i|$ on $K$. Then $V=f$ is convex and Lipschitz on $K$ and (A1)-(A2) holds. Moreover, we have  $\min_{x\in K}V(x)=0$. In the notation of \cite{kook2025algodiffusion},
\[
\mathsf L_{V,g}
=\left\{x:\ V(x)-\min_{x\in K}V(x)\le 10d \right\}
=\left\{x:\ \sum_{i=1}^d |x_i|\le \frac{10}{d^2}\right\}
\subseteq \left[-\frac{10}{d^2},\frac{10}{d^2}\right]^d,
\]
so $\mathsf L_{V,g}$ contains no unit ball. This violates the ground-set assumption in \cite{kook2025algodiffusion}. 
\end{example}
We have a similar example in the composite setting by taking $h\equiv 0$, $f(x):=d^3\sum_{i=1}^d |x_i|$ on $\R^d$, which satisfies our (B1) and (B2) but yields the same violation to the ground-set condition.

\section{Supporting Results and Proofs for Section \ref{sec:constrained:mainpaper}}

\subsection{Proof of Theorem~\ref{silif:theo:separation}}
\label{silif:appen:sepatheoproof}
 
 Regarding Part~a, the proof sketch in Section~\ref{sec:constrained:mainpaper} has provided all the details, so what remains is to show Part~b. Denote $\mu_k$ the distribution of $(y_k,s_k)$ for the first step of Algorithm \ref{silif:alg:ASF_pi} and $n_{y,s}$ the expected number of proposals conditioned on the RGO input $(y_k,s_k)$. The fact that $d\mu_k/d\Pi^{Y,S}\leq M$ from Lemma~\ref{silif:lem:warmstart} implies 
\begin{align}
\label{silif:sepawarmstartimplication}
	\E_{\mu_k}[n_{y,s}] \le M \E_{\Pi^{Y,S}}[n_{y,s}], 
\end{align}
so our focus will be on bounding $\E_{\Pi^{Y,S}}[n_{y,s}]$.
 
In Algorithm~\ref{silif:RGO}, Steps~2 and~3 are a rejection sampler where the true potential function $\Theta$ and its proposal $\mathcal{P}$ are respectively 
 \begin{equation}\label{eq:corr2}
      \Theta = \Theta^{\eta,Q}_{y,s}, \quad \mathcal{P} = \mathcal{P}_1
 \end{equation}
 in view of Lemma \ref{lem:rejection}, so that the latter result applies. It follows from $n_{y,s}=\E[F]$ in~\eqref{eq:generic-accept-rate} with the specification \eqref{eq:corr2} and inequality~\eqref{silif:ineq:P2-P3} in Lemma~\ref{silif:lem:compareP1P2} that
\begin{align}
\label{silif:for:averagerejectionnotwarmsepa}
 \E_{\Pi^{Y,S}} [n_{y,s}]
 &\stackrel{\eqref{eq:generic-accept-rate}}= \int_{\R^{d+1}} \frac{\int_{\R^{d+1}} \exp(-\mathcal{P}_1(x,t))  \D x  \D t}{\int_{\R^{d+1}} \exp(-\Theta^{\eta,Q}_{y,s}(x,t))  \D x  \D t} \Pi^{Y,S}(y,s) \D y  \D s\nn\\
 & \stackrel{\eqref{silif:ineq:P2-P3}}\leq \int_{\R^{d+1}} \frac{\int_{\R^{d+1}} \exp(-\mathcal{P}_2(x,t))  \D x  \D t}{\int_{\R^{d+1}} \exp(-\Theta^{\eta,Q}_{y,s}(x,t))  \D x  \D t} \Pi^{Y,S}(y,s) \D y  \D s .
\end{align}
Now via the definition of $\mathcal{P}_2$ at~\eqref{silif:def:P2} with $a=d$ and Part~b of Lemma~\ref{lem:gaussianint}, 
\begin{align}
\label{silif:sepa:boundintP3}
&\int_{\R^{d+1}} \exp(-\mathcal{P}_2(x,t))  \D x  \D t
\leq \exp\brac{\frac{1}{4}+\frac{2(L+d)^2}{d^2}}(2\pi\eta)^{(d+1)/2}\nn\\
&\cdot \exp\brac{-\frac{1}{2\eta}\left(\|z-\operatorname{proj}_Q(z)\| - \frac{2(L+d)}{d}\sqrt{\frac{2\eta}{d+1}}\right)^2 }
\cdot\exp\brac{\frac{d^2\eta}{2} - d s+\frac{16(L+d)^2}{d^2(d+1)} }, 
\end{align}
noting that $z=(y,s-d\eta)$. 
Moreover, applying Lemma~\ref{lem:gaussianint}(a) and the definition of $\Theta^{\eta,Q}_{y,s}$ in~\eqref{silif:def:Theta}, we have
\begin{align}
    \Pi^{Y,S}(y,s)\stackrel{\eqref{silif:def:bigpi}}=\frac{1}{(2\pi \eta)^{(d+1)/2} \int_{Q} \exp(- d t)\D x \D t} \int_Q \exp\brac{- d t-\frac{1}{2\eta}\norm{(x,t)-(y,s)}^2 }\D x \D t, \label{silif:for:piYS} \\
    \int_{\R^{d+1}} \exp(-\Theta^{\eta,Q}_{y,s}(x,t))  \D x  \D t \stackrel{\eqref{silif:def:Theta}}= \int_Q \exp\brac{- d t-\frac{1}{2\eta}\norm{(x,t)-(y,s)}^2 }\D x \D t. \label{eq:int-theta}
\end{align}
Plugging \eqref{silif:sepa:boundintP3}, \eqref{silif:for:piYS}, and \eqref{eq:int-theta} into \eqref{silif:for:averagerejectionnotwarmsepa}, we obtain
\begin{align}
\label{silif:separation:intermediate}
     &\E_{\Pi^{Y,S}} [n_{y,s}]
     \stackrel{\eqref{silif:for:averagerejectionnotwarmsepa}}\leq \frac{1}{\int_Q\exp(-dt) \D x \D t}\exp\brac{\frac{1}{4}+\frac{2(L+d)^2}{d^2}+\frac{16(L+d)^2}{d^2(d+1)}}\nonumber\\
     &\quad\cdot\int_{\R^{d+1}}\exp\brac{\frac{d^2\eta}{2} - ds }\exp\brac{-\frac{1}{2\eta}\left(\|z-\operatorname{proj}_Q(z)\| - \frac{2(L+d)}{d}\sqrt{\frac{2\eta}{d+1}}\right)^2 } \D y \D s.
\end{align}
Recall the notation $z_{d+1}=(y,s-d\eta)_{d+1}=s-d\eta$. Then
\begin{align}
  &\int_{\R^{d+1}}\exp\brac{\frac{d^2\eta}{2} - d s }\exp\brac{-\frac{1}{2\eta}\left(\|z-\operatorname{proj}_Q(z)\| - \frac{2(L+d)}{d}\sqrt{\frac{2\eta}{d+1}}\right)^2 } \D y \D s \nn\\
    =&\exp\brac{-\frac{d^2\eta}{2}}\int_{\R^{d+1}} \exp(-d z_{d+1}) 
    \exp\left(-\frac{\left(\dist(z,Q)- \tau\right)^2}{2\eta}\right)  \D z, \label{obs}
\end{align}
where
\begin{equation}\label{ineq:tau}
    \tau = \frac{2(L+d)}{d}\sqrt{\frac{2\eta}{d+1}} \leq \frac{3(L+d)}{d^{5/2}}.
\end{equation}
Applying Lemma~\ref{silif:lem:keytechnical} to \eqref{obs} with this $\tau$, plugging the resulting inequality into \eqref{silif:separation:intermediate}, and using $\eta=1/d^2$, we conclude that
\begin{align*}
 \E_{\Pi^{Y,S}} [n_{y,s}]
 &\stackrel{\eqref{ineq:lem5},\eqref{silif:separation:intermediate}}\leq \exp\brac{\frac{2(L+d)^2}{d^2}+\frac{16(L+d)^2}{d^2(d+1)} + \tau C_L}
 \cdot  \Biggl[ \frac{\sqrt{2\pi} C_L}{d}\exp \left(\frac{C_L^2}{2d^2}\right) + 1 \Biggr] \\
 &\stackrel{\eqref{ineq:tau}}\leq \exp\brac{\frac{2(L+d)^2}{d^2}+\frac{16(L+d)^2}{d^2(d+1)} + \frac{3C_L(L+d)}{d^{5/2}}} \cdot\Bigg[\frac{\sqrt{2\pi} C_L}{d}
\exp \Biggl(\frac{C_L^2}{2d^2}\Biggr)+1\Bigg].
\end{align*}
Therefore, the stated bound \eqref{silif:proofboundsepa} immediately follows from \eqref{silif:sepawarmstartimplication}.

Finally, recall that $C_L=\sqrt{d^2+L^2}+LR+d$ from~\eqref{silif:def:notation}, then assuming $L={\cal O}(\sqrt{d})$, $R={\cal O}(\sqrt{d})$, and $M={\cal O}(1)$, one can verify that $C_L={\cal O}(d)$ and \eqref{silif:proofboundsepa} becomes ${\cal O}(1)$.
\QEDA



\subsection{Missing proofs}
\label{silif:appen:missingproofs}
This appendix contains  proofs not available in Subsection~\ref{silif:sec:asf} and Section~\ref{sec:constrained:mainpaper} due to space limit.

\medskip
   \textbf{Proof of Theorem~\ref{silif:theo:outer}}:
  Per Lemma~\ref{silif:lem:warmstart}, the $M$-warm start condition~(A3) implies $\chi^2\brac{\pi_0||\pi}\leq M^2-1$. Then by applying $\chi^2(\mu P\|\pi)\le \frac{\chi^2(\mu\|\pi)}{(1+\eta/C_{\mathrm{PI}}(\pi))^2}$ in Proposition~\ref{prop:kook} iteratively,  we can solve for 
\[
        \frac{M^2}{\brac{1+\eta/C_{\mathrm{PI}}(\pi)}^{2k}} \leq \epsilon
\]
to get 
\[
k \ \ge\  \frac{\log\!\left(\frac{M^2}{\epsilon}\right)}{2\log\!\left(1+\eta/C_{\mathrm{PI}}(\pi)\right)}.
\]
In particular, using $\log(1+u)\ge \frac{u}{1+u}$ for $u>0$, it suffices to take
\[
k \ \ge\ \frac{C_{\mathrm{PI}}(\pi)+\eta}{2\eta}\,\log\!\left(\frac{M^2}{\epsilon}\right)
\ =\ \mathcal{O}\!\left(\frac{C_{\mathrm{PI}}(\pi)}{\eta}\log\!\left(\frac{M^2}{\epsilon}\right)\right).
\]
Moreover, Lemma~\ref{silif:lem:PIconstant} bounds $C_{\mathrm{PI}}(\pi)$. Therefore, we get the stated iteration complexity with respect to $\chi^2$ divergence in Theorem \ref{silif:theo:outer}.

The calculation for R\'{e}nyi divergence is along the same line with the use of the one-step contraction $\mathcal{R}_q(\mu P\|\pi)\le \frac{\mathcal{R}_q(\mu\|\pi)}{(1+\eta/C_{\mathrm{PI}}(\pi))^{2/q}}$ in Proposition~\ref{prop:kook} and Lemma~\ref{silif:lem:PIconstant}. In particular, the condition $M\leq e^{1-1/q}$ implies $\mathcal{R}_q\brac{\pi_0||\pi}\leq \frac{q}{q-1}\log M\leq 1$, so that the one-step contraction is applicable. This completes the proof. 
\QEDA


\medskip
\textbf{Proof of Lemma~\ref{silif:lem:keytechnical}:}    
Split the integral over $Q$ and $  Q^c$. For $w=(x,t) \in Q$, $\dist(w,Q)=0$, hence we have
\[
\int_Q e^{-a t}\exp \left(-\frac{(\dist(w,Q)-\tau)^2}{2\eta}\right)  \D w
=\exp \left(-\frac{\tau^2}{2\eta}\right)\int_Q e^{-a t}  \D w
\stackrel{\eqref{def:ZandB}}=Z_Q\exp \left(-\frac{\tau^2}{2\eta}\right),
\]
where the second relation is due to the definition of $Z_Q$ in \eqref{def:ZandB}.
On $Q^c$, applying the co-area formula, we obtain
\begin{align*}
&\int_{  Q^c} e^{-a t}\exp \left(-\frac{(\dist(w,Q)-\tau)^2}{2\eta}\right)  \D w
=\int_{0}^{\infty}\left(\int_{\{y:\dist(y,Q)=r\}}
e^{-a t}\exp \left(-\frac{(r-\tau)^2}{2\eta}\right)  \D S(y)\right)\D r\\
&\quad=\int_0^\infty \exp \left(-\frac{(r-\tau)^2}{2\eta}\right)  
\left(\int_{\{y:\dist (y,Q)=r\}} e^{-a t}  \D S(y)\right)\D r\stackrel{\eqref{def:ZandB}}=\int_0^\infty \exp \left(-\frac{(r-\tau)^2}{2\eta}\right)  B_{Q_r}  \D r,
\end{align*}
where the last identity is due to $\partial Q_r=\{y:\dist(y,Q)= r\}$ and the definition of $B_{Q_r}$ in \eqref{def:ZandB}.
Combining the above two relations yields 
\[ \int_{\R^{d+1}} e^{-a t}  
\exp \left(-\frac{(\dist(w,Q)-\tau)^2}{2\eta}\right)  \D w
=
Z_Q\exp \left(-\frac{\tau^2}{2\eta}\right)
+ \int_0^\infty \exp \left(-\frac{(r-\tau)^2}{2\eta}\right)  B_{Q_r}  \D r.\]
Then we can bound the integral on the right hand side by Lemma~\ref{silif:lem:envelope} to get
\begin{align*}
& \int_{\R^{d+1}} e^{-a t}  
\exp \left(-\frac{(\dist(w,Q)-\tau)^2}{2\eta}\right)  \D w\\
&\leq  Z_Q\left[\exp \left(-\frac{\tau^2}{2\eta}\right)+C_L   \sqrt{2\pi\eta}
\exp \left(\tau C_L + \frac{\eta C_L^2}{2}\right)+e^{C_L\tau}-1\right]. 
\end{align*}
 The stated estimate in Lemma~\ref{silif:lem:keytechnical} follows after applying the bound $\exp \left(-\frac{\tau^2}{2\eta}\right)\le 1$. 
\QEDA

\medskip\textbf{Proof of Lemma~\ref{silif:lem:envelope}:}   The proof consists of two steps. In Step 1, we construct an envelope set which contains $Q_r$ and is much easier to integrate on compared to $Q_r$; while in Step 2, we perform integration by parts (IBP) to obtain the desired bound. The reason we need IBP to bound the quantity $\int_{0}^{\infty} \exp \left(-\frac{(r-\tau)^2}{2\eta}\right)  
B_{Q_r}  \D r$ is that $B_{Q_r}$ as an integral on the boundary $\partial Q_r$ is quite hard to deal with; so via IBP, $B_{Q_r}$ is replaced by $Z_{Q_r}$, which is easier to handle.  

\smallskip
\emph{Step 1.}
Since $f$ is only assumed on $K$, we extend it to $\R^d$ as follows.
Let $\proj_K:\R^d\to K$ be the Euclidean projection onto the closed convex set $K$,
and define $\bar f(x):=f(\proj_K(x)),x\in\R^d$. Then $\bar f=f$ on $K$ and since the projection map onto the closed convex set $K$ is $1$-Lipschitz, $\bar f$ is $L$-Lipschitz on $\R^d$: 
\begin{equation}\label{ineq:phi-Lip}
    \abs{\bar f(x)-\bar f(y)} = \abs{f(\proj_K(x))-f(\proj_K(y))}\leq  L\abs{\proj_K(x)-\proj_K(y)}\leq L\norm{x-y}. 
\end{equation}
Recall the definition of $Q$ in \eqref{silif:def:Q}, we write $Q=S_0\cap S_1 \subset \R^{d+1}$ where $S_0:=K\times \R, S_1:=\{(x,t)\in\R^{d+1}:\ t\ge \bar f(x)/a\}$. Fix $r>0$. Since $(A\cap B)+H\subseteq (A+H)\cap(B+H)$ for any sets $A,B,H$, applying this observation to $A=S_0,B=S_1$ and $H=r\ball_{d+1}(0)$, we get
\begin{align}
\label{silif:setQr}
Q_r=(S_0\cap S_1)_r \subseteq (S_0)_r\cap (S_1)_r,
\end{align}
where $(S_0)_r = K_r\times \R$ and $K_r=K+r\ball_d(0)\subset\R^d$.
Considering $\phi=\bar f/a$, we note that $\phi$ is $(L/a)$-Lipschitz continuous in view of \eqref{ineq:phi-Lip} and $S_1=\mathrm{epi}(\phi)$.
Hence, applying Lemma~\ref{lem:parallelsetcontainment} with $\phi=\bar f/a$, we obtain 
\[
(S_1)_r \subseteq \left\{(x,t):\ t\ge \frac{\bar f(x)}{a}-r\sqrt{1+\frac{L^2}{a^2}}\right\}=\left\{(x,t):\ t\ge \frac{\bar f(x)}{a}-\frac{r}{a}C_f\right\},
\]
where $C_f$ is as in \eqref{silif:def:notation}.
Therefore, via \eqref{silif:setQr}, we conclude 
\begin{align}
\label{silif:envelopeinclusion}
Q_r\subseteq \mathcal E_r:=\left\{(x,t):x\in K_r, t\ge \frac{\bar f(x)}{a}-\frac{r}{a}C_f \right\}.
\end{align}
The definition of $Z_S$ for a set $S$ in \eqref{def:ZandB}, the inclusion in~\eqref{silif:envelopeinclusion}, and Lemma~\ref{lem:basicfacts}(b) (under the specification $F(x)=\mathbf{1}_{K_r}(x)$, $\psi(x)=\bar f(x)/a$, $\gamma=rC_f/a$) imply that
\begin{equation}\label{silif:firstbound_Zer}
    Z_{Q_r}\stackrel{\eqref{def:ZandB},\eqref{silif:envelopeinclusion} }{\le} Z_{\mathcal E_r}\stackrel{\eqref{def:ZandB}}{=}\int_{K_r}\int_{t\ge (\bar f(x)-rC_f)/a} e^{-a t}  \D t  \D x
\stackrel{\eqref{eq:intoverQ}}{=}\frac{e^{rC_f}}{a}\int_{K_r} e^{-\bar f(x)}  \D x. 
\end{equation}
Since $0\in K$ and $\ball_d (0)\subseteq K$, we have $r\ball_d (0)\subseteq rK$ and hence $K_r=K+r\ball_d (0)\subseteq K+rK=(1+r)K$. Let $\lambda:=1+r$ and $y:=x/\lam$. Then, we have
\begin{align}
\label{silif:firstboundintKr}
\int_{K_r} e^{-\bar f(x)}  \D x
\le \int_{\lambda K} e^{-\bar f(x)}  \D x
=\lambda^d\int_{K} e^{-\bar f(\lambda y)}  \D y.
\end{align}
It follows from \eqref{ineq:phi-Lip} with $(x,y)=(\lam y, y)$ and $\|y\|\le R$ for $y\in K$ that 
\[
\bar f(\lambda y) \ge \bar f(y)-L r\|y\|
\ge f(y)-LR  r,
\]
where we also use the fact that $\bar f(y) = f(y)$ for $y \in K$.
Plugging the above inequality into \eqref{silif:firstboundintKr} and using the fact that $\lambda^d=(1+r)^d\le e^{dr}$ for $r\ge 0$, we have
\begin{align}
\label{silif:secondboundintKr}
\int_{K_r} e^{-\bar f(x)}  \D x
\le \exp \left((LR+d)r\right)\int_{K} e^{-f(y)}  \D y
= e^{rC_K}\int_K e^{-f(y)}  \D y,
\end{align}
where $C_K$ is as in \eqref{silif:def:notation}.
Using the definition of $Q$ in \eqref{silif:def:Q} and Lemma~\ref{lem:basicfacts}(b) with specification $F(x)=\mathbf{1}_{K}(x)$, $\psi(x)=f(x)/a$, and $\gamma=0$, we have
\[
\frac{1}{a} \int_K e^{-f(y)} \D y \stackrel{\eqref{eq:intoverQ}}= \int_K \int_{t\ge f(y)/a} e^{-at} \D t  \D y \stackrel{\eqref{silif:def:Q}}= \int_{Q} e^{-a t} \D w \stackrel{\eqref{def:ZandB}}= Z_Q,
\]
where the last identity is due to \eqref{def:ZandB}.
Therefore, plugging the above relation and \eqref{silif:secondboundintKr} into \eqref{silif:firstbound_Zer}, we obtain 
\begin{equation}
\label{silif:lemmafirststep}
Z_{Q_r} \stackrel{\eqref{silif:firstbound_Zer},\eqref{silif:secondboundintKr}}\le \frac{e^{r(C_f+C_K)}}{a} \int_K e^{-f(y)}  \D y
=e^{r(C_f+C_K)}  Z_Q
\stackrel{\eqref{silif:def:notation}}=e^{rC_L}  Z_Q,
\end{equation}
where $C_L$ is defined in \eqref{silif:def:notation}.

\emph{Step 2.} Set $w_{\eta,\tau}(r):=\exp \left(-(r-\tau)^2/(2\eta)\right)$.
Since $w_{\eta,\tau}(r)$ decays exponentially, the conclusion~\eqref{silif:lemmafirststep} of Step 1 implies that $Z_{Q_r}w_{\eta,\tau}(r)\to0$ as $r\to\infty$.
Recall from Lemma~\ref{lem:coarea} that
$\frac{\D}{\D r}Z_{Q_r}=B_{Q_r}$ a.e.\ $r$.
Hence, integrating by parts yields
\begin{align}\label{silif:eq:IBP-master}
&\int_{0}^{\infty} B_{Q_r}   w_{\eta,\tau}(r)  \D r
\stackrel{\eqref{eq:IrderivativeofQr}}=\int_{0}^{\infty} Z_{Q_r}'   w_{\eta,\tau}(r)  \D r
= Z_{Q_r}w_{\eta,\tau}(r)\bigg|_{0}^{\infty}
-\int_{0}^{\infty} Z_{Q_r}   w_{\eta,\tau}'(r)  \D r\nonumber\\
\quad&= -Z_Q e^{-\tau^2/(2\eta)}
+ \int_{0}^{\infty} Z_{Q_r}  \frac{r-\tau}{\eta}  w_{\eta,\tau}(r)  \D r. 
\end{align}
We will bound the second term on the right hand side of~\eqref{silif:eq:IBP-master} using the fact that $Z_Q\leq Z_{Q_r}$ and \eqref{silif:lemmafirststep},  
\begin{align}
    \int_{0}^{\infty} Z_{Q_r}  \frac{r-\tau}{\eta}  w_{\eta,\tau}(r)  \D r &= \int_0^\tau Z_{Q_r}  \frac{r-\tau}{\eta}  w_{\eta,\tau}(r)  \D r + \int_\tau^\infty Z_{Q_r}  \frac{r-\tau}{\eta}  w_{\eta,\tau}(r)  \D r \nn \\
    &\stackrel{\eqref{silif:lemmafirststep}}\leq Z_Q\int_0^\tau \frac{r-\tau}{\eta}    w_{\eta,\tau}(r)  \D r+ Z_Q\int_\tau^\infty \frac{r-\tau}{\eta}  e^{rC_L}  w_{\eta,\tau}(r)  \D r.\label{ineq:ZQ}
\end{align}
In view of the definition of $w_{\eta,\tau}(r)$, it is easy to compute
\[
\int_0^\tau \frac{r-\tau}{\eta}    w_{\eta,\tau}(r) \D r = - w_{\eta,\tau}(r) \bigg|_{0}^{\tau} 
    = - 1 + e^{-\tau^2/(2\eta)}.
\]
Combining the above observation, \eqref{silif:eq:IBP-master}, and \eqref{ineq:ZQ}, and using Lemma~\ref{lem:basicfacts} with $C=C_L$, we have
\begin{align}
     \int_{0}^{\infty} B_{Q_r}   w_{\eta,\tau}(r)  \D r & \stackrel{\eqref{silif:eq:IBP-master},\eqref{ineq:ZQ}}\leq -Z_Q + Z_Q\int_\tau^\infty \frac{r-\tau}{\eta}  e^{rC_L}  w_{\eta,\tau}(r)  \D r \nn \\
     &\stackrel{\eqref{eq:kindofibp}} \le Z_Q\left(e^{C_L\tau}-1\right)+C_L Z_Q\int_{\tau}^{\infty} e^{rC_L}w_{\eta,\tau}(r)  \D r. \label{finalboundintBQr}
\end{align}
Finally, using the definition of $w_{\eta,\tau}(r)$ and noting 
\[
C_L r-\frac{(r-\tau)^2}{2\eta}
=-\frac{1}{2\eta}\left(r-(\tau+\eta C_L)\right)^2+\tau C_L+\frac{\eta C_L^2}{2},
\]
we obtain 
\[
\int_{\tau}^{\infty} e^{rC_L}w_{\eta,\tau}(r)  \D r
\le e^{\tau C_L+\frac{\eta C_L^2}{2}}
\int_{-\infty}^{\infty}\exp \left(-\frac{(r-(\tau+\eta C_L))^2}{2\eta}\right)  \D r=\sqrt{2\pi\eta}\;\exp \left(\tau C_L+\frac{\eta C_L^2}{2}\right),
\] 
where the identity is due to Lemma \ref{lem:gaussianint}(a).
Therefore, the conclusion immediately follows from \eqref{finalboundintBQr}.
\QEDA

\subsection{Supporting lemmas}
\label{silif:appen:supportlem}
We collect supporting lemmas for Section~\ref{sec:constrained:mainpaper} below. First, we show how to bound the PI constant of the lifted distribution~\eqref{silif:def:pi}.

\begin{lemma}
\label{silif:lem:PIconstant}
    Assume the distribution $\pi$ at \eqref{silif:def:pi} has $a=d$, $f$ is a convex function and $K$ is a convex set. Then, it satisfies a PI and the PI constant is bounded as $  C_{\mathrm{PI}}(\pi)\leq C\log (d+1)\brac{\opnorm{\cov{\pi^X}}+1 }$ for a universal constant $C$. 
\end{lemma}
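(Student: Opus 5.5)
We plan to bound the Poincaré constant of $\pi$ through a covariance bound, using the KLS-type result for log-concave measures. The measure $\pi(x,t)\propto e^{-dt}\mathbf 1_Q(x,t)$ is log-concave on $\R^{d+1}$, since $Q$ is convex (by convexity of $K$ and $f$) and $e^{-dt}$ is log-linear. By the current best bound on the KLS constant (Klartag's result), every log-concave measure $\mu$ on $\R^n$ satisfies
\[
C_{\mathrm{PI}}(\mu)\le C\log n\,\opnorm{\cov{\mu}}.
\]
With $n=d+1$ we obtain $C_{\mathrm{PI}}(\pi)\le C\log(d+1)\opnorm{\cov{\pi}}$. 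It then remains to show $\opnorm{\cov{\pi}}\le C'(\opnorm{\cov{\pi^X}}+1)$.

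To bound the covariance of $(X,T)\sim\pi$, decompose the lifted variable. Conditioned on $X=x$, the law of $T$ is $f(x)/d+E/d$ with $E\sim\mathrm{Exp}(1)$ independent of $X$. Hence $T=f(X)/d+E/d$, and for a unit vector $(u,c)\in\R^{d+1}$,
\[
\Var(\langle u,X\rangle+cT)\le 3\Var(\langle u,X\rangle)+3c^2\Var(f(X))/d^2+3c^2/d^2.
\]
The first term is at most $3\opnorm{\cov{\pi^X}}$ and the last is $O(1)$. The key step is to control $\Var_{\nu}(f)$ by $O(d^2)$, and in fact by $O(d)$. Here we use the classical fact that for a log-concave density $e^{-f}$ restricted to a convex set (i.e. $e^{-f-I_K}$ on $\R^d$), the variance of the potential satisfies $\Var_\nu(f+I_K)\le d$. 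This is the thin-shell/entropy fact of Nguyen and of Fradelizi et al., with $I_K=0$ on the support. It gives $c^2\Var(f(X))/d^2\le 1/d$, and the scaling $a=d$ is exactly what makes this term harmless. Combining the three bounds yields $\opnorm{\cov{\pi}}\le 3\opnorm{\cov{\pi^X}}+O(1)$, and together with the KLS bound this proves the lemma. Note $\pi^X=\nu$.

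The main obstacle is the variance bound $\Var_\nu(f)\le d$, specifically verifying its hypotheses in the constrained setting. We will cite it for log-concave densities $e^{-V}$ with $V=f+I_K$ convex and lower semicontinuous. If needed, we will prove it directly by Brascamp–Lieb applied to the one-parameter family $e^{-\beta V}$: the log-partition function $\log\int e^{-\beta V}$ has second derivative $\Var_{\beta}(V)$, and the $\beta^{-d}$-type behaviour of $\beta^{d}\int e^{-\beta V}$ (which is log-concave in $\beta$) gives $\Var_1(V)\le d$. The KLS bound itself is invoked as a black box.
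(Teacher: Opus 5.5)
Your proposal is correct. Its first step coincides with the paper's: both apply Klartag's bound $C_{\mathrm{PI}}(\pi)\le C'\log(d+1)\opnorm{\cov{\pi}}$ to the log-concave lifted measure on $\R^{d+1}$.

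The routes differ in the covariance comparison. The paper cites \cite[Lemma~2.5]{kook2025algodiffusion} as a black box, giving $\opnorm{\cov{\pi}}\le 2(\opnorm{\cov{\pi^X}}+160)$. You instead derive the comparison directly. You use the exact representation $T=(f(X)+E)/d$ with $E\sim\mathrm{Exp}(1)$ independent of $X$; this holds because the conditional law of $dT-f(X)$ given $X=x$ does not depend on $x$. You combine it with the dimensional varentropy bound $\Var_\nu(f)\le d$ of Nguyen and Fradelizi--Madiman--Wang.

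What your route buys:
\begin{itemize}
\item It is self-contained apart from that classical bound.
\item It shows why the scaling $a=d$ matters: for general $a$ the $T$-variance is only bounded by $(d+1)/a^2$.
\item It gives a much smaller additive constant, $O(1/d)$ instead of $320$.
\end{itemize}
The paper's route is shorter but outsources exactly this step.

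One small correction to your fallback argument. The log-concavity of $\beta\mapsto\beta^d\int e^{-\beta V}$ is not a Brascamp--Lieb consequence; it follows from Pr\'ekopa's theorem. Indeed $\beta^d\int e^{-\beta V(x)}\D x=\int e^{-\beta V(x/\beta)}\D x$, and the perspective $(x,\beta)\mapsto\beta V(x/\beta)$ is jointly convex. This also covers the constraint, since $\beta I_K(x/\beta)=I_{\beta K}(x)$. Calling $\Var_\nu(f)\le d$ a ``thin-shell'' fact is a misnomer (it is a concentration-of-information-content bound), though this does not affect the argument.
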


\begin{proof}   Note that $\pi(x,t)\propto \exp\brac{-dt-I_Q(x,t)}$ is a log-concave measure on $\R^{d+1}$. It is a well-known fact \cite{kannan1995isoperimetric,bobkov1999isoperimetric} that any log-concave measure satisfies a PI with finite PI constant. Then per \cite[Remark 7.12]{klartag2025lecturenoteisoperimetric} (see also \cite{klartag2023logarithmic}), we know that $ C_{\mathrm{PI}}(\pi)\leq C'\log (d+1) \opnorm{\cov{\pi}}$
for a universal constant $C'$. Moreover, \cite[Lemma 2.5]{kook2025algodiffusion} says $\opnorm{\cov{\pi}}\leq 2\brac{\opnorm{\cov{\pi^X}}+160 }$ which leads to the desired estimate. 
\end{proof}

\textit{Remark:}
    A long line of works \cite{cheeger2015lower,kannan1995isoperimetric,lee2024eldan,chen2021almost,klartag2022bourgain,klartag2023logarithmic} establish $C_{\mathrm{PI}}(\nu)=  \mathcal{O}\brac{\opnorm{\cov {\nu}}\log d}$ for a general log-concave distribution $\nu$. This matches, up to constants, the order of the bound on $C_{\mathrm{PI}}(\pi)$ in Lemma~\ref{silif:lem:PIconstant}.

The next lemma shows that warm starts are preserved by the lifting and remain valid throughout the iterations of the proximal sampler~\eqref{silif:alg:ASF_pi}. 

\begin{lemma}\label{silif:lem:warmstart} 
An $M$-warm start $\nu_0$ for $\nu=\pi^X$ (condition~(A3)) induces an $M$-warm start $\pi_0$ for $\pi$, i.e., $d\pi_0/d\pi\le M$. Moreover, if $\pi^{Y,S}$ denotes the law of $(Y,S)$ after Step~1 of Algorithm~\ref{silif:alg:ASF_pi}, and $\pi_k$ denotes the law of $(x_k,t_k)$ in Algorithm~\ref{silif:alg:ASF_pi}, then $d\pi^{Y,S}/d\Pi^{Y,S}\le M$ and $d\pi_k/d\Pi^{X,T}\le M$, so warmness holds for every step of Algorithm~\ref{silif:alg:ASF_pi}.
\end{lemma}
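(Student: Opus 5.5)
The plan is a direct density-ratio computation for the initial lift, followed by the observation that Markov kernels which leave their reference measure invariant cannot increase the sup-norm of a density ratio.

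\emph{Warm start for $\pi_0$.} By construction $\pi_0(x,t)=\nu_0(x)\,\pi^{T\mid X}(t\mid x)$, where $\pi^{T\mid X}(t\mid x)=a e^{f(x)-at}\mathbf 1_{\{t\ge f(x)/a\}}$. The computation in the introduction gives $\pi^X=\nu$, so we can write $\pi(x,t)=\nu(x)\,\pi^{T\mid X}(t\mid x)$ with the same conditional. Taking the ratio, the conditional cancels on $Q$:
\[
\frac{d\pi_0}{d\pi}(x,t)=\frac{d\nu_0}{d\nu}(x)\le M \quad \text{for a.e. } (x,t)\in Q .
\]
Outside $Q$ both $\pi_0$ and $\pi$ vanish, because $\nu_0\ll\nu$ forces $x_0\in K$ a.s. and $t_0\ge f(x_0)/a$ by construction. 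This also gives $\pi_0\ll\pi$.

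\emph{Preservation through the Gaussian step.} Let $\rho=\pi_k$ and suppose $d\rho/d\pi\le M$, with $\Pi^{X,T}=\pi$. The law after Step 1 has density
\[
\pi^{Y,S}(y,s)=\int \rho(w)\,\Pi^{Y,S\mid X,T}(y,s\mid w)\,dw .
\]
Bounding $\rho\le M\pi$ inside the integral gives
\[
\pi^{Y,S}(y,s)\le M\int \pi(w)\,\Pi^{Y,S\mid X,T}(y,s\mid w)\,dw = M\,\Pi^{Y,S}(y,s),
\]
since the joint $\Pi$ has $X,T$-marginal $\pi$.

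\emph{Preservation through the RGO step.} Step 2 samples from the exact conditional $\Pi^{X,T\mid Y,S}$, which Algorithm~\ref{silif:RGO} realizes exactly by the remark following it. The same pointwise argument then gives
\[
\pi_{k+1}(w)=\int \pi^{Y,S}(z)\,\Pi^{X,T\mid Y,S}(w\mid z)\,dz\le M\int \Pi^{Y,S}(z)\,\Pi^{X,T\mid Y,S}(w\mid z)\,dz=M\,\pi(w).
\]

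\emph{Conclusion.} Induction on $k$, starting from the base case $\pi_0$, yields $d\pi_k/d\Pi^{X,T}\le M$ for all $k$. By the Gaussian-step argument, the law of each $(y_k,s_k)$ is then also $M$-warm with respect to $\Pi^{Y,S}$.

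There is no real obstacle here. The only care needed is measure-theoretic bookkeeping: establishing absolute continuity (densities exist because the Gaussian kernels are absolutely continuous) and handling the null set off $Q$. I expect the base case, specifically verifying that the $t$-conditionals match exactly, to be the only step that needs explicit checking.
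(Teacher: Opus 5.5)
Your proof is correct and follows essentially the same route as the paper. The base case is the same cancellation of the conditional $\pi^{T\mid X}$ in the ratio $d\pi_0/d\pi$, and the propagation step rests on the same fact: pushing $\rho\le M\pi$ through the Gaussian kernel, and then through the exact RGO conditional, preserves the bound against the corresponding marginal of $\Pi$. The differences are only presentational. You argue with pointwise densities where the paper uses a set-wise convolution computation, and you write out the RGO step and the induction over $k$ that the paper compresses into ``a similar argument''.
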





\begin{proof}    For the first part of the lemma, we recall the idea in \cite[Theorem 2.15]{kook2025algodiffusion}: generate $x\sim \nu_0$ and $t\sim \pi^{T|X=x}\propto e^{-at}\mathbf{1}_Q$. 
\begin{itemize}
    \item In particular, one way to construct an initialization $\nu_0$ that is $M$-warm with respect to $\nu$ in~\eqref{silif:def:nu} is to use the Gaussian cooling technique of~\cite[Section~3]{kook2025algodiffusion}, which extends the framework of~\cite{cousins2018gaussian} from generating warm starts for the uniform distribution to generating warm starts for general log-concave targets. We note that by condition~(A3), $\nu_0\ll \nu$. Since $\nu$ is supported on $K$, it follows that $\nu_0(\R^d\setminus K)=0$, and hence $x\in K$ almost surely under $x\sim \nu_0$. 
    \item Meanwhile, $\pi^{T|X=x}$ is a one-dimensional distribution and can be generated by sampling $u\sim U[0,1]$ and let $t=F^{-1}(u)$ where $F(t)$ is the distribution of $\pi^{T=t|X=x}$ and equals $1- e^{f(x)-at}$ almost surely with respect to $x$.
\end{itemize}
Denoting the law of $(x,t)$ by $\pi_0$, then $\pi_0$ is $M$-warm with respect to $\pi$ since 
\[
\frac{\D \pi_0}{\D\pi}(x,t)
=\frac{\left(\frac{\D \nu_0}{\D x}\right)(x)\,\pi^{T\mid X=x}(t)}
{\left(\frac{\D \nu}{\D x}\right)(x)\,\pi^{T\mid X=x}(t)}
=\frac{\D\nu_0}{\D\nu}(x)
\le M.\]
    For the second part of the lemma, assume any $U\subseteq \R^{d+1}$. For $(y,s)\in \R^{d+1}$, set $U-(y,s)=\{(x,t)\in \R^{d+1}:(x,t)+(y,s)\in U \}$. Denote $\frak{g}(\cdot)$ the density of $\mathcal{N}(0,\eta I_{d+1})$. Then, using $\D \pi_0/\D \pi=\D \pi_0/\D \Pi^{X,T}\leq M$, $\pi^{Y,S} = \pi_0 * \frak{g}$, and $\Pi^{Y,S} = \Pi^{X,T} * \frak{g}$, we have
\begin{align*}
    \pi^{Y,S}(U)= \int_{\R^{d+1}} \pi_0(U-y) \frak{g}(y,s)\D y\D s&=\int_{\R^{d+1}} \left( \int_{U-(y,s)} \frac{\D \pi_0}{\D \Pi^{X,T}}(x,t)  \Pi^{X,T}(\D x,\D t) \right) \frak{g}(y,s) \D y\D s\\
    &\leq M\int_{\R^{d+1}} \Pi^{X,T}(U-(y,s))\frak{g}(y,s) \D y\D s = M\Pi^{Y,S}(U).
\end{align*}
A similar argument will give $\frac{\D\pi_k}{\D\Pi^{X,T}}\leq M$. Thus, we can conclude the warmness holds for every step of Algorithm \ref{silif:alg:ASF_pi}.
\end{proof}

In the upcoming result, we ensure the acceptance test at \eqref{silif:event:separation} is well-defined.
\begin{lemma}\label{silif:lem:compareThetaP1}  
Recall $z=(y,s-a\eta)$ defined in~\eqref{silif:def:notation}, $\tilde{w}$ defined in Algorithm~\ref{silif:RGO}, and the function $\mathcal{P}_1$ in~\eqref{silif:def:P1}. Assuming~\eqref{silif:difference:tildextildet:mainpaper} holds, then we have for every $w\in \R^{d+1}$,
\[
    \mathcal{P}_1(w)\leq \Theta_{y,s}^{\eta,Q}(w),
\]
and hence the acceptance test \eqref{silif:event:separation} is well-defined.

\end{lemma}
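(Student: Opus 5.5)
The plan is to show $\Theta^{\eta,Q}_{y,s}(w)\ge \mathcal{P}_1(w)$ pointwise. Off $Q$ there is nothing to prove, since $\Theta=+\infty$. So we fix $w\in Q$. By \eqref{silif:def:Theta}, on $Q$ we have $\Theta^{\eta,Q}_{y,s}(w)=\frac{\|w-z\|^2}{2\eta}+as-\frac{a^2\eta}{2}$. The constant $as-a^2\eta/2$ appears identically in $\mathcal{P}_1$, so it cancels. It then suffices to prove
\[
\frac{\|w-z\|^2}{2\eta}\ \ge\ \frac{\|w-\tilde w\|^2+\|\tilde w-z\|^2}{2\eta}-\frac{\sqrt2(L+d)}{a\sqrt{\eta(d+1)}}\bigl(\|w-\tilde w\|+\|\tilde w-z\|\bigr)-\frac{6(L+d)^2}{a^2(d+1)}.
\]

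The key step is a three-point inequality at the exact projection $w^*:=\proj_Q(z)$. Because $Q$ is convex, the variational inequality $\langle z-w^*, w-w^*\rangle\le 0$ holds for every $w\in Q$. This gives $\|w-z\|^2\ge \|w-w^*\|^2+\|w^*-z\|^2$. Next we replace $w^*$ by $\tilde w$, writing $\epsilon:=\|\tilde w-w^*\|\le (1+L/a)\sqrt{2\eta/(d+1)}$ by \eqref{silif:difference:tildextildet:mainpaper}.

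By the triangle inequality, $\|w-w^*\|\ge \|w-\tilde w\|-\epsilon$ and $\|w^*-z\|\ge\|\tilde w-z\|-\epsilon$. Squaring needs care when the right-hand sides are negative. The clean route is to use $(u-\epsilon)^2\ge u^2-2\epsilon u$, which holds for all $u\ge0$ and also when $u-\epsilon<0$, because $(u-\epsilon)^2\ge0\ge u^2-2\epsilon u$ whenever $u\le 2\epsilon$. We also need $\|w-w^*\|^2\ge ([\|w-\tilde w\|-\epsilon]_+)^2\ge \|w-\tilde w\|^2-2\epsilon\|w-\tilde w\|$. Combining the two squared bounds gives
\[
\|w-z\|^2\ \ge\ \|w-\tilde w\|^2+\|\tilde w-z\|^2-2\epsilon\bigl(\|w-\tilde w\|+\|\tilde w-z\|\bigr).
\]

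Dividing by $2\eta$, the linear coefficient becomes $\epsilon/\eta\le \frac{(a+L)\sqrt2}{a\sqrt{\eta(d+1)}}$. This is at most $\frac{\sqrt2(L+d)}{a\sqrt{\eta(d+1)}}$ in the regime $a\le d$, which covers $a=d$. If $a>d$ were intended, I would instead track $(a+L)$; the statement as used has $a=d$. The extra constant $-\frac{6(L+d)^2}{a^2(d+1)}$ in $\mathcal{P}_1$ only lowers $\mathcal{P}_1$, so it provides additional slack. This slack would absorb an $\epsilon^2/\eta$ term if one instead used the cruder expansion $(u-\epsilon)^2\ge u^2-2\epsilon u+\epsilon^2$ with cross-term estimates, since $\epsilon^2/\eta\le 2(L+d)^2/(a^2(d+1))$.

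The main obstacle is the sign issue in squaring the triangle-inequality bounds, and I handle it with the elementary inequality $(u-\epsilon)_+^2\ge u^2-2\epsilon u$. Once $\mathcal{P}_1\le\Theta$ holds, the acceptance ratio $\exp(-\Theta+\mathcal{P}_1)$ lies in $[0,1]$, so the test \eqref{silif:event:separation} is well-defined.
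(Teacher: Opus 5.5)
Your proof is correct and follows the same route as the paper. The paper also passes through the exact projection, showing $\Theta^{\eta,Q}_{y,s}\ge\mathcal{P}_0$, and then invokes the triangle inequality with \eqref{silif:difference:tildextildet:mainpaper}. The one difference is in the squaring step. The paper squares the upper bound $\|w-\tilde w\|\le\|w-\proj_Q(z)\|+\epsilon$ and reinserts the triangle inequality in the cross term, which costs $3\epsilon^2$ per term and is where the constant $\frac{6(L+d)^2}{a^2(d+1)}$ in $\mathcal{P}_1$ comes from. Your inequality $([u-\epsilon]_+)^2\ge u^2-2\epsilon u$ avoids that term, so you get a slightly sharper bound. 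You are also right that matching $(a+L)/a$ to the coefficient $(L+d)/a$ requires $a\le d$. The paper's proof uses this implicitly, and it holds for the choice $a=d$ used throughout.
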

\begin{proof}
Set 
\begin{equation}
\label{silif:def:P0}
    \mathcal{P}_0(w)=\frac{1}{2\eta}\brac{\norm{\proj_Q(z)-z}^2+\norm{w - \proj_Q(z)}^2}  - \frac{a^2\eta}{2} + a s. 
\end{equation}
Then it follows from \eqref{silif:def:Theta} and \eqref{silif:optimizingThetaasprojection} that
    \begin{align}
    \label{silif:ineq:compareThetaP1}
    \Theta^{\eta,Q}_{y,s}(w) &\stackrel{\eqref{silif:def:Theta}}=I_Q(w) +\frac{1}{2\eta}\norm{w-z}^2- \frac{a\eta^2}{2} + a s\nn\\
        & \stackrel{\eqref{silif:optimizingThetaasprojection}}\geq \frac{1}{2\eta}\norm{\proj_Q(z)-z}^2 +\frac{1}{2\eta}\norm{w - \proj_Q(z)}^2- \frac{a\eta^2}{2} + a s\stackrel{\eqref{silif:def:P0}}=\mathcal{P}_0(w).
    \end{align}
It follows from the triangle inequality and \eqref{silif:difference:tildextildet:mainpaper} that for every $w\in \R^{d+1}$, 
\begin{align}
\label{silif:estimatewihoutsquare}
    &\norm{w-\tilde{w}}\leq \norm{w-\proj_Q(z)}+\norm{\proj_Q(z)-\tilde{w}} \stackrel{\eqref{silif:difference:tildextildet:mainpaper}}\leq \norm{w-\proj_Q(z)}+\brac{1+\frac{L}{a}}\sqrt{\frac{2\eta}{d+1}}
\end{align}
    Taking the square, and applying the triangle inequality, we have
    \begin{align*}
        &\norm{w-\tilde{w}}^2\\
        &\leq \norm{w-\proj_Q(z)}^2+2\norm{w-\proj_Q(z)}\brac{1+\frac{L}{a}}\sqrt{\frac{2\eta}{d+1}}+\brac{1+\frac{L}{a}}^2\frac{2\eta}{d+1}\\
        &\leq \norm{w-\proj_Q(z)}^2+2\brac{\norm{w-\tilde{w}} +\norm{\tilde{w}- \proj_Q(z)}}\brac{1+\frac{L}{a}}\sqrt{\frac{2\eta}{d+1}}+\brac{1+\frac{L}{a}}^2\frac{2\eta}{d+1}\\
        &\stackrel{\eqref{silif:difference:tildextildet:mainpaper}}{\leq} \norm{w-\proj_Q(z)}^2+2\norm{w-\tilde{w}}\brac{1+\frac{L}{a}}\sqrt{\frac{2\eta}{d+1}}+\brac{1+\frac{L}{a}}^2\frac{6\eta}{d+1}. 
    \end{align*}
This can be rearranged as 
\begin{align}
\label{silif:firstestimate}
    \norm{w-\tilde{w}}^2-2\norm{w-\tilde{w}}\brac{1+\frac{L}{a}}\sqrt{\frac{2\eta}{d+1}}-\brac{1+\frac{L}{a}}^2\frac{6\eta}{d+1}\leq \norm{w-\proj_Q(z)}^2. 
\end{align}
Taking $w=z$ in \eqref{silif:firstestimate}, we obtain
\begin{align}
    \label{silif:secondestimate}
      &\norm{z-\tilde{w}}^2-2\norm{z-\tilde{w}}\brac{1+\frac{L}{a}}\sqrt{\frac{2\eta}{d+1}}-\brac{1+\frac{L}{a}}^2\frac{6\eta}{d+1}\leq \norm{z-\proj_Q(z)}^2. 
\end{align}
In view of \eqref{silif:def:P1} and \eqref{silif:def:P0}, combining \eqref{silif:firstestimate} and \eqref{silif:secondestimate} leads to $\mathcal{P}_0(w)\geq \mathcal{P}_1(w),\forall w$. 
It further follows from \eqref{silif:ineq:compareThetaP1} that
\[
    \mathcal{P}_1(w)\leq \mathcal{P}_0(w)\leq \Theta_{y,s}^{\eta,Q}(w), \forall w. 
\]
This completes the proof. 
\end{proof}

In the upcoming lemma, we construct an auxiliary function  $\mathcal{P}_2$ that will be useful in the rejection analysis of Algorithm~\ref{silif:RGO}. 
\begin{lemma}
\label{silif:lem:compareP1P2}  
 Recall $z=(y,s-a\eta)$ defined in~\eqref{silif:def:notation} and $\tilde{w}$ defined in Algorithm~\ref{silif:RGO}. For $w\in \R^{d+1}$, define the real-valued function 
\begin{align}
\label{silif:def:P2}
\mathcal{P}_2(w)
&= \frac{1}{2\eta}\Bigg[
\left(\|w-\tilde{w}\| - \left(1+\frac{L}{a}\right)\sqrt{\frac{2\eta}{d+1}}\right)^2 \nonumber\\
&
+ \left(\|z-\operatorname{proj}_Q(z)\| - 2\left(1+\frac{L}{a}\right)\sqrt{\frac{2\eta}{d+1}}\right)^2
- 32\left(1+\frac{L}{a}\right)^2\frac{\eta}{d+1}
\Bigg] -\frac{a^2\eta}{2} + a s.
\end{align}

Then, recall $\mathcal{P}_1$ at \eqref{silif:def:P1} and assume ~\eqref{silif:difference:tildextildet:mainpaper} holds, then
\begin{equation}\label{silif:ineq:P2-P3}
    \mathcal{P}_1(w)\geq \mathcal{P}_2(w), \quad \forall w\in \R^{d+1}.
\end{equation}
\end{lemma}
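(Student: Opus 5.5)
The plan is a direct algebraic comparison of $\mathcal{P}_1$ and $\mathcal{P}_2$. Both share the affine part $as-\frac{a^2\eta}{2}$, and both depend on $w$ only through $A:=\|w-\tilde w\|$. I set $\varepsilon:=\left(1+\frac{L}{a}\right)\sqrt{\frac{2\eta}{d+1}}$, $B:=\|\tilde w-z\|$ and $D:=\|z-\proj_Q(z)\|$. In the regime $a=d$ used throughout, $\frac{\sqrt2(L+d)}{a\sqrt{\eta(d+1)}}=\frac{\varepsilon}{\eta}$ and $\frac{6(L+d)^2}{a^2(d+1)}=\frac{3\varepsilon^2}{\eta}$. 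Hence
\[
\mathcal{P}_1(w)-as+\tfrac{a^2\eta}{2}=\tfrac{1}{2\eta}\left[A^2+B^2-2\varepsilon A-2\varepsilon B-6\varepsilon^2\right],
\]
and, since $32\left(1+\frac{L}{a}\right)^2\frac{\eta}{d+1}=16\varepsilon^2$,
\[
\mathcal{P}_2(w)-as+\tfrac{a^2\eta}{2}=\tfrac{1}{2\eta}\left[A^2-2\varepsilon A+\varepsilon^2+(D-2\varepsilon)^2-16\varepsilon^2\right].
\]
(For general $a$, the same computation should go through with $\varepsilon$ replaced by the matching constant in $\mathcal{P}_1$, as long as it dominates the error in \eqref{silif:difference:tildextildet:mainpaper}.)

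The $A$-dependent terms cancel exactly. So \eqref{silif:ineq:P2-P3} reduces to a scalar inequality independent of $w$:
\[
B^2-2\varepsilon B+9\varepsilon^2\ \ge\ (D-2\varepsilon)^2,\qquad\text{i.e.}\qquad (B-\varepsilon)^2+8\varepsilon^2\ \ge\ (D-2\varepsilon)^2 .
\]

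Next I relate $B$ and $D$. By the triangle inequality and \eqref{silif:difference:tildextildet:mainpaper}, $|B-D|\le\|\tilde w-\proj_Q(z)\|\le\varepsilon$. Hence $B-\varepsilon\in[D-2\varepsilon,\,D]$.

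The final step is a two-case check, which is the only mildly delicate point.
\begin{itemize}
\item If $D\ge 2\varepsilon$, then $B-\varepsilon\ge D-2\varepsilon\ge0$. This gives $(B-\varepsilon)^2\ge(D-2\varepsilon)^2$, and the inequality follows.
\item If $D<2\varepsilon$, then $0\le D$ gives $|D-2\varepsilon|\le2\varepsilon$, so $(D-2\varepsilon)^2\le4\varepsilon^2\le8\varepsilon^2\le(B-\varepsilon)^2+8\varepsilon^2$.
\end{itemize}
In both cases the reduced inequality holds, which yields $\mathcal{P}_1\ge\mathcal{P}_2$ on $\R^{d+1}$.
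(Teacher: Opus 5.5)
Your proof is correct and is essentially the paper's argument. Both rest only on $\bigl|\|\tilde w-z\|-\|z-\proj_Q(z)\|\bigr|\le\varepsilon$ from \eqref{silif:difference:tildextildet:mainpaper}; the paper lower-bounds $\|\tilde w-z\|^2$ by $D^2-2\varepsilon D-3\varepsilon^2$ and $-\|\tilde w-z\|$ by $-D-\varepsilon$ separately and then completes squares, whereas you complete the square in $\|\tilde w-z\|$ first and split into two cases.

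As in the paper, matching the constants of $\mathcal{P}_1$ with $\varepsilon$ is exact only for $a=d$, which is the only case in which the lemma is used. Your parenthetical about general $a$ works only if the constant in $\mathcal{P}_2$ is changed as well: with $\mathcal{P}_2$ as written and $a<d$, the terms linear in $\|w-\tilde w\|$ no longer cancel, and the inequality fails for large $\|w-\tilde w\|$.
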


\begin{proof}
Applying~\eqref{silif:estimatewihoutsquare} with $w=z$ to get
\begin{equation}
\label{silif:estimatewihoutsquareP3}
    -\norm{z-\proj_Q\brac{z}}-\brac{1+\frac{L}{a}}\sqrt{\frac{2\eta}{d+1}}\leq -\norm{z-\tilde{w}}
\end{equation}
It follows from the triangle inequality and \eqref{silif:difference:tildextildet:mainpaper} that
\[
\norm{z-\proj_Q(z)}\leq \norm{\tilde{w} -z} +\norm{\tilde{w}-\proj_Q(z)}\stackrel{\eqref{silif:difference:tildextildet:mainpaper}}{\leq} \norm{\tilde{w} -z} +\brac{1+\frac{L}{a}}\sqrt{\frac{2\eta}{d+1}},
\]
and hence that
\begin{align}
\label{silif:firstestimateP3}
    &\norm{z-\proj_Q(z)}^2-2\norm{z-\proj_Q(z)}\brac{1+\frac{L}{a}}\sqrt{\frac{2\eta}{d+1}}-\brac{1+\frac{L}{a}}^2\frac{6\eta}{d+1}\leq \norm{\tilde{w}-z}^2. 
\end{align}
Using \eqref{silif:estimatewihoutsquareP3}, \eqref{silif:firstestimateP3}, and the definition of $\mathcal{P}_1$ in \eqref{silif:def:P1}, we have
\begin{align*}
    &\mathcal{P}_1(w)\\
    &\geq \frac{1}{2\eta}\Bigg[\norm{w - \tilde{w}}^2 + \norm{z-\proj_Q(z)}^2-2\norm{z-\proj_Q(z)}\brac{1+\frac{L}{a}}\sqrt{\frac{2\eta}{d+1}} -\brac{1+\frac{L}{a}}^2\frac{6\eta}{d+1} \\
    &-2\sqrt{\frac{2\eta}{d+1}}\brac{1+\frac{L}{a}} \bigg(\norm{w-\tilde{w}} + \norm{z-\proj_Q\brac{z}}+\brac{1+\frac{L}{a}}\sqrt{\frac{2\eta}{d+1}}\bigg) -\brac{1+\frac{L}{a}}^2\frac{12\eta}{d+1}\Bigg] \\
    & - \frac{a^2\eta}{2} + a s,
\end{align*}
and the right hand side simplifies to the stated formula for $\mathcal{P}_2$. 
\end{proof}

\subsection{Results about the cutting-plane method by \cite{jiang2020cuttingplane}}
\label{silif:appen:cp}

We first restate \cite[Theorem~C.1]{jiang2020cuttingplane}, which is about the iteration complexity and running time of the CP method by \cite{jiang2020cuttingplane}. 

\begin{theorem}(\cite[Theorem~C.1]{jiang2020cuttingplane})\label{theo:jiang} Let $f$ be a convex function on $\R^d$. $K$ is a convex set that contains a minimizer
of $f$ and $K\subseteq B_\infty (0,R)$, where $B_\infty (0,R)$ denotes a ball of radius $R$ in $\ell_\infty$ norm, i.e., $\norm{x}_\infty=\sup_{1\leq i\leq d}\abs{x_i}$. 

Suppose we have a subgradient oracle for $f$ with cost $T$ and a separation oracle for $K$ with cost $S$. Using $B_\infty (0,R)$ as the initial polytope for our CP Method, for any $0< \alpha<1$, we can compute $\tilde x\in K$ such that 
\begin{equation}\label{ineq:opt}
f(\tilde x)-\min_{x \in K} f(x) \le \alpha \left(\max_{x \in K} f(x)-\min_{x \in K} f(x)\right).    
\end{equation}
with a running time of $ {\cal O}\left(T\cdot d \log \frac{d \gamma}{\alpha}+S\cdot  d \log \frac{d \gamma}{\alpha}+d^3\log \frac{d \gamma}{\alpha}\right)$. In particular, the number of subgradient oracle calls and the number of separation oracle calls are of the order 
\begin{align*}
    \mathcal{O}\brac{d \log \frac{d \gamma}{\alpha}}, 
\end{align*}
where
\begin{align*}
    \gamma=R/\mbox{minwidth}(K), \quad \mbox{minwidth}(K)=\min_{\norm{a}=1}\left\{\max_{y\in K}a^Ty-\min_{y\in K}a^Ty\right\}. 
\end{align*}

\end{theorem}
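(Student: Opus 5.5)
This result is quoted from \cite[Theorem~C.1]{jiang2020cuttingplane}, so the plan is to verify that our setting matches its hypotheses and to recall the argument behind it; we do not re-derive the interior-point machinery. The proof has two parts. The first is a \emph{volume-reduction} guarantee for the cutting-plane method: starting from the polytope $P_0=B_\infty(0,R)$, each iteration queries a center $c_k$ of the current polytope $P_k$. The query is answered as follows.
\begin{itemize}
\item If $c_k\notin K$, the separation oracle of $K$ returns $g$ with $\langle g,c_k-y\rangle>0$ for all $y\in K$.
\item If $c_k\in K$, a subgradient $g\in\partial f(c_k)$ gives $\langle g,y-c_k\rangle\le f(y)-f(c_k)$. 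Hence every $y$ with $f(y)\le f(c_k)$ lies in the halfspace $\{y:\langle g,y-c_k\rangle\le 0\}$.
\end{itemize}
In either case we add the halfspace $\{y:\langle g,y-c_k\rangle\le 0\}$ to $P_k$. The key cited fact is that, with the center and maintenance rules of \cite{jiang2020cuttingplane}, $\vol(P_k)$ decays geometrically, $\vol(P_k)\le e^{-ck/d}\cdot(\text{poly}(d))\vol(P_0)$ in amortized form. It also gives the per-iteration cost $O(T+S+d^2)$, which accounts for the $d^3\log$ term in the running time.

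The second part is the standard Nemirovski-type argument converting small volume into relative accuracy \eqref{ineq:opt}. Let $x^*\in K$ be a minimizer (it exists in $K$ by hypothesis), and set $K_\alpha:=x^*+\alpha(K-x^*)$. By convexity, every $y=x^*+\alpha(u-x^*)\in K_\alpha$ satisfies
\[
f(y)\le (1-\alpha)f(x^*)+\alpha f(u)\le f(x^*)+\alpha\Big(\max_K f-\min_K f\Big).
\]
Moreover, $\vol(K_\alpha)=\alpha^d\vol(K)$. We return $\tilde x$ as the best feasible query point. Suppose every feasible query had $f(c_k)>f(x^*)+\alpha(\max_K f-\min_K f)$. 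Then no cut (feasibility or objective) removes any point of $K_\alpha$, so $K_\alpha\subseteq P_k$ for all $k$, and hence $\alpha^d\vol(K)\le\vol(P_k)$.

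To finish, we lower bound $\vol(K)$ through $\mbox{minwidth}(K)$, e.g.\ $\vol(K)\ge (\mbox{minwidth}(K)/\text{poly}(d))^d$ via John's ellipsoid. Combined with $\vol(P_0)=(2R)^d$, the inclusion becomes impossible once $k\gtrsim d\log(d\gamma/\alpha)$. This yields the stated oracle counts $\mathcal{O}(d\log\frac{d\gamma}{\alpha})$ for both the subgradient and the separation oracle, since each iteration makes at most one call of each type.

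The main obstacle is the first part, the geometric volume decay at $O(d^2)$ amortized cost per step. It rests on the specific volumetric/leverage-score center and the deletion of constraints in \cite{jiang2020cuttingplane}, and we would cite it rather than reprove it. In our application we only need to check the hypotheses: $K$ contains the minimizer of the strongly convex objective (Lemma~\ref{silif:lem:equivopt}), and $K\subseteq R\ball_d(0)\subseteq B_\infty(0,R)$ under (A1). Both are immediate.
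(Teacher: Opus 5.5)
The paper gives no proof of this statement; it is quoted from \cite[Theorem~C.1]{jiang2020cuttingplane}. Your plan of using the cutting-plane guarantee as a black box and re-deriving only the passage to relative accuracy is consistent with that, and the passage itself is the standard one and is sound. The set $K_\alpha=x^*+\alpha(K-x^*)\subseteq K$ survives every feasibility cut, every objective cut made at a feasible query with $f(c_k)>\min_K f+\alpha(\max_K f-\min_K f)$, and any relaxation or deletion of constraints by the method, since those can only enlarge $P_k$. The bound $\vol(K)\ge(\mbox{minwidth}(K)/\mathrm{poly}(d))^d$ via John's ellipsoid is also correct.

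The genuine problem is the rate you feed into the final count. You posit $\vol(P_k)\le e^{-ck/d}\,\mathrm{poly}(d)\,\vol(P_0)$. With $\vol(P_0)=(2R)^d$ and $\vol(K_\alpha)\ge(\alpha\,\mbox{minwidth}(K)/\mathrm{poly}(d))^d$, the inclusion $K_\alpha\subseteq P_k$ becomes impossible only once $ck/d\gtrsim d\log\frac{d\gamma}{\alpha}$. That happens after $\Theta\bigl(d^2\log\frac{d\gamma}{\alpha}\bigr)$ iterations, which is the ellipsoid-method count and a factor $d$ worse than the claimed $\mathcal{O}\bigl(d\log\frac{d\gamma}{\alpha}\bigr)$. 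So your last step does not follow from your own premise.

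The fix is to cite the correct black box. Vaidya-type methods, including \cite{jiang2020cuttingplane}, differ from the ellipsoid method precisely in that their log-determinant potential increases by an absolute constant per iteration, in the amortized sense. That potential is essentially minus the log-volume of a Dikin-type ellipsoid inside $P_k$. So the relevant volume drops by a constant factor per step, $\vol\lesssim d^{\mathcal{O}(d)}e^{-ck}\vol(P_0)$. In feasibility form: within $\mathcal{O}(d\log(dR/\epsilon))$ iterations one either queries a point of the sublevel set $K\cap\{f\le\min_K f+\alpha(\max_K f-\min_K f)\}$ or certifies that this set contains no ball of radius $\epsilon$. Take $\epsilon$ equal to the inradius of $K_\alpha$, which is at least $\alpha\,\mbox{minwidth}(K)/\mathrm{poly}(d)$. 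This gives the $\mathcal{O}\bigl(d\log\frac{d\gamma}{\alpha}\bigr)$ bound, and your argument then closes.

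Two smaller remarks:
\begin{itemize}
\item Picking the best feasible query needs the values $f(c_k)$, so the subgradient oracle must also return function values.
\item In the application, the hypothesis should be read as ``$K$ contains a minimizer of $\zeta^{\eta}_{y,s}$ over $K$''. This is true by compactness and is all your argument uses. The unconstrained minimizer of $\zeta^{\eta}_{y,s}$ need not lie in $K$: it equals $y$ whenever $y\notin K$ and $s$ is large enough that the bracket in \eqref{silif:def:zeta} vanishes near $y$.
\end{itemize}
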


In the upcoming result, we rewrite~\eqref{silif:optimizingThetaasprojection}, a strongly convex optimization problem over the unbounded set $Q$, as an equivalent strongly convex program over the compact set $K$.

\begin{lemma} 
\label{silif:lem:equivopt}
Assume conditions (A1)-(A2) and define  
\begin{equation}
\label{silif:def:zeta}
     \zeta^{\eta}_{y,s}(x)= \frac{1}{2\eta}\|x-y\|^2+ \frac{1}{2\eta}\left[ \frac{f(x)}{a} - (s - a\eta)\right]_{+}^2. 
\end{equation}
The following statements hold: 
\begin{itemize}
    \item[(a)] the solution to $\underset{(x,t)\in\R^{d+1}}{\min} \Theta^{\eta,Q}_{y,s}(x,t)$ in \eqref{silif:optimizingThetaasprojection} is 
\[
x_* =\underset{x\in\R^d}{\argmin} \zeta^{\eta}_{y,s}(x), \quad
t_*=t_*(x_*)= \max\left\{ \frac{f(x_*)}{a},~ s - a\eta \right\};
\]
    \item[(b)]  $\zeta^{\eta}_{y,s}(x)$ is $\eta^{-1}$-strongly convex;
    \item[(c)] if $\tilde{x}$ is a $(d+1)^{-1}$-solution to $\min_{x\in K} \zeta^{\eta}_{y,s}(x)$ and $\tilde{t}=\max\left\{ f(\tilde{x})/a, s - a\eta \right\}$, then $\tilde{w}=(\tilde{x},\tilde{t})$ satisfies \eqref{silif:difference:tildextildet:mainpaper}.
\end{itemize}
\end{lemma}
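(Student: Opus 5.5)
The plan is to minimize out $t$ first for each fixed $x$, and then to transfer accuracy in $\zeta^\eta_{y,s}$ back to accuracy in $\Theta^{\eta,Q}_{y,s}$ by strong convexity.

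\emph{Part (a).} By \eqref{silif:optimizingThetaasprojection}, minimizing $\Theta^{\eta,Q}_{y,s}$ amounts to minimizing $\frac{1}{2\eta}\|x-y\|^2+\frac{1}{2\eta}(t-(s-a\eta))^2$ subject to $x\in K$ and $t\ge f(x)/a$. For fixed $x\in K$, this is a one-dimensional projection of $s-a\eta$ onto the interval $[f(x)/a,\infty)$. Its minimizer is $t(x)=\max\{f(x)/a,\,s-a\eta\}$, and the optimal value is exactly $\zeta^\eta_{y,s}(x)$. Hence
\[
\min_{(x,t)} \Theta^{\eta,Q}_{y,s}(x,t)=\min_{x\in K}\zeta^{\eta}_{y,s}(x)+as-\tfrac{a^2\eta}{2}.
\]
The minimizer is therefore $(x_*,t(x_*))$ with $x_*$ minimizing $\zeta^\eta_{y,s}$ over $K$. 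The statement writes $\argmin$ over $\R^d$; I would read this as over $K$, since $f$ is only defined there. The minimizer exists and is unique by part (b) together with compactness of $K$.

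\emph{Part (b).} The map $u\mapsto[u]_+^2$ is convex and nondecreasing, and $x\mapsto f(x)/a-(s-a\eta)$ is convex. Their composition is therefore convex. Adding $\frac{1}{2\eta}\|x-y\|^2$, which is $\eta^{-1}$-strongly convex, gives the claim.

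\emph{Part (c).} I take "$\epsilon$-solution" to mean $\zeta(\tilde x)-\zeta(x_*)\le\epsilon$ with $\epsilon=(d+1)^{-1}$. The argument has three steps.

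\begin{enumerate}
\item Strong convexity together with first-order optimality of $x_*$ over $K$ gives $\frac{1}{2\eta}\|\tilde x-x_*\|^2\le \zeta(\tilde x)-\zeta(x_*)\le \frac{1}{d+1}$. Hence $\|\tilde x-x_*\|\le\sqrt{2\eta/(d+1)}$.
\item To control $|\tilde t-t_*|$, note that $u\mapsto\max\{u,\,s-a\eta\}$ is $1$-Lipschitz. Combined with the Lipschitz continuity of $f$ from (A2), this gives $|\tilde t-t_*|\le \frac{1}{a}|f(\tilde x)-f(x_*)|\le \frac{L}{a}\|\tilde x-x_*\|$.
\item Then $\|\tilde w-\proj_Q(z)\|\le\|\tilde x-x_*\|+|\tilde t-t_*|\le(1+L/a)\sqrt{2\eta/(d+1)}$, which is \eqref{silif:difference:tildextildet:mainpaper}.
\end{enumerate}

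The main point needing care is the convention behind "$(d+1)^{-1}$-solution." If it means an absolute optimality gap, the steps above close directly. If it instead means a relative gap, as in \eqref{ineq:opt}, I would simply note that the choice of $\alpha$ in \eqref{silif:def:alpha} is made precisely so that the absolute gap is at most $(d+1)^{-1}$, and then proceed as above.
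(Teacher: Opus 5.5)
Your proposal is correct and follows essentially the same route as the paper. You eliminate $t$ through the one-dimensional projection $t(x)=\max\{f(x)/a,\,s-a\eta\}$, obtain convexity from the nondecreasing convex map $[\,\cdot\,]_+^2$ composed with $f/a$, and convert the absolute gap $(d+1)^{-1}$ into $\|\tilde x-x_*\|\le\sqrt{2\eta/(d+1)}$ and $|\tilde t-t_*|\le (L/a)\|\tilde x-x_*\|$. The absolute gap is indeed the paper's convention, as used in the proof of Theorem~\ref{silif:theo:cuttingplane}, and your reading of the $\argmin$ in part (a) as taken over $K$ matches what the paper's proof actually does.
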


\begin{proof}  
~\

\textbf{Part~a:} Per the definitions of $Q$ and $\Theta^{\eta,Q}_{y,s}(x,t)$ at respectively~\eqref{silif:def:Q} and~\eqref{silif:def:Theta}, 
 \begin{align}
 \label{silif:Thetaexpand}
 \Theta^{\eta,Q}_{y,s}(x,t)
= I_{\{x\in K,f(x)\leq at\}}(x,t)+ \frac{1}{2\eta}\norm{x-y}^2+\frac{1}{2\eta}(t-(s-a\eta))^2+as-\frac{a^2\eta}{2}. 
 \end{align}
 Then regarding  $\underset{(x,t)\in\R^{d+1}}{\min} \Theta^{\eta,Q}_{y,s}(x,t)$, if we fix $x\in K$, the optimization over $t$ is 
\[
t^*= \underset{t \ge f(x)/a}{\argmin}\left\{
\frac{1}{2\eta}\left(t-(s - a\eta)\right)^2 \right\}
=\max\left\{ \frac{f(x)}{a}, s - a\eta \right\}.
\]
Then 
\begin{align}
\label{quadratictermt}
\frac{1}{2\eta}\left(t^*-(s - a\eta)\right)^2&=\begin{cases}
    \frac{1}{2\eta}\left[\frac{f(x)}{a}-(s - a\eta)\right]^2 &\text{if} \quad \frac{f(x)}{a}\geq s-a\eta,\\
    0 &\text{otherwise} 
\end{cases}\nn\\
&=\frac{1}{2\eta}\left[ \frac{f(x)}{a} - (s - a\eta)\right]_{+}^2. 
\end{align}
Plugging~\eqref{quadratictermt} into~\eqref{silif:Thetaexpand} yields the stated formula for $x_*$. Having $x_*$ also leads to the formula for $t_*$. 

\textbf{Part~b:} Since $f$ is convex, $a>0$, and the maps $t \mapsto \max\{t,0\}$ and $t \mapsto t^2, t\ge0$ are convex and non-decreasing, $\left[ \frac{f(x)}{a} - (s - a\eta)\right]_{+}^2$ is convex. The strong convexity of $\zeta^{\eta}_{y,s}(x)$ is due to the quadratic term $\frac{1}{2\eta}\|x-y\|^2$.

\textbf{Part c:}  As we have shown that $\zeta^{\eta}_{y,s}$ is $\eta^{-1}$-strongly convex per Part~b, so
    \begin{align}
    \label{silif:difference:tildex}
        \norm{\tilde{x}-x_*}\leq \sqrt{2\eta\brac{\zeta^{\eta}_{y,s}(\tilde{x})-\zeta^{\eta}_{y,s}(x_*) }}\leq \sqrt{\frac{2\eta}{d+1}}. 
    \end{align}
Next, the fact that $f(x)$ is $L$-Lipschitz implies that $h(x)=\max \left\{f(x)/a,s-a\eta\right\}$ is $L/a$-Lipschitz. Combining this with \eqref{silif:difference:tildex} and $\tilde{t}=h(\tilde{x})$, we get 
\begin{align}
\label{silif:difference:tildet}
    \abs{\tilde{t}-t_*}=\abs{h(\tilde{x})-h(x_*) }\stackrel{\eqref{silif:difference:tildex}}{\leq} \frac{L}{a}\norm{\tilde{x}-x_*}\leq \frac{L}{a}\sqrt{\frac{2\eta}{d+1}}.  
\end{align}
We now set $\tilde{w}=(\tilde{x},\tilde{t})$. Using \eqref{silif:difference:tildex},\eqref{silif:difference:tildet} and the fact that $(x_*,t_*)=\proj_Q(z)$ per \eqref{silif:optimizingThetaasprojection}, we arrive at the desired bound on $\norm{\tilde{w}- \proj_Q(z)}$. 
\end{proof}

Next, we apply Theorem~\ref{theo:jiang} by \cite{jiang2020cuttingplane} to the function $\zeta^{\eta}_{y,s}$ defined in Lemma~\ref{silif:lem:equivopt}, with the goal of producing $\tilde{w}$ satisfying \eqref{silif:difference:tildextildet:mainpaper}. 

\begin{theorem}
\label{silif:theo:cuttingplane} 
Assume conditions (A1), (A2) and (A4). Recall $x_*=\argmin \zeta^{\eta}_{y,s}(x)$ in Lemma~\ref{silif:lem:equivopt}. Set
    \begin{align}
    \label{silif:def:alpha}
        \alpha=\min\left\{\frac{\eta}{(d+1)\brac{2R^2+2R\norm{x_*-y}+ \left[\frac{\max_{x\in K}f(x)}{a}-s+a\eta\right]_{+}\frac{RL}{a} }},\frac{1}{2}\right\}. 
    \end{align}
The following statements hold. 
\begin{itemize}
    \item[(a)] The CP method by \cite{jiang2020cuttingplane} makes ${\cal O}\left(d \log \frac{d \gamma}{\alpha}\right)$ calls to the separation oracle of $K$ and ${\cal O}\left(d \log \frac{d \gamma}{\alpha}\right)$ calls to the subgradient oracle of $f$ in order to generate a $(d+1)^{-1}$-solution $\tilde{x}\in K$ to the optimization problem $\min_{x\in K} \zeta^{\eta}_{y,s}(x)$. Here $\gamma=R/\mbox{minwidth}(K)$ and $\mbox{minwidth}(K)$ are defined in~Section~\ref{sec:prelim}. 
    \item[(b)] Under the additional assumption that $a=d$ and $\eta=1/d^2$,  we have  the following concentration inequality for $\alpha$ in Part~a:
 \begin{align*}
     \Pr\brac{\alpha\leq \frac{\min\left\{\frac{1}{4R(1+3R)}, \frac{1}{12R^2 \max\{L,1\}\max\{\norm{f}_\infty,1\} }\right\} }{(d+1)^3}}\leq 6\exp\brac{-\frac{d^2}{8}}. 
 \end{align*}
\end{itemize}
\end{theorem}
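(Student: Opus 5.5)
Part~(a) is a direct application of Theorem~\ref{theo:jiang} to $\zeta^{\eta}_{y,s}$ over $K$. The hypotheses are checked as follows. $K\subseteq R\ball_d(0)\subseteq B_\infty(0,R)$ by (A1). Because $\zeta^\eta_{y,s}$ is strongly convex (Lemma~\ref{silif:lem:equivopt}(b)), it has a unique minimizer over $K$. A subgradient of $\zeta^\eta_{y,s}$ at $x$ is
\[
\eta^{-1}(x-y)+\eta^{-1}\Bigl[\tfrac{f(x)}{a}-(s-a\eta)\Bigr]_+\tfrac{g}{a},
\qquad g\in\partial f(x),
\]
so it costs one call to the subgradient oracle of $f$ (and one evaluation of $f$). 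Theorem~\ref{theo:jiang} then returns $\tilde x\in K$ with gap at most $\alpha(\max_K\zeta-\min_K\zeta)$, after ${\cal O}(d\log\frac{d\gamma}{\alpha})$ calls to each oracle.

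It remains to show that the $\alpha$ in \eqref{silif:def:alpha} makes this gap at most $1/(d+1)$. Here I read $x_*$ as the minimizer over $K$. For any $x\in K$, write
\[
\|x-y\|^2-\|x_*-y\|^2=\|x-x_*\|^2+2\langle x-x_*,x_*-y\rangle\le 4R^2+4R\|x_*-y\|.
\]
For the second term, the map $u\mapsto[u]_+^2$ is convex with derivative $2[u]_+$. Together with $|f(x)-f(x_*)|\le L\|x-x_*\|\le 2LR$, this bounds the increment by roughly $2\bigl[\max_K f/a-s+a\eta\bigr]_+\cdot 2RL/a$. Dividing by $2\eta$, we get
\[
\max_K\zeta-\min_K\zeta\le \frac{1}{\eta}\Bigl(2R^2+2R\|x_*-y\|+\bigl[\cdot\bigr]_+\frac{RL}{a}\Bigr),
\]
up to the constant in the last term, which I would adjust. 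Hence $\alpha(\max-\min)\le 1/(d+1)$.

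Part~(b) is a concentration bound, and I expect it to be the main obstacle. With $a=d$ and $\eta=1/d^2$, the first entry of the minimum in \eqref{silif:def:alpha} is
\[
\frac{1}{d^2(d+1)\,D},\qquad D:=2R^2+2R\|x_*-y\|+\bigl[\|f\|_\infty/d-s+1/d\bigr]_+RL/d.
\]
So I need $D\lesssim d\cdot\max\{R^2,R^2L\|f\|_\infty\}$ with high probability. The randomness enters only through $(y,s)\sim\mathcal N((x_k,t_k),\eta I_{d+1})$, where $(x_k,t_k)\in Q$, so $x_k\in K$ and $t_k\ge f(x_k)/d\ge-\|f\|_\infty/d$. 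The two random quantities are handled separately.

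For the first, $\|x_*-y\|\le R+\|y\|\le 2R+\|y-x_k\|$. Gaussian norm concentration gives $\|y-x_k\|\le\sqrt{\eta}(\sqrt d+d)\le 2$ with probability at least $1-2e^{-d^2/8}$ (deviation $d\sqrt\eta=1$). For the second, $-s\le -t_k+|s-t_k|\le\|f\|_\infty/d+|s-t_k|$, and a one-dimensional tail bound gives $|s-t_k|\le 1$ with probability at least $1-2e^{-d^2/2}$. On these events, $D\le 2R^2+2R(2R+2)+(2\|f\|_\infty/d+2)RL/d$, which, using $R\ge 1$ (from $\ball_d\subseteq K$), is at most the stated $\max$ of $4R(1+3R)$ and $12R^2\max\{L,1\}\max\{\|f\|_\infty,1\}$ up to constants. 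A union bound gives failure probability at most $6e^{-d^2/8}$, after possibly a third event to absorb slack. The delicate points are the bookkeeping of constants to match exactly, and justifying the conditioning: the bound must hold for any feasible $(x_k,t_k)$, which it does because the estimates are uniform over $Q$.
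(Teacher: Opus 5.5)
Your argument matches the paper's proof in both parts. In (a), you bound the range of $\zeta^{\eta}_{y,s}$ over $K$ by splitting off the quadratic term and the $[\cdot]_+^2$ term, then feed the bound into Theorem~\ref{theo:jiang}. In (b), you control $\|y-x_k\|$ and $s-t_k$ by Gaussian tails, use $(x_k,t_k)\in Q$, and take a union bound; the paper's $T_1+T_2$ split is the same thing.

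Your doubt about the last constant in (a) is justified. The correct increment bound is $\frac{2}{\eta}\bigl[\frac{\max_K f}{a}-s+a\eta\bigr]_+\frac{RL}{a}$. The paper's \eqref{silif:zetasecondpart} obtains half of this only by using $\|x^*-x_*\|\le R$ instead of $2R$. The slip matters. Take $d=1$, $K=[-R,R]$, $f(x)=L(x+R)$, $y<-R$ and $s<a\eta$. Then the range equals $\frac{1}{\eta}\bigl(2R^2+2R\|x_*-y\|+[\cdot]_+\frac{RL}{a}\bigr)+\frac{RL}{a\eta}(a\eta-s)$. So with $\alpha$ literally as in \eqref{silif:def:alpha}, Theorem~\ref{theo:jiang} certifies only a $\frac{2}{d+1}$-solution. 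The fix is to double the last term in the denominator of $\alpha$, which does not change the oracle count.

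In (b), the constants close without fudging, and the factor $d$ in your target $D\lesssim d\cdot\max\{\cdots\}$ should not be there. Write $M_1=4R(1+3R)$, $m=\max\{L,1\}\max\{\|f\|_\infty,1\}$ and $M_2=12R^2m$. What you need is $D<\frac{(d+1)^2}{d^2}\max\{M_1,M_2\}$. On your two events, $D\le 6R^2+4R+4R^2m$, and this is at most $\frac{M_1+M_2}{2}$ because $R\ge 1$.

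The corrected $\alpha$ still works. With the doubled term, the $f$-part becomes $\frac{2RL}{d}\bigl(\frac{2\|f\|_\infty+1}{d}+1\bigr)$. This is at most $\frac{(d+1)^2}{d^2}\,2R^2m$, using $d+3\le (d+1)^2$. Also $6R^2+4R+2R^2m\le\frac{M_1+M_2}{2}$. So the same threshold holds, with failure probability at most $4e^{-d^2/8}\le 6e^{-d^2/8}$, and no third event is needed.
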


\begin{proof} 
~\

\textbf{Part~a:}   Let $x^*$ and $x_*$ be respectively the maximizer and minimizer over $K$ of  the objective $\zeta^{\eta}_{y,s}$ in Lemma~\ref{silif:lem:equivopt}. Then we have  
    \begin{align}
    \label{silif:eq:firstboundmaxandmin}
        &\zeta^{\eta}_{y,s}(x^*)- \zeta^{\eta}_{y,s}(x_*) \stackrel{\eqref{silif:def:zeta}} = \frac{1}{2\eta}\brac{\norm{x^*-y}^2-\norm{x_*-y}^2 }\nn\\
        &+\frac{1}{2\eta}\brac{\left[ \frac{f(x^*)}{a} - (s - a\eta)\right]_{+}^2-\left[ \frac{f(x_*)}{a} - (s - a\eta)\right]_{+}^2}. 
    \end{align}
By the triangle inequality and the fact that $x^*,x_*\in K\subseteq R\ball_d(0)$ (condition~(A1)), 
\begin{align}
\label{silif:zetafirstpart}
   \frac{1}{2\eta}\brac{\norm{x^*-y}^2-\norm{x_*-y}^2 }&\leq \frac{1}{2\eta}\brac{\brac{\norm{x^*-x_*}+\norm{x_*-y}}^2-\norm{x_*-y}^2 }\nonumber\\
    &\leq \frac{1}{2\eta}\brac{4R^2+4R\norm{x_*-y} }. 
\end{align}
Moreover, since $f(x)$ is $L$-Lipschitz, the function $\left[\frac{f(x)}{a}-s+a\eta\right]_{+}$ is $L/a$-Lipschitz, and hence 
\begin{align}
\label{silif:zetasecondpart}
    &\frac{1}{2\eta}\brac{\left[ \frac{f(x^*)}{a} - (s - a\eta)\right]_{+}^2-\left[ \frac{f(x_*)}{a} - (s - a\eta)\right]_{+}^2}\nonumber\\
    &\leq \frac{2}{2\eta}\max_{x\in K} \left\{\left[ \frac{f(x)}{a} - (s - a\eta)\right]_{+}\right\}\norm{x^*-x_*}\frac{L}{a}\nonumber\\
    &\leq\frac{1}{\eta}  \left[\frac{\max_{x\in K}f(x)}{a}-s+a\eta\right]_{+}\frac{RL}{a}. 
\end{align}
Combining \eqref{silif:eq:firstboundmaxandmin},\eqref{silif:zetafirstpart} and \eqref{silif:zetasecondpart} to get
\begin{align*}
     \zeta^{\eta}_{y,s}(x^*)- \zeta^{\eta}_{y,s}(x_*)&\leq \frac{2R}{\eta}\brac{R+\norm{x_*-y} }+\frac{1}{\eta}  \left[\frac{\max_{x\in K}f(x)}{a}-s+a\eta\right]_{+}\frac{RL}{a}. 
\end{align*}
In view of the above equation, \eqref{ineq:opt} from Theorem~\ref{theo:jiang} and the choice of $\alpha\in (0,1)$ at \eqref{silif:def:alpha}, Theorem~\ref{theo:jiang} guarantees the CP method by~\cite{jiang2020cuttingplane} produces a $(d+1)^{-1}$-solution $\tilde{x}$  to  $\underset{x\in\R^d}{\argmin} \zeta^{\eta}_{y,s}(x)$, i.e.,
\[
\zeta^{\eta}_{y,s}(\tilde x)-\min_{x \in K} \zeta^{\eta}_{y,s}(x)\stackrel{\eqref{ineq:opt}}{\leq}    \alpha\brac{ \zeta^{\eta}_{y,s}(x^*)- \zeta^{\eta}_{y,s}(x_*)}\stackrel{\eqref{silif:def:alpha}\eqref{silif:zetafirstpart}, \eqref{silif:zetasecondpart}}{\leq} \frac{1}{d+1}. 
\]
Therefore, Theorem~\ref{theo:jiang} shows that the CP method by~\cite{jiang2020cuttingplane} requires  ${\cal O}\left(d \log \frac{d \gamma}{\alpha}\right)$ separation oracle calls and subgradient oracle calls. This completes the first part of the proof. 

\textbf{Part~b:} In this part, we set $a=d$, $\eta=1/d^2$ and derive the concentration inequality for $\alpha$. Recall from Section~\ref{sec:constrained:mainpaper}, there is a positive constant $\norm{f}_\infty$ such that $\abs{f(x)}\leq \norm{f}_\infty,\forall x\in K$. Let us define
  \begin{align}
  \label{silif:def:n*}
 n_*=n_1\wedge n_2, \quad\text{where}\quad      n_1=\frac{1}{(d+1)^3}\frac{1}{4R(1+3R)}, n_2=\frac{1}{(d+1)^3}\frac{1}{12R^2 \max\{L,1\}\max\{\norm{f}_\infty,1\} }. 
  \end{align}
Now let $n>0$ be a generic positive constant to be determined later. Then
\begin{align}
\label{silif:splitT1T2}
    \Pr\brac{\alpha\leq n}
    &\leq \Pr\brac{\frac{2\eta}{(d+1)\brac{4R^2+4R\norm{x_*-y}+2 \left[\frac{\max_{x\in K}f(x)}{a}-s+a\eta\right]_{+}\frac{RL}{a} }}\leq n}\nonumber\\
    &\leq \Pr\brac{4R\norm{x_*-y}+2\left[\frac{\max_{x\in K}f(x)}{a}-s+a\eta\right]_{+}\frac{RL}{a}\geq \frac{2}{n d^2(d+1)}-4R^2 }\nonumber\\
    &\leq \Pr\brac{4R\norm{x_*-y}\geq  \frac{1}{n d^2(d+1)}-2R^2}\nonumber\\
    &+\Pr\brac{ 2\left[\frac{\max_{x\in K}f(x)}{a}-s+a\eta\right]_{+}\frac{RL}{a}\geq \frac{1}{n d^2(d+1)}-2R^2}:=T_1(n)+T_2(n). 
\end{align}
We know that $(y,s)$ is the output of step 1 in the proximal sampler (Algorithm \ref{silif:alg:ASF_pi}), which means
\begin{align}
    \label{silif:outputstep1}
    y=x_{k-1}+\frac{1}{d}Z_1,\quad  s=t_{k-1}+\frac{1}{d}Z_2
\end{align}
where $Z_1\sim \mathcal{N}(0,I_d)$ and $Z_2\sim \mathcal{N}(0,1)$. 

Then regarding $T_1$ on the right hand side of \eqref{silif:splitT1T2}, we have $\norm{x_*-y}\leq \norm{x_*-x_{k-1}-\frac{1}{d}Z_1}\leq 2R+\frac{1}{d}\norm{Z_1}$, and thus
\begin{align*}
    T_1(n)&=\Pr\brac{4R\norm{x_*-y}\geq  \frac{1}{n d^2(d+1)}-2R^2}\leq \Pr\brac{\norm{Z_1}\geq d\brac{\frac{1}{nd^2(d+1)4R}-R-2R }}. 
\end{align*}
  Plugging in $n=n_*$ at \eqref{silif:def:n*} and using $n_1\leq n_*$ to get 
  \begin{align*}
      T_1(n_*)&\leq \Pr\brac{\norm{Z_1}\geq d\brac{\frac{1}{n_*d^2(d+1)4R}-3R }}\\
      &\leq \Pr\brac{\norm{Z_1}\geq d\brac{\frac{1}{n_1 d^2(d+1)4R}-3R }}. 
  \end{align*}
Further note that $ \frac{1}{n_1 d^2(d+1)4R}-3R\geq \frac{1}{n_1 (d+1)^3 4R}-3R \geq 1$,
which implies 
\begin{align}
\label{silif:boundT1}
     T_1(n_*)&\leq \Pr\brac{\norm{Z_1}\geq d}\leq 4\exp\brac{-\frac{d^2}{8}} 
\end{align}
per the Gaussian concentration inequality from \cite[Equation (3.5)]{ledoux2013probability}. 

Regarding $T_2(n)$ on the right hand side of \eqref{silif:splitT1T2}, recall $\max_{x\in K}f(x)\leq \norm{f}_\infty$. We also know $(x_{k-1},t_{k-1})\in Q$ almost surely (see the remark following Algorithm \ref{silif:RGO}), so that $-t_{k-1}\leq -\frac{f(x_{k-1})}{d}\leq \frac{\norm{f}_\infty}{d}$ almost surely. Then
\begin{align}
\label{silif:T2middle}
    T_2(n)&\leq \Pr\brac{ 2\brac{\frac{\max_{x\in K}f(x)}{a}-s+a\eta}\frac{RL}{a}\geq \frac{1}{n d^2(d+1)}-2R^2}\nonumber\\
    &\stackrel{\eqref{silif:outputstep1}}{\leq} \Pr\brac{Z_2\leq \frac{\norm{f}_\infty}{d}-t_{k-1}+\frac{1}{d}-\frac{d}{2RL}\brac{\frac{1}{nd^2(d+1)}-2R^2} }\nonumber\\
    &\leq \Pr\brac{Z_2\leq \frac{2\norm{f}_\infty}{d}+1-\frac{d}{2RL}\brac{\frac{1}{nd^2(d+1)}-2R^2} }.
\end{align}
Set 
\begin{align*}
    n_3=\brac{\brac{\frac{2RL}{d}\brac{\frac{2\norm{f}_\infty}{d}+1+d}+2R^2 }d^2(d+1)}^{-1}
\end{align*}
then \begin{align*}
    n_3&=\brac{4RL(d+1)+2RL(1+d)^2d+2R^2 d^2(d+1) }^{-1}\\
    &\geq \brac{3\max\{4,2,2\}\max\{R,R^2\}\max\{L,1\}\max\{\norm{f}_\infty,1\}\max\{d+1,(1+d)^2d,d^2(d+1)\} }^{-1}\\
    &\geq \frac{1}{(d+1)^3}\frac{1}{12R^2 \max\{L,1\}\max\{\norm{f}_\infty,1\} }=n_2.
\end{align*}
Therefore, we have $n_*\leq n_2\leq n_3$. Now plugging $n_*$ into \eqref{silif:T2middle} to get
\begin{align}
\label{silif:boundT2}
    T_2(n_*)&\leq \Pr\brac{Z_2\leq \frac{2\norm{f}_\infty}{d}+1-\frac{d}{2RL}\brac{\frac{1}{n_*d^2(d+1)}-2R^2} }\nonumber\\
   & \leq \Pr\brac{Z_2\leq \frac{2\norm{f}_\infty}{d}+1-\frac{d}{2RL}\brac{\frac{1}{n_3d^2(d+1)}-2R^2} }\nonumber\\
   &=\Pr\brac{ Z_2\leq -d}\leq 2\exp\brac{-\frac{d^2}{8}}.
\end{align}
The last line is again due to \cite[Equation (3.5)]{ledoux2013probability}.  

Combining \eqref{silif:splitT1T2}, \eqref{silif:boundT1} and \eqref{silif:boundT2} to get $ \Pr\brac{\alpha\leq n_*}\leq 6\exp\brac{-\frac{d^2}{8}}$ which is the desired bound. 
\end{proof}

\subsection{Sampling $w\sim \exp\brac{-\mathcal{P}_1(w)}$ in Algorithm~\ref{silif:RGO}}
\label{silif:appen:specialalgo}   

Recall the definition of $\mathcal{P}_1$ at~\eqref{silif:def:P1} and the point $\tilde{w}$ at~\eqref{silif:difference:tildextildet:mainpaper}. By completing the square, one can easily see that sampling $w\sim \exp\brac{-\mathcal{P}_1(w)}$ is equivalent to
\begin{align*}
 w\sim \Lambda(w)\propto \exp\brac{-\frac{1}{2\eta}\brac{\norm{w-\tilde{w} }-\sqrt{\frac{2\eta}{d+1} }\brac{1+\frac{L}{a}} }^2}.
\end{align*}

While $\Lambda(w)$ is not a Gaussian density, generating $w\sim \Lambda(w)$ is straightforward since it can be turned into a one-dimensional sampling problem. We state here a generic procedure for this sampling problem. An explanation is given in Lemma~\ref{silif:lem:samplingP2} below. 
\begin{algorithm}[H]
	\caption{Sample $w \sim \Lambda(w)$} 
	\label{silif:alg:samplingP1}
	\begin{algorithmic}
		\State 1. Generate $W\sim {\cal N}(0,I_{d+1})$ and set $\theta = W/\|W\|$;
        \State 2. Generate $r \propto r^{d} \exp\left(-\frac{1}{2\eta} \brac{r-\sqrt{\frac{2\eta}{d+1} }\brac{1+\frac{L}{a}} }^2\right)$ by Adaptive Rejection Sampling for one-dimensional log-concave distribution by \cite{gilks1992adaptive}.
        \State 3. Output $w = \tilde{w} + r \theta$.
	\end{algorithmic}
\end{algorithm}

\begin{lemma}
\label{silif:lem:samplingP2}
    Algorithm \ref{silif:alg:samplingP1} generates $w \sim \Lambda(w)$.
\end{lemma}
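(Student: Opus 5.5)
The plan is to verify that the output $w=\tilde w + r\theta$ has density proportional to $\Lambda$ by a polar-coordinates change of variables around $\tilde w$. Write $c:=\sqrt{2\eta/(d+1)}\,(1+L/a)$, so that $\Lambda(w)\propto g(\|w-\tilde w\|)$ with $g(r):=\exp\bigl(-\frac{1}{2\eta}(r-c)^2\bigr)$. Since $\Lambda$ depends on $w$ only through $\|w-\tilde w\|$, it is rotationally symmetric about $\tilde w$. It therefore suffices to show two things: the direction $\theta$ is uniform on the sphere $\mathbb{S}^{d}\subset\R^{d+1}$, and the radius $r$ is independent of $\theta$ with the correct radial law.

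\textbf{Step 1 (direction).} For $W\sim\mathcal N(0,I_{d+1})$, the law of $W$ is invariant under orthogonal transformations. Hence $\theta=W/\|W\|$ is uniform on $\mathbb{S}^{d}$; this is defined almost surely, since $W\neq 0$ a.s. By construction $\theta$ is independent of the $r$ drawn in Step 2.

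\textbf{Step 2 (radius).} In polar coordinates $w=\tilde w+r\theta$, Lebesgue measure on $\R^{d+1}$ decomposes as $\D w = r^{d}\,\D r\,\D\sigma(\theta)$, where $\sigma$ is surface measure on $\mathbb{S}^{d}$. For any bounded measurable test function $\varphi$ we have
\[
\int_{\R^{d+1}}\varphi(w)\,g(\|w-\tilde w\|)\,\D w=\int_0^\infty\!\!\int_{\mathbb{S}^{d}}\varphi(\tilde w+r\theta)\,r^{d}g(r)\,\D\sigma(\theta)\,\D r .
\]
The right-hand side factorizes: $r$ has density proportional to $r^{d}g(r)$ on $(0,\infty)$, and $\theta\sim\sigma/\sigma(\mathbb{S}^{d})$ independently. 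This is exactly the joint law produced by Steps 1--2 of Algorithm~\ref{silif:alg:samplingP1}. Thus $w$ has density proportional to $g(\|w-\tilde w\|)$, i.e., $w\sim\Lambda$. Integrability holds because $r^{d}g(r)$ has Gaussian tails.

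\textbf{Step 3 (validity of the 1-D sampler).} The only remaining point, and the one to check with care, is that adaptive rejection sampling of \cite{gilks1992adaptive} applies, i.e., that the radial density is log-concave on $(0,\infty)$. Its log-density is $d\log r-\frac{(r-c)^2}{2\eta}$, whose second derivative is $-d/r^2-1/\eta<0$. So it is strictly log-concave, and ARS returns exact samples. The earlier equivalence between $e^{-\mathcal P_1}$ and $\Lambda$, obtained by completing the square, is taken as given in the text preceding the algorithm.
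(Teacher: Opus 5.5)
Your proposal is correct and follows the same route as the paper's proof. Both decompose $\Lambda$ in polar coordinates about $\tilde w$, identify the radial marginal $\propto r^{d}\exp(-(r-c)^2/(2\eta))$, check its log-concavity via the second derivative $-d/r^2-1/\eta<0$, and get the uniform direction by normalizing a standard Gaussian. Your test-function factorization simply makes the independence of $r$ and $\theta$ slightly more explicit than the paper does.
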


\begin{proof}  
We rewrite $\Lambda(w)$ in polar coordinates.
Let $r = \| w - \widetilde{w} \|$. 
Since $\mathrm{d}x \mathrm{d}t = r^{d} \mathrm{d}r \mathrm{d}S(\theta)$,
where $\mathrm{d}S$ is surface measure on $\mathbb S^{d}$, for $r \ge 0$ and $\theta \in \mathbb{S}^{d}$ we have
\[
 q(w) = p(r,\theta) \propto r^{d} \exp \left(-\frac{1}{2\eta} \left(r - \sqrt{\frac{2\eta}{d+1}}  \left(1 + \frac{L}{a}\right)\right)^2\right).
\]
Its first marginal is 
\[
p_r(r) \propto r^{d} \exp \left(-\frac{\left(r - \sqrt{\frac{2\eta}{d+1}}  \left(1 + \frac{L}{a}\right)\right)^2} { 2\eta}\right).
\]
Since 
\[
\log p_r(r) = d \log r - \frac{\left(r - \sqrt{\frac{2\eta}{d+1}}  \left(1 + \frac{L}{a}\right)\right)^2 }{2\eta} + \text{const}, 
\quad 
\frac{\mathrm{d}^2}{\mathrm{d}r^2}\log p_r(r) = -\frac{d}{r^2} - \frac{1}{\eta} < 0,
\]
$p_r(r)$ is a one-dimensional log-concave distribution.
Thus, we can sample $r \sim p_r$ using any standard one-dimensional log-concave sampler (e.g., \cite{gilks1992adaptive}).
Then sample $W \sim \mathcal{N}(0, I_{d+1})$, set $\theta = W / \|W\|$ \cite{muller1959note}, and output 
\[
w = \tilde{w} + r \theta
\]
as the sample from $p(r,\theta) = \Lambda(w)$.
\end{proof}

\section{Supporting Results and Proofs for Section~\ref{sec:composite:mainpaper}}

\subsection{Full details of Section~\ref{sec:composite:mainpaper}}
\label{dolif:appen:fulldetails}

\subsubsection{Proximal sampler for the composite sampling problem}
\label{dolif:sec:ASF}

We follow the idea of the proximal sampler for the target $\tilde{\pi}$ at \eqref{dolif:def:piandQ} and start with the augmented distribution
\begin{equation}
\label{dolif:def:Pi:appen}
    \widetilde{\Pi}\brac{(x,s,t),(y,u,v)}\propto \exp\brac{-bt-I_{\Q}(x,s,t)-\frac{1}{2\eta}\norm{(x,s,t)-(y,u,v) }^2}.
\end{equation}
For a fixed point $(y,u,v)\in \R^{d+2}$,  set
\begin{align}
\label{dolif:def:Theta:appen}
    \widetilde{\Theta}^{\eta,\Q}_{y,u,v}(x,s,t)&=I_{\Q}(x,s,t)+bt+\frac{1}{2\eta}\norm{(x,s,t)-(y,u,v) }^2\nn\\
    &=I_{\Q}(x,s,t)+\frac{1}{2\eta}\|(x,s,t)-(y,u,v-\eta b)\|^2
+bv-\frac{\eta b^2}{2}. 
\end{align}

Then $\widetilde{\Pi}^{Y,U,V|X,S,T}(y,u,v|x,s,t)=\mathcal{N}\brac{p,\eta I_{d+2}}$ and $ \widetilde{\Pi}^{X,S,T|Y,U,V}(x,s,t|y,u,v)=\mathcal{N}\brac{q,\eta I_{d+2}}|_{\Q}$. The proximal sampler for $\tilde{\pi}$ is given below.

\begin{algorithm}[H]
	\caption{Proximal sampler for the target $\tilde{\pi}$ at \eqref{dolif:def:piandQ}}
	\label{dolif:alg:ASF_pi}
	\begin{algorithmic}[1]
		\State \textbf{Gaussian}: Generate $(y_k,u_k,v_k)\sim  \mathcal{N}\brac{(x_k,s_k,t_k),\eta I_{d+2}}$; 
		\State \textbf{RGO}: Generate $(x_{k+1},s_{k+1},t_{k+1})\sim {\cal N}\brac{(y_k,u_k,v_k-b\eta),\eta I_{d+2}}|_{\Q}.$ 
	\end{algorithmic}
\end{algorithm}

 We initialize Algorithm~\ref{dolif:alg:ASF_pi} by drawing $x_0\sim \tilde{\nu}_0$, an $M$-warm start for $\tilde{\nu}$ in~\eqref{dolif:def:nu}. 
Given $x_0$, we draw $s_0$ from the one-dimensional law $e^{-as}\mathbf{1}_{\{s\ge h(x_0)/a\}}$, which can be sampled by inverse transform using the CDF $F_S(s)=1-e^{h(x_0)-as}$ for $s\ge h(x_0)/a$, so that $s_0\sim \tilde{\pi}^{S |  X=x_0}$. 
Given $(x_0,s_0)$, we then draw $t_0$ from $e^{-bt}\mathbf{1}_{\{t\ge (f(x_0)+as_0)/b\}}$, again via inverse transform with CDF $F_T(t)=1-e^{f(x_0)+as_0-bt}$ for $t\ge (f(x_0)+as_0)/b$, hence $t_0\sim \tilde{\pi}^{T |  X=x_0,S=s_0}$. 
In particular, $(x_0,s_0,t_0)\sim \tilde{\pi}_0(x,s,t)=\tilde{\nu}_0(x) \tilde{\pi}^{S |  X=x}(s) \tilde{\pi}^{T |  X=x,S=s}(t)$ and $(x_0,s_0,t_0)\in \Q$. 
Thereafter, Algorithm~\ref{dolif:alg:ASF_pi} generates $(x_k,s_k,t_k)\sim \tilde{\pi}^k$ and maintains feasibility $(x_k,s_k,t_k)\in \Q$ at each iteration. Finally, 
Lemma~\ref{dolif:lem:warmstart} shows that $\tilde{\pi}_0$ is $M$-warm with respect to $\tilde{\pi}$ in~\eqref{dolif:def:piandQ}, and that $M$-warmness is retained for $\tilde{\pi}^k$ for every $k\ge 1$.

At this point, we refer to Theorem~\ref{dolif:theo:outer} in Section~\ref{sec:composite:mainpaper} for the iteration complexity of Algorithm~\ref{dolif:alg:ASF_pi} with exact RGO.


\subsubsection{RGO implementation}

In Theorem~\ref{dolif:theo:outer}, the analysis of Algorithm~\ref{dolif:alg:ASF_pi} treats Step~2 as an exact draw from a Gaussian distribution truncated to $\Q$ at~\eqref{dolif:def:piandQ}. The goal of this section is to provide a realization of this truncated-Gaussian sampler via rejection sampling and to quantify the resulting acceptance probabilities and oracle usage. Similar to the explanation given in the beginning of Section~\ref{sec:constrained:mainpaper}, we need to approximately solve for $\argmin_{(x,s,t)\in\R^{d+2}}\widetilde{\Theta}^{\eta,\Q}_{y,u,v}(x,s,t)
=\proj_{\Q}( q)$, where $\widetilde{\Theta}^{\eta,\Q}_{y,u,v}$ is defined at~\eqref{dolif:def:Theta:appen}, in order to center our proposal in the rejection sampler. To do so, we use the CP method by~\cite{jiang2020cuttingplane}. Since this method requires an initial enclosing polytope, we restrict the search to a local bounded region inside a ball that still contains the unique minimizer of $\widetilde{\Theta}^{\eta,\Q}_{y,u,v}$. In particular, the reformulation in Lemma~\ref{silif:lem:equivopt} for a bounded convex body $K$ does not extend to the composite setting since $f$ and $h$ are defined on the unbounded domain $\R^d$.

Let the current input of Algorithm~\ref{dolif:alg:ASF_pi} be $(y,u,v)$ with $q=(y,u,v-b\eta)$. Let the previous RGO output be $ p_{k-1}=(x_{k-1},s_{k-1},t_{k-1})\in\Q$ a.s., set $r_{\local}:=\| p_{k-1}- q\|$ and define the local ball and local region as
\begin{equation}
\label{dolif:def:localball:appen}
    \ball_{\local}:=2r_{\local}\ball_{d+2}(q), \qquad  \Q_{\local}:=\Q\cap \ball_{\local}.
\end{equation}
First, to ensure that restricting the CP method to $\Q_{\local}$ is valid, we verify in Lemma~\ref{dolif:lem:minimizerinlocalball} in Appendix~\ref{dolif:appen:supportlem} that the minimizer $ p_*=\argmin_{ p\in \R^{d+2}}\widetilde{\Theta}^{\eta,\Q}_{y,u,v}( p)$
belongs to $\Q_{\local}$ a.s. Second, we have already identified in Section~\ref{sec:composite:mainpaper} oracle assumptions on $f$ and $h$ that yield a separation oracle for $\Q$, and hence the cutting planes required by the CP method on $\Q_{\mathrm{loc}}$.

At this point, we can solve for a $(d+2)^{-1}$-solution $\tilde{ p}$ of $\argmin_{ p\in \Q_\local} \widetilde{\Theta}^{\eta,\Q}_{y,u,v}( p)$ via the CP method by \cite{jiang2020cuttingplane} and center at $\tilde{p}$ the proposal in our rejection sampler for the RGO, i.e., the proposal density is proportional to $\exp\brac{-\widetilde{\mathcal{P}}_1(p)}$ where 
\begin{equation}
\label{dolif:def:tildeP1:appen}
    \widetilde{\mathcal{P}}_1( p)
  = \frac{\| p-\tilde p\|^2 + \|\tilde p- q\|^2}{2\eta}
     - \frac{\delta}{\eta} \left(\| p-\tilde p\|+\|\tilde p- q\|\right) - \frac{\delta^2}{\eta} -\frac{b^2\eta}{2} + b v, 
\end{equation}
and $\delta = \sqrt{\frac{2\eta}{d+2}}$ as defined at the beginning of Section~\ref{sec:composite:mainpaper}. 

The following is the RGO implementation of Algorithm~\ref{dolif:alg:ASF_pi} with a separation oracle on $\Q$.
\begin{algorithm}[H]
	\caption{RGO implementation of Algorithm~\ref{dolif:alg:ASF_pi}}
	\label{dolif:RGO}
	\begin{algorithmic}[1]
		\State  Compute a $(d+2)^{-1}$-solution $\tilde{ p}$ of
        $\argmin_{ p\in \Q_\local} \widetilde{\Theta}^{\eta,\Q}_{y,u,v}( p)$ via the CP method by \cite{jiang2020cuttingplane}.
		\State Generate $U\sim {\cal U}[0,1]$ and $ p\sim \exp\brac{-\widetilde{\mathcal{P}}_1( p)}$ via Algorithm \ref{dolif:alg:samplingP1} in Appendix \ref{dolif:appen:specialalgo}. 
		\State  If
        \begin{equation}\label{dolif:event:separation}
			U \leq \exp\brac{-\widetilde{\Theta}_{y,u,v}^{\eta,\Q}( p)+\widetilde{\mathcal{P}}_1( p) }
		\end{equation}
		then accept $ p$; otherwise, reject $ p$ and go to step 2.
	\end{algorithmic}
\end{algorithm}
 Lemma~\ref{dolif:lem:compareP1P2} in Appendix~\ref{dolif:appen:supportlem} shows $\widetilde{\mathcal{P}}_1( p)\le \widetilde{\Theta}_{y,u,v}^{\eta,\Q}( p),\forall p$, so the acceptance test~\eqref{dolif:event:separation} is proper.

Finally, we refer to Theorem~\ref{dolif:theo:separation} in Section~\ref{sec:composite:mainpaper} for the proposal complexity and oracle complexity of Algorithm~\ref{dolif:RGO}. 

\textit{Remark:}
    Here we motivate the specific choice of radius $2r_{\local}$ in \eqref{dolif:def:localball:appen}. 
Per Theorem~\ref{theo:jiang}, bounding the oracle complexity of the CP method requires a lower bound on $\mbox{minwidth}(\Q_{\local})$. We obtain this bound by explicitly constructing an inscribed ball $\ball_{\mathrm{in}}$ of radius $\rin$ inside $\Q_{\local}$, which allows us to use $\mbox{minwidth}(\Q_{\local})\ge \mbox{minwidth}(\ball_{\mathrm{in}})=2\rin$. 
However, the construction of this inscribed ball (see Lemma~\ref{dolif:lem:localset}) is delicate and relies on a strict enlargement of the local set. Specifically, we require the local set to be $\Q\cap c r_{\local}\ball_{d+2}(q)$ with $c>1$ to ensure sufficient interior volume (the argument for the existence of $\ball_{\mathrm{in}}$ breaks down when $c=1$). Therefore, we work with the choice of $c=2$.


\subsection{Proof of Theorem~\ref{dolif:theo:separation}}
\label{dolif:appen:proofsepatheorem}

Part~a is a consequence of Theorem~\ref{dolif:theo:cuttingplane} after taking $R=2r_{\local}$ and letting $a=b=d$, so what remains is to verify the claim in Part~b.  Denote $\mu_k$ the distribution of $(y_k,u_k,v_k)$ for the first step of Algorithm~\ref{dolif:alg:ASF_pi} and $n_{y,u,v}$ the expected number of proposals conditioned on the RGO input $(y_k,u_k,v_k)$. The fact that $d\mu_k/d\widetilde{\Pi}^{Y,U,V}\leq M$ from Lemma~\ref{dolif:lem:warmstart} implies 
\begin{align}
\label{dolif:warmstartimplicationsepa}
	\E_{\mu_k}[n_{y,u,v}] \le M \E_{\widetilde{\Pi}^{Y,U,V}}[n_{y,u,v}], 
\end{align}
and hence we will focus on bounding $\E_{\widetilde{\Pi}^{Y,U,V}}[n_{y,u,v}]$.

In Algorithm~\ref{dolif:RGO}, Steps~2 and~3 are a rejection sampler where the true potential function $\Theta$ and its proposal $\mathcal{P}$ are respectively 
 \begin{equation}\label{eq:corr4}
      \Theta = \widetilde{\Theta}^{\eta,\Q}_{y,u,v}, \quad \mathcal{P} = \widetilde{\mathcal{P}}_1
 \end{equation}
 in view of Lemma \ref{lem:rejection}. Thus, the latter result applies: it follows from $n_{y,u,v}=\E[F]$ in~\eqref{eq:generic-accept-rate} with the specification \eqref{eq:corr4} and inequality~\eqref{dolif:ineq:P2-P3} in Lemma~\ref{dolif:lem:compareP2P3} that
\begin{align}
\label{dolif:for:averagerejectionnotwarmsepa}
 \E_{\widetilde{\Pi}^{Y,U,V}} [n_{y,u,v}]&\stackrel{\eqref{eq:generic-accept-rate}}{=}\E_{\widetilde{\Pi}^{Y,U,V}}\left[\frac{N_{\widetilde{\mathcal{P}}_1}}{N_{\widetilde{\Theta}^{\eta,\Q}_{y,u,v}}} \right]\nn\\
 &=\int_{\R^{d+2}} \frac{\int_{\R^{d+2}} \exp(-\widetilde{\mathcal{P}}_1( x,s,t))  \D  x\D s\D t}{\int_{\R^{d+2}} \exp(-\widetilde{\Theta}^{\eta,\Q}_{y,u,v}( x,s,t))  \D  x\D s\D t} \widetilde{\Pi}^{Y,U,V}(y,u,v) \D y  \D u \D v\nn\\
 &\stackrel{\eqref{dolif:ineq:P2-P3}}{\leq} \int_{\R^{d+2}} \frac{\int_{\R^{d+2}} \exp(-\widetilde{\mathcal{P}}_2( x,s,t))  \D  x\D s\D t}{\int_{\R^{d+2}} \exp(-\widetilde{\Theta}^{\eta,\Q}_{y,u,v}( x,s,t))  \D  x\D s\D t} \widetilde{\Pi}^{Y,U,V}(y,u,v) \D y  \D u \D v .
\end{align}
Via the definition of $\widetilde{\mathcal{P}}_2$ at~\eqref{dolif:def:P2} with $b=d$ and Part~b of Lemma~\ref{lem:gaussianint}, 
\begin{align}
\label{dolif:inttildeP3}
&\int_{\R^{d+2}} \exp(-\widetilde{\mathcal{P}}_2( x,s,t))  \D  x\D s\D t
\leq \exp\brac{\frac{1}{4}+2}(2\pi\eta)^{(d+2)/2}\nn\\
&\qquad\cdot \exp\brac{-\frac{1}{2\eta}\left(\| (y,u,v-d\eta)-\operatorname{proj}_{\Q}(y,u,v-d\eta)\| - 2\delta\right)^2 }\cdot\exp\brac{\frac{d^2\eta}{2} - d v+\frac{16}{d+2} },
\end{align}
where $\delta = \sqrt{\frac{2\eta}{d+2}}$.
In addition, 
\begin{align}
\label{eq:intThetasimple}
    \int_{\R^{d+2}} \exp(-\widetilde{\Theta}^{\eta,\Q}_{y,u,v}( x,s,t))  \D  x\D s\D t\stackrel{\eqref{dolif:def:Theta}}{=}\int_{\R^{d+2}} \exp\brac{-dt-\frac{1}{2\eta}\norm{(x,s,t)-(y,u,v) }^2}   \D  x\D s\D t
\end{align}

Moreover, via part a of Lemma~\ref{lem:gaussianint},  we have 
\begin{align}
\label{dolif:for:tildepimarginal}
   \widetilde{\Pi}^{Y,U,V}(y,u,v)=\frac{1}{(2\pi \eta)^{\frac{d+2}{2}} \int_{\Q} \exp(- d t)\D x \D s \D t}\int_{\Q} \exp\brac{- d t-\frac{1}{2\eta}\norm{(x,s,t)-(y,u,v)}^2 }\D x \D s \D t.
   \end{align}
Plugging in $Z_{\Q}=\int_{\Q} e^{-dt}\D x\D s\D t$ and ~\eqref{dolif:inttildeP3},~\eqref{eq:intThetasimple},~\eqref{dolif:for:tildepimarginal} into~\eqref{dolif:for:averagerejectionnotwarmsepa} to get
\begin{align}
\label{dolif:separation:intermediate}
    & \E_{\widetilde{\Pi}^{Y,U,V}} [n_{y,u,v}]
     \leq \frac{1}{Z_{\Q}}\exp\brac{\frac{1}{4}+2+\frac{16}{d+2}}\nn\\&\cdot
     \int_{\R^{d+2}}\exp\brac{\frac{d^2\eta}{2} - d v }
     \exp\brac{-\frac{1}{2\eta}\left(\| (y,u,v-d\eta)-\operatorname{proj}_{\Q}(y,u,v-d\eta)\| - 2\delta\right)^2 } \D y \D u \D v .
\end{align}
Recall $ q=(y,u,v-d\eta)$ and $ q_{d+2}=(y,u,v-d\eta)_{d+2}=v-d\eta$. Then
\begin{align*}
  A:=&
  \int_{\R^{d+2}}\exp\brac{\frac{d^2\eta}{2} - d v }
  \exp\brac{-\frac{1}{2\eta}\left(\| (y,u,v-d\eta)-\operatorname{proj}_{\Q}(y,u,v-d\eta)\| - 2\delta\right)^2 } \D y \D u \D v\\
  =&\exp\brac{-\frac{d^2\eta}{2}}
  \int_{\R^{d+2}} \exp(-d q_{d+2})
  \exp\left(-\frac{\left(\dist( q,\Q)- 2\delta\right)^2}{2\eta}\right)  \D  q.
\end{align*}
Applying Lemma~\ref{dolif:lem:keytechnical} with $\tau=2\delta$ leads to 
\begin{align*}
A
&\leq Z_{\Q} \exp\brac{-\frac{d^2\eta}{2}+2\delta \widetilde{C}_L}
\Biggl[
\widetilde{C}_L\sqrt{2\pi\eta}\exp \left(\frac{\eta \widetilde{C}_L^2}{2}\right)
+1\Biggr].
\end{align*}
Substituting the bound on $A$ into \eqref{dolif:separation:intermediate}, we get
\begin{align*}
 \E_{\widetilde{\Pi}^{Y,U,V}} [n_{y,u,v}]
 &\leq
 \exp\brac{\frac{9}{4}+\frac{16}{d+2}} \cdot \exp\brac{-\frac{d^2\eta}{2}+2\delta \widetilde{C}_L}
\Biggl[
\widetilde{C}_L\sqrt{2\pi\eta}\exp \left(\frac{\eta \widetilde{C}_L^2}{2}\right)
+1\Biggr].
\end{align*}
Using $\eta=1/d^2$, noting that
\[
2\delta=2\sqrt{ \frac{2\eta}{d+2}}=\frac{2\sqrt{2}}{d\sqrt{d+2}} \le \frac{3}{d^{3/2}},
\]
and applying \eqref{dolif:warmstartimplicationsepa}, we obtain
\[
\E_{\mu_k}[n_{y,u,v}]
\le
M \exp\!\left(\frac74+\frac{16}{d}+ \frac{3\widetilde{C}_L}{d^{3/2}}\right)
\left[
\frac{\sqrt{2\pi} \widetilde C_L}{d} \exp\!\left(\frac{ \widetilde C_L^2}{2d^2}\right)
+1 \right],
\]
which is the declared bound \eqref{dolif:middlestepsepa}.

Finally, recall $\widetilde C_L=\sqrt{b^2+a^2+L_f^2}+\sqrt{a^2+L_h^2}$  (with $a=b=d$) from \eqref{dolif:def:notation}, then assuming $L_f={\cal O}(d)$, $L_h={\cal O}(d)$, and $M={\cal O}(1)$, one can verify that \eqref{dolif:middlestepsepa} becomes ${\cal O}(1)$. This completes the proof. \QEDA


\subsection{Supporting lemmas}
\label{dolif:appen:supportlem}

To ensure that restricting the CP method to $\Q_{\local}$ is valid, we need the following result.

\begin{lemma}
\label{dolif:lem:minimizerinlocalball}  
Recall $\widetilde{\Theta}^{\eta,\Q}_{y,u,v}$ defined at~\eqref{dolif:def:Theta} and $\Q_{\local}=\Q\cap \ball_{\local}$ defined at~\eqref{dolif:def:localball}. The minimizer $ p_*=\argmin_{ p\in \R^{d+2}}\widetilde{\Theta}^{\eta,\Q}_{y,u,v}( p)=\proj_{\Q}( q)$
belongs to $\Q_{\local}$ a.s. In particular,
\[
 p_*=\underset{p\in \Q_\local}\argmin\widetilde{\Theta}^{\eta,\Q}_{y,u,v}( p).
\]
\end{lemma}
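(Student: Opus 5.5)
The argument is essentially a one-line projection estimate, with a short bookkeeping step to identify the constrained minimizer. First I establish that $p_*=\proj_{\Q}(q)$ is well defined. By (B1) the functions $h$ and $\tilde f$ are closed and convex, so $\K$ and $\Q$ are closed convex sets. They are nonempty because, for instance, the previous RGO output $p_{k-1}$ lies in $\Q$ almost surely; this holds at initialization by construction and is preserved because every accepted RGO sample lies in $\Q$. Hence the projection exists and is unique. Moreover, by the expansion \eqref{dolif:def:Theta}, minimizing $\widetilde{\Theta}^{\eta,\Q}_{y,u,v}$ over $\R^{d+2}$ is the same as minimizing $\|p-q\|^2$ over $\Q$, since the remaining terms are constant in $p$. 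This is exactly \eqref{dolif:prob:minimizetheta}.

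Second, I show $p_*\in\ball_{\local}$. Because $p_{k-1}\in\Q$ a.s. and $p_*$ minimizes the distance to $q$ over $\Q$,
\[
\|p_*-q\|\le \|p_{k-1}-q\| = r_{\local}\le 2r_{\local}.
\]
So $p_*\in 2r_{\local}\ball_{d+2}(q)=\ball_{\local}$, and therefore $p_*\in\Q\cap\ball_{\local}=\Q_{\local}$ a.s. The factor $2$ is not needed here; it matters only later, for the minwidth bound. In the degenerate case $r_{\local}=0$ we have $q=p_{k-1}\in\Q$, so $p_*=q$ and the ball reduces to the point $\{q\}$. The inclusion still holds, and this case has probability zero anyway because of the Gaussian step.

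Finally, I identify the minimizer over the local region. Since $\Q_{\local}\subseteq\Q$, we get
\[
\min_{\Q_{\local}}\widetilde{\Theta}^{\eta,\Q}_{y,u,v}\ \ge\ \min_{\Q}\widetilde{\Theta}^{\eta,\Q}_{y,u,v}=\widetilde{\Theta}^{\eta,\Q}_{y,u,v}(p_*).
\]
Because $p_*\in\Q_{\local}$, equality holds, so $p_*$ is a minimizer over $\Q_{\local}$. It is the unique one, since $\|p-q\|^2$ is strictly convex. There is no real obstacle in this proof. The only point needing care is the almost-sure feasibility $p_{k-1}\in\Q$, which I take from the unbiasedness and feasibility remark after Algorithm~\ref{dolif:RGO}, together with the initialization, which places $(x_0,s_0,t_0)$ in $\Q$.
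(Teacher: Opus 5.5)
Your proposal is correct and matches the paper's proof: projection optimality against the a.s.-feasible previous output $p_{k-1}\in\Q$ gives $\|p_*-q\|\le r_{\local}\le 2r_{\local}$, and a minimizer over $\Q$ that lies in the subset $\Q_{\local}$ is also a minimizer over $\Q_{\local}$. Your extra remarks — existence of the projection, the degenerate case $r_{\local}=0$, and uniqueness via strict convexity — are correct details that the paper leaves implicit.
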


\begin{proof}
Recall in Section~\ref{sec:composite:mainpaper}, we denote $p$ as the previous RGO output (Algorithm~\ref{dolif:RGO}), so that $ p\in \Q$ a.s. This implies
$\norm{ p_*- q}\leq \norm{ p- q}=r_\local$. This implies
$ p_*\in r_\local \ball_{d+2}(q)\subseteq 2r_\local \ball_{d+2}( q)=\ball_\local$ a.s.
Since $ p_*\in \Q$ by definition ($\Q$ is closed), we deduce $ p_*\in \Q\cap \ball_\local=\Q_\local$ a.s.
Since $ p_*$ minimizes $\widetilde{\Theta}^{\eta,\Q}_{y,u,v}$ over $\Q$, it also minimizes $\widetilde{\Theta}^{\eta,\Q}_{y,u,v}$ over the smaller domain $\Q_\local=\Q\cap \ball_\local$.
\end{proof}


Here we provide a bound on the PI constant of $\tilde{\pi}$ at \eqref{dolif:def:piandQ}. 
This result is analogous to Lemma~\ref{silif:lem:PIconstant}.

\begin{lemma}
\label{dolif:lem:PIconstant}
    Assume the distribution $\tilde{\pi}$ at \eqref{dolif:def:piandQ} has $a=b=d$. Then, it satisfies a PI and $  \widetilde{C}_{\mathrm{PI}}(\tilde{\pi})\leq C\log (d+2)\brac{\opnorm{\cov{\tilde{\nu}}}+1 }$ for a universal $C$, where $\tilde{\nu}$ is defined at \eqref{dolif:def:nu}. 
\end{lemma}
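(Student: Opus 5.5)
The plan mirrors the proof of Lemma~\ref{silif:lem:PIconstant}, with one extra lifting step. First observe that $\tilde{\pi}(x,s,t)\propto\exp(-dt-I_{\Q}(x,s,t))$ is log-concave on $\R^{d+2}$. Indeed, $\Q$ is convex, being the intersection of the epigraph-type set $\{h(x)\le as\}$ with $\{\tilde f(x,s)\le bt\}$, where $\tilde f$ is convex. The potential $dt+I_{\Q}$ is therefore convex, so $\tilde{\pi}$ satisfies a PI with finite constant by \cite{kannan1995isoperimetric,bobkov1999isoperimetric}. Applying \cite[Remark 7.12]{klartag2025lecturenoteisoperimetric} in dimension $d+2$ then gives
\[
\widetilde C_{\mathrm{PI}}(\tilde{\pi})\le C'\log(d+2)\,\opnorm{\cov{\tilde{\pi}}}.
\]
It thus remains to bound $\opnorm{\cov{\tilde{\pi}}}$ by $\opnorm{\cov{\tilde{\nu}}}+1$ up to constants.

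I would do this in two stages, exploiting the double lifting structure. The $(X,S)$-marginal of $\tilde{\pi}$ is $\gamma$ in \eqref{dolif:def:gammaandK}, since integrating out $t$ gives $\frac1b e^{-\tilde f(x,s)}\mathbf 1_{\K}$. Moreover, $\tilde{\pi}$ is exactly the epigraph lifting of the log-concave measure $\gamma\propto e^{-\tilde f}$ on $\R^{d+1}$ with scaling $b$. Applying \cite[Lemma 2.5]{kook2025algodiffusion} to this lifting (with ambient dimension $d+1$ and scaling $b=d$) yields
\[
\opnorm{\cov{\tilde{\pi}}}\le 2\big(\opnorm{\cov{\gamma}}+160\big).
\]
Next, $\gamma$ is itself the epigraph lifting of $\tilde{\nu}$ with respect to $h$, scaled by $a=d$. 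Indeed, $\gamma(x,s)\propto e^{-f(x)}e^{-as}\mathbf 1_{\{as\ge h(x)\}}$, which is the lifting of the log-concave density $\propto e^{-f}$ restricted via the epigraph of $h$, and its $X$-marginal is $\tilde{\nu}$. A second application of the same lemma then gives $\opnorm{\cov{\gamma}}\le 2(\opnorm{\cov{\tilde{\nu}}}+160)$. Combining the displays yields $\widetilde C_{\mathrm{PI}}(\tilde{\pi})\le C\log(d+2)(\opnorm{\cov{\tilde{\nu}}}+1)$.

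The main obstacle is verifying that \cite[Lemma 2.5]{kook2025algodiffusion} applies in the form needed at each stage. The lemma is stated for liftings $e^{-at}\mathbf 1\{V(x)\le at\}$ with $a$ equal to the ambient dimension. In the second stage, however, the ambient dimension is $d+1$ while $b=d$, and the base measure $\gamma$ is supported on the unbounded set $\K$ rather than a convex body. If the lemma does not transfer directly, I would fall back on proving the covariance bound by hand, splitting $\cov{\tilde{\pi}}$ via the law of total covariance into $\cov{\gamma}$, the variance of $\E[T\mid X,S]$, and $\E[\Var(T\mid X,S)]$. Conditionally, $T-\tilde f(X,S)/b\sim\mathrm{Exp}(b)$, so $\Var(T\mid X,S)=1/b^2$ and $\E[T\mid X,S]=\tilde f/b+1/b$. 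The remaining term $\Var_\gamma(\tilde f)/b^2$ is controlled by the standard fact that the potential of a log-concave measure in dimension $m$ has variance at most $m$. With $b=d$ and $m=d+1$ this contributes $O(1)$, and cross terms are handled by Cauchy--Schwarz. The same computation applied to $\gamma$ relative to $\tilde{\nu}$ (using $\Var_{\gamma}(as)$ and the exponential conditional law of $S$ given $X$) gives the analogous $O(1)$ additive bound.
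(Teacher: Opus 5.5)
\textbf{Gap in the second application of Lemma~2.5.} Your proposal follows the paper's argument: log-concavity, Klartag's bound in dimension $d+2$, then \cite[Lemma~2.5]{kook2025algodiffusion} applied twice.

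The first application, from $\tilde\pi$ to $\gamma$, is sound. $\tilde\pi$ lifts $\gamma$ through its \emph{full} potential $\tilde f+I_{\K}$, so $\Var(T)=(\Var_\gamma(\tilde f)+1)/d^2\le (d+2)/d^2$, exactly as in your fallback. The $b=d$ versus $d+1$ mismatch you worried about is harmless.

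The genuine gap is the second application, from $\gamma$ to $\tilde\nu$, which you treat as routine. The measure $\gamma(x,s)\propto e^{-f(x)}e^{-as}\mathbf 1\{h(x)\le as\}$ lifts only the $h$-part of the potential of $\tilde\nu\propto e^{-f-h}$ and keeps the factor $e^{-f(x)}$. So it is not of the form $e^{-as}\mathbf 1\{V(x)\le as\}$ with $e^{-V}\propto\tilde\nu$. That form is exactly what Kook et al.'s lemma relies on, since it is what gives $\Var(V(X))\le d$.

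Your fallback fails at the same point. Conditionally on $X$, $aS=h(X)+E$ with $E\sim\mathrm{Exp}(1)$ independent of $X$, so $\Var_\gamma(aS)=\Var_{\tilde\nu}(h)+1$. Since $h$ is not the potential of $\tilde\nu$, the ``potential has variance at most the dimension'' fact gives no control of $\Var_{\tilde\nu}(h)$.

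\textbf{The statement is false as written.} The paper's own proof makes the same faulty step, so this cannot be patched. Take $h(x)=Mx_1$ and $f(x)=\norm{x}_1-Mx_1$:
\begin{itemize}
\item Both are convex and Lipschitz, so the counterexample survives even if (B1)--(B2) are added as hypotheses.
\item $\tilde\nu\propto e^{-\norm{x}_1}$ has $\cov{\tilde\nu}=2I_d$ for every $M$.
\item Under $\tilde\pi$ with $a=b=d$, $S=(MX_1+E)/d$, so $\Var_{\tilde\pi}(S)=(2M^2+1)/d^2$.
\item Testing the Poincar\'e inequality with the linear function $(x,s,t)\mapsto s$ gives $C_{\mathrm{PI}}(\tilde\pi)\ge(2M^2+1)/d^2\to\infty$ as $M\to\infty$, while the right-hand side of the lemma stays fixed.
\end{itemize}

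What your decomposition actually proves is $\opnorm{\cov{\gamma}}\le 2\opnorm{\cov{\tilde\nu}}+2(\Var_{\tilde\nu}(h)+1)/d^2$. Under (B2), the Poincar\'e inequality for $\tilde\nu$ gives $\Var_{\tilde\nu}(h)\le C_{\mathrm{PI}}(\tilde\nu)L_h^2=\mathcal O\big(L_h^2\log(d+1)\opnorm{\cov{\tilde\nu}}\big)$. So a correct version of the lemma needs a hypothesis such as $L_h=\mathcal O(d)$, and even then it loses up to an extra $\log d$ factor.
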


\begin{proof}   
Note that $\tilde{\pi}\propto \exp\brac{-dt-I_{\Q}}$ is a log-concave measure on $\R^{d+2}$. It is a well-known fact (\cite{kannan1995isoperimetric,bobkov1999isoperimetric}) that any log-concave measure satisfies a PI with finite PI constant. Then per \cite[Remark 7.12]{klartag2025lecturenoteisoperimetric} (see also \cite{klartag2023logarithmic}), the PI constant of $\tilde{\pi}$ is bounded as $ \widetilde{C}_{\mathrm{PI}}(\tilde{\pi})\leq C'\log (d+2) \opnorm{\cov{\tilde{\pi}}}$
for a universal constant $C'$. 

With respect to the distributions $\tilde{\nu},\gamma$ and $\tilde{\pi}$ at \eqref{dolif:def:nu}, \eqref{dolif:def:gammaandK} and \eqref{dolif:def:piandQ}, we have the relations $\tilde{\pi}^{X,S}=\gamma,\gamma^X=\tilde{\nu}$. Then $\tilde{\pi}^{X,S}=\gamma$ and \cite[Lemma 2.5]{kook2025algodiffusion} imply 
\begin{align*}
    \opnorm{\cov{\tilde{\pi}}}\leq 2\brac{\opnorm{\cov{\tilde{\pi}^{X,S}}}+160 }= 2\brac{\opnorm{\cov{\gamma}}+160 }.
\end{align*}
Similarly, $\gamma^X=\tilde{\nu}$ and \cite[Lemma 2.5]{kook2025algodiffusion} give 
\begin{align*}
    \opnorm{\cov{\gamma}}\leq 2\brac{\opnorm{\cov{\gamma^{X}}}+160 }= 2\brac{\opnorm{\cov{\tilde{\nu}}}+160 }.
\end{align*}
Combining the above calculations yield the desired conclusion. 
\end{proof}

The following warm-start result is analogous to Lemma~\ref{silif:lem:warmstart} in the constrained sampling case.
\begin{lemma}
\label{dolif:lem:warmstart}
    An $M$-warm start $\tilde{\nu}_0$ for $\tilde{\nu}=\tilde{\pi}^X$ (condition~(B.3)) naturally induces an $M$-warm start $\tilde{\pi}_0$ for $\tilde{\pi}$, i.e., $\frac{d\tilde{\pi}_0}{d\tilde{\pi}}\leq M$. Moreover, denote $\tilde{\pi}_\eta$ and $\tilde{\pi}^k$ respectively the output of the first step of Algorithm~\ref{dolif:alg:ASF_pi} and the output of the $k$-th iterate of Algorithm~\ref{dolif:alg:ASF_pi}, then it holds that $    \frac{d\tilde{\pi}_\eta}{d\widetilde{\Pi}^{Y,U,V}}\leq M$ and $\frac{d\tilde{\pi}^k}{d\widetilde{\Pi}^{X,S,T}}\leq M$. In other words, the warmness condition holds for every step of Algorithm~\ref{dolif:alg:ASF_pi}. 
\end{lemma}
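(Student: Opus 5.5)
The proof mirrors Lemma~\ref{silif:lem:warmstart}, with the single conditional sampling step replaced by two.

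\emph{Step 1: the initial density ratio.} The initialization draws $x_0\sim\tilde\nu_0$, then $s_0\sim\tilde\pi^{S\mid X=x_0}$, then $t_0\sim\tilde\pi^{T\mid X=x_0,S=s_0}$, both by inverse transform. So the joint law factors as
\[
\tilde\pi_0(x,s,t)=\tilde\nu_0(x)\,\tilde\pi^{S\mid X=x}(s)\,\tilde\pi^{T\mid X=x,S=s}(t).
\]
The target factors the same way, with $\tilde\pi^X=\tilde\nu$; this follows from the marginal computation in the introduction. I first check that the two conditionals used in the initialization are the true conditionals of $\tilde\pi$. Since $\tilde\pi(x,s,t)\propto e^{-bt}\mathbf 1\{h(x)\le as\}\mathbf 1\{f(x)+as\le bt\}$, we get:
\begin{itemize}
\item integrating out $t$ gives the $(X,S)$-marginal $\propto e^{-f(x)-as}\mathbf 1\{s\ge h(x)/a\}$;
\item hence the $S$-conditional given $X=x$ is $\propto e^{-as}\mathbf 1\{s\ge h(x)/a\}$;
\item the $T$-conditional given $(X,S)=(x,s)$ is $\propto e^{-bt}\mathbf 1\{t\ge (f(x)+as)/b\}$.
\end{itemize}
These match the densities sampled in the initialization, and so
\[
\frac{\D\tilde\pi_0}{\D\tilde\pi}(x,s,t)=\frac{\D\tilde\nu_0}{\D\tilde\nu}(x)\le M .
\]
Here I also note that $\tilde\nu_0\ll\tilde\nu$, so the ratio is defined wherever $\tilde\pi_0$ has mass.

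\emph{Step 2: the Gaussian step.} Let $\frak g$ denote the density of $\mathcal N(0,\eta I_{d+2})$. Then $\tilde\pi_\eta=\tilde\pi_0*\frak g$, and $\widetilde\Pi^{Y,U,V}=\tilde\pi*\frak g$ because $\widetilde\Pi^{X,S,T}=\tilde\pi$. For any measurable $U\subseteq\R^{d+2}$, writing $\tilde\pi_0$ as its density ratio times $\tilde\pi$ gives
\[
\tilde\pi_\eta(U)=\int\tilde\pi_0(U-z)\,\frak g(z)\,\D z\le M\int\tilde\pi(U-z)\,\frak g(z)\,\D z=M\,\widetilde\Pi^{Y,U,V}(U).
\]
This is exactly the computation in Lemma~\ref{silif:lem:warmstart}.

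\emph{Step 3: the RGO step.} The RGO applies the fixed kernel $\widetilde\Pi^{X,S,T\mid Y,U,V}$, and this step needs a bit more care. The law after the RGO is
\[
\tilde\pi^{1}(A)=\int\widetilde\Pi^{X,S,T\mid Y,U,V}(A\mid \cdot)\,\D\tilde\pi_\eta\le M\int\widetilde\Pi^{X,S,T\mid Y,U,V}(A\mid\cdot)\,\D\widetilde\Pi^{Y,U,V}=M\,\widetilde\Pi^{X,S,T}(A),
\]
using that the kernel is nonnegative. Induction on $k$ then gives $\D\tilde\pi^k/\D\widetilde\Pi^{X,S,T}\le M$ for all $k$, and, by Step 2, warmness of each intermediate $Y,U,V$ law as well.

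There is no real obstacle here. The only point needing care is Step 1: one must verify that the two successive one-dimensional conditionals used in the initialization coincide with the conditionals of $\tilde\pi$, which requires the explicit integration over $t$ done above.
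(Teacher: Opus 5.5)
Your proposal is correct and follows the paper's proof essentially step for step: the same factorization of $\tilde\pi_0$ through the two inverse-transform conditionals, the same convolution bound for the Gaussian step, and an explicit kernel-integration argument for the RGO step where the paper only says ``a similar argument'' applies. You also explicitly check that the sampled one-dimensional laws are the true conditionals of $\tilde\pi$, which the paper takes for granted.
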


\begin{proof}  
For the first part of the lemma, we follow the same construction as in the proof of Lemma~\ref{silif:lem:warmstart}.
Generate $x\sim \tilde{\nu}_0$ (by, for instance, the Gaussian cooling procedure in~\cite[Section 3]{kook2025algodiffusion}) and then sample $(s,t)\sim \tilde{\pi}^{S,T |  X=x}$.
Concretely, under $\tilde{\pi}$ the conditional $S |  X=x$ is one-dimensional with density proportional to $e^{-as}\mathbf{1}\{s\ge h(x)/a\}$, and can be generated by sampling $u_1\sim U[0,1]$ and setting $s=F_S^{-1}(u_1)$, where
\[
F_S(s)=\Pr(S\le s |  X=x)=1-e^{h(x)-as},\qquad s\ge \frac{h(x)}{a}.
\]
Likewise, $T |  (X,S)=(x,s)$ has density proportional to $e^{-bt}\mathbf{1}\{t\ge (f(x)+as)/b\}$, and can be generated by sampling $u_2\sim U[0,1]$ and setting $t=F_T^{-1}(u_2)$, where
\[
F_T(t)=\Pr(T\le t |  X=x,S=s)=1-e^{f(x)+as-bt},\qquad t\ge \frac{f(x)+as}{b}.
\]
Denote by $\tilde{\pi}_0$ the law of $(x,s,t)$ produced in this way. Then $\tilde{\pi}_0$ is $M$-warm with respect to $\tilde{\pi}$ since the conditional factors cancel:

\[
\frac{\D\tilde{\pi}_0}{\D\tilde{\pi}}(x,s,t)=
\frac{\left(\frac{\D\tilde{\nu}_0}{\D x}\right)(x)\,\tilde{\pi}^{S,T \mid X=x}(s,t)}
{\left(\frac{\D \tilde{\nu}}{\D x}\right)(x)\,\tilde{\pi}^{S,T \mid X=x}(s,t)}
=
\frac{\D \tilde{\nu}_0}{\D \tilde{\nu}}(x)
\le M .\]
For the second part of the lemma, let $U\subseteq \R^{d+2}$ be measurable. For $(y,u,v)\in \R^{d+2}$, set
$U-(y,u,v):=\{(x,s,t)\in \R^{d+2}:(x,s,t)+(y,u,v)\in U \}$, and denote by $\frak{g}(\cdot)$ the density of
$\mathcal{N}(0,\eta I_{d+2})$.
By $\frac{d\tilde{\pi}_0}{d\tilde{\pi}}\le M$, $\tilde{\pi}_\eta=\tilde{\pi}_0*\frak{g}$, and
$\widetilde{\Pi}^{Y,U,V}=\widetilde{\Pi}^{X,S,T}*\frak{g}$ with $\widetilde{\Pi}^{X,S,T}=\tilde{\pi}$, we have
\begin{align*}
\tilde{\pi}_\eta(U)
&=\int_{\R^{d+2}} \tilde{\pi}_0\left(U-(y,u,v)\right) \frak{g}(y,u,v)  \D y \D u \D v\\
&=\int_{\R^{d+2}}\brac{\int_{U-(y,u,v)} \frac{\D\tilde{\pi}_0}{\D\widetilde{\Pi}^{X,S,T}}(x,s,t) \widetilde{\Pi}^{X,S,T}(\D x,\D s,\D t) }
\frak{g}(y,u,v)  \D y \D u \D v\\
&\le M\int_{\R^{d+2}} \widetilde{\Pi}^{X,S,T}\left(U-(y,u,v)\right) \frak{g}(y,u,v)  \D y \D u \D v
= M \widetilde{\Pi}^{Y,U,V}(U),
\end{align*}
and hence $\frac{d\tilde{\pi}_\eta}{d\widetilde{\Pi}^{Y,U,V}}\le M$.
A similar argument (using the fact that Step~2 of Algorithm~\ref{dolif:alg:ASF_pi} is the exact conditional update with stationary
law $\widetilde{\Pi}^{X,S,T |  Y,U,V}$) gives $\frac{d\tilde{\pi}^k}{d\widetilde{\Pi}^{X,S,T}}\le M$ for every $k$.
Thus, warmness holds for every step of Algorithm~\ref{dolif:alg:ASF_pi}.
\end{proof}


The next result ensures that the acceptance test~\eqref{dolif:event:separation} is well-defined.
Although the upcoming Lemma~\ref{dolif:lem:compareP1P2} is the composite analogue of
Lemma~\ref{silif:lem:compareThetaP1} from the constrained sampling case, there is one important difference:
Lemma~\ref{silif:lem:compareThetaP1} relies on the Lipschitz assumption, whereas
Lemma~\ref{dolif:lem:compareP1P2} does not.

\begin{lemma}\label{dolif:lem:compareP1P2}
Recall the notations $q=(y,u,v-b\eta)$, $\delta = \sqrt{\frac{2\eta}{d+2}}$ and the function $\widetilde{\mathcal{P}}_1$ at~\eqref{dolif:def:tildeP1}. Then under condition~(B1), we have $\widetilde{\mathcal{P}}_1( p)\le \widetilde{\Theta}_{y,u,v}^{\eta,\Q}( p)$ for all $ p\in\mathbb{R}^{d+2}$, and thus the acceptance test at~\eqref{dolif:event:separation} is well-defined.
\end{lemma}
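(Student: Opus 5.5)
The plan mirrors the proof of Lemma~\ref{silif:lem:compareThetaP1}. The only change is the accuracy estimate for the CP output $\tilde p$. In the composite case that estimate comes from strong convexity of $\|p-q\|^2$ over the convex set $\Q_{\local}$, not from Lipschitz continuity.

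\textbf{Step 1 (distance from $\tilde p$ to the minimizer).} Let $p_*=\proj_{\Q}(q)$. By Lemma~\ref{dolif:lem:minimizerinlocalball}, $p_*$ is also the minimizer of $\widetilde{\Theta}^{\eta,\Q}_{y,u,v}$ over $\Q_{\local}$. On $\Q_{\local}$ the function $\widetilde{\Theta}^{\eta,\Q}_{y,u,v}(p)=\frac{1}{2\eta}\|p-q\|^2+bv-\frac{\eta b^2}{2}$ is $\eta^{-1}$-strongly convex. The set $\Q_{\local}$ is convex: $\Q$ is convex by (B1) (closed convex $f,h$), and it is intersected with a ball. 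Since $\tilde p\in\Q_{\local}$ is a $(d+2)^{-1}$-solution, the first-order optimality of $p_*$ gives
\[
\tfrac{1}{2\eta}\|\tilde p-p_*\|^2\le \widetilde{\Theta}(\tilde p)-\widetilde{\Theta}(p_*)\le \tfrac{1}{d+2}.
\]
Hence $\|\tilde p-p_*\|\le\sqrt{2\eta/(d+2)}=\delta$. No Lipschitz constant enters, because we never reconstruct a coordinate through $f$ or $h$; this is why only (B1) is needed.

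\textbf{Step 2 (quadratic lower bound).} Exactly as in \eqref{silif:ineq:compareThetaP1}, the projection property of $p_*$ onto the convex set $\Q$ gives, for all $p$,
\[
\widetilde{\Theta}^{\eta,\Q}_{y,u,v}(p)\ge \widetilde{\mathcal P}_0(p):=\tfrac{1}{2\eta}\bigl(\|p_*-q\|^2+\|p-p_*\|^2\bigr)+bv-\tfrac{b^2\eta}{2}.
\]
This holds trivially outside $\Q$, where $\widetilde{\Theta}=+\infty$. Inside $\Q$ it follows from $\langle p-p_*,q-p_*\rangle\le 0$.

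\textbf{Step 3 (replacing $p_*$ by $\tilde p$).} Start from the triangle inequality $\|p-\tilde p\|\le\|p-p_*\|+\delta$. Squaring it, and bounding the cross term via $\|p-p_*\|\le\|p-\tilde p\|+\delta$, yields
\[
\|p-p_*\|^2\ge\|p-\tilde p\|^2-2\delta\|p-\tilde p\|-3\delta^2\quad\text{when }\|p-\tilde p\|\ge\delta,
\]
and trivially otherwise. A finer version gives $\|p-p_*\|^2\ge(\|p-\tilde p\|-\delta)^2_+\ge \|p-\tilde p\|^2-2\delta\|p-\tilde p\|$. The same bound with $p=q$ controls $\|q-p_*\|^2$.

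Adding the two inequalities, dividing by $2\eta$, and comparing with \eqref{dolif:def:tildeP1} should give $\widetilde{\mathcal P}_0\ge\widetilde{\mathcal P}_1$, hence $\widetilde{\mathcal P}_1\le\widetilde{\Theta}$. The resulting negative constant $-\delta^2/\eta$ must be at least as generous as the one the crude estimates produce.

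\textbf{Main obstacle.} The only delicate point is the constant bookkeeping in Step 3. Using $(\|p-\tilde p\|-\delta)_+^2\ge\|p-\tilde p\|^2-2\delta\|p-\tilde p\|$ for both $p$ and $q$ yields a loss of exactly $\frac{\delta}{\eta}(\|p-\tilde p\|+\|\tilde p-q\|)$ with no extra constant, so the slack $-\delta^2/\eta$ in $\widetilde{\mathcal P}_1$ is more than enough. I would verify this sharper route first. If it fails, I would fall back to the cruder $3\delta^2$-type estimates of Lemma~\ref{silif:lem:compareThetaP1}.
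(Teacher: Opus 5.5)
Your proposal is correct and follows the paper's proof essentially step for step: strong convexity on $\Q_{\local}$ gives $\|\tilde p-p_*\|\le\delta$, the projection inequality gives $\widetilde{\Theta}^{\eta,\Q}_{y,u,v}\ge\widetilde{\mathcal P}_0$, and the triangle inequality applied at $p$ and at $q$ turns $\widetilde{\mathcal P}_0$ into the bound $\widetilde{\mathcal P}_0\ge\widetilde{\mathcal P}_1$. Your positive-part estimate $\|p-p_*\|^2\ge(\|p-\tilde p\|-\delta)_+^2\ge\|p-\tilde p\|^2-2\delta\|p-\tilde p\|$ is slightly sharper than the paper's $-\delta^2$ version, and it also handles the regime $\|p-\tilde p\|<\delta$ more carefully, since there one cannot directly square $\|p-p_*\|\ge\|p-\tilde p\|-\delta$. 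Note, however, that your $3\delta^2$ fallback would not suffice: it yields a constant of $-3\delta^2/\eta$ rather than the required $-\delta^2/\eta$.
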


\begin{proof}  
We assume $a,b> 0$ in the introduction, which combines with  condition~(B1) implies the set $\Q$ at~\eqref{dolif:def:piandQ} is convex. Now set
\begin{align*}
    \widetilde{\mathcal{P}}_0( p) = \frac{1}{2\eta}\norm{ p - \proj_{\Q}( q)}^2 + \frac{1}{2\eta}\norm{\proj_{\Q}( q) -  q}^2 + bv - \frac{b^2\eta}{2}. 
\end{align*}
Then
    \begin{align}
    \label{dolif:eq:Thetarewrite}
        \widetilde{\Theta}^{\eta,\Q}_{y,u,v}( p) &= I_{\Q}( p) + bt + \frac{1}{2\eta}\norm{ p - (y,u,v)}^2 = I_{\Q}( p) + \frac{1}{2\eta}\norm{ p -  q}^2 + bv - \frac{b^2\eta}{2} \nn\\
        &\ge \frac{1}{2\eta}\norm{\proj_{\Q}( q) -  q}^2 + \frac{1}{2\eta}\norm{ p - \proj_{\Q}( q)}^2 + bv - \frac{b^2\eta}{2} = \widetilde{\mathcal{P}}_0( p).
    \end{align}
Specifically, the inequality in the second-to-last line holds trivially if $ p \notin \Q$, and by the Pythagorean property of projections onto the convex set $\Q$ if $ p \in \Q$.

The rest of the proof is devoted to showing  $\widetilde{\mathcal{P}}_1( p)\le \widetilde{\mathcal{P}}_0( p),\forall p$.  Let $  p_* = \argmin_{ p\in \Q_\local} \widetilde{\Theta}^{\eta,\Q}_{y,u,v}( p)$. Since $\Q_\local=\Q\cap \ball_\local$ is closed, $\tilde{ p}$ as the $(d+2)^{-1}$-solution to the problem
$\argmin_{ p\in \Q_\local}\widetilde{\Theta}^{\eta,\Q}_{y,u,v}( p)$ belongs to $\Q_\local$. Moreover, the map $ p\mapsto \frac{1}{2\eta}\| p- q\|^2$ is $\eta^{-1}$-strongly convex, hence
\begin{align}
\label{dolif:difference:tildexi}
\|\tilde{ p}- p_*\|
\le \sqrt{2\eta\left(\widetilde{\Theta}^{\eta,\Q}_{y,u,v}(\tilde{ p})-\widetilde{\Theta}^{\eta,\Q}_{y,u,v}( p_*)\right)}
\le \sqrt{\frac{2\eta}{d+2}}
=: \delta.
\end{align}

At this point, \eqref{dolif:difference:tildexi} and the triangle inequality imply that, for every $ p\in\mathbb{R}^{d+2}$,
\begin{align}
\label{dolif:estimatewihoutsquare-xi}
\| p- p_*\|
\ge \| p-\tilde{ p}\|-\|\tilde{ p}- p_*\|
\ge \| p-\tilde{ p}\|-\delta.
\end{align}
It follows that
\begin{align*}
\| p- p_*\|^2
\ge \left(\| p-\tilde{ p}\|-\delta\right)^2
= \| p-\tilde{ p}\|^2-2\delta\| p-\tilde{ p}\|+\delta^2
\ge \| p-\tilde{ p}\|^2-2\delta\| p-\tilde{ p}\|-\delta^2,
\end{align*}
and therefore, after rearrangement,
\begin{align}
\label{dolif:firstestimate-composite}
\| p-\tilde{ p}\|^2-2\delta\| p-\tilde{ p}\|-\delta^2
\le \| p- p_*\|^2.
\end{align}

Similarly, \eqref{dolif:difference:tildexi} and the triangle inequality give
\begin{align}
\label{dolif:estimatewihoutsquare-rho}
\| p_*- q\|
\ge \|\tilde{ p}- q\|-\|\tilde{ p}- p_*\|
\ge \|\tilde{ p}- q\|-\delta,
\end{align}
so
\begin{align*}
\| p_*- q\|^2
\ge \left(\|\tilde{ p}- q\|-\delta\right)^2
= \|\tilde{ p}- q\|^2-2\delta\|\tilde{ p}- q\|+\delta^2
\ge \|\tilde{ p}- q\|^2-2\delta\|\tilde{ p}- q\|-\delta^2,
\end{align*}
which can be rearranged as
\begin{align}
\label{dolif:secondestimate-composite}
\|\tilde{ p}- q\|^2-2\delta\|\tilde{ p}- q\|-\delta^2
\le \| p_*- q\|^2.
\end{align}
Combining \eqref{dolif:firstestimate-composite} and \eqref{dolif:secondestimate-composite} yields
\begin{align*}
&\| p-\tilde{ p}\|^2+\|\tilde{ p}- q\|^2
-2\delta\left(\| p-\tilde{ p}\|+\|\tilde{ p}- q\|\right)
-2\delta^2\le \| p- p_*\|^2+\| p_*- q\|^2.
\end{align*}
Multiplying by $1/(2\eta)$ and adding the common term $-\frac{b^2\eta}{2}+bv$ on both sides, we obtain
$\widetilde{\mathcal{P}}_1( p)\le \widetilde{\mathcal{P}}_0( p)$ for all $ p\in\mathbb{R}^{d+2}$. Finally, we can combine the previous result with $\widetilde{\mathcal{P}}_0( p)\le \widetilde{\Theta}_{y,u,v}^{\eta,\Q}( p)$ for all $ p\in\mathbb{R}^{d+2}$ at~\eqref{dolif:eq:Thetarewrite} to complete the proof. 
\end{proof}


We now introduce a function which will be used
only in the rejection analysis. Recall $q=(y,u,v-b\eta)$ defined at~\eqref{dolif:def:notation} and $\tilde p$ defined in Algorithm~\ref{dolif:RGO}. For $p=(x,s,t)\in \R^{d+2}$, define the real-valued function 
\begin{equation}\label{dolif:def:P2}
  \widetilde{\mathcal{P}}_2( p)
  = -\frac{b^2\eta}{2} + b v
     + \frac{1}{2\eta}\left(
        \left(\| p-\tilde p\| - \delta\right)^2
        + \left(\| q- p_*\| - 2\delta\right)^2
        - 16\delta^2
     \right). 
\end{equation}
The following result is analogous to Lemma~\ref{silif:lem:compareP1P2}.

\begin{lemma}
\label{dolif:lem:compareP2P3}
Assume the setting of Lemma~\ref{dolif:lem:compareP1P2}, and let
\(\widetilde{\mathcal{P}}_1\) and \(\widetilde{\mathcal{P}}_2\) be given by respectively
\eqref{dolif:def:tildeP1} and \eqref{dolif:def:P2}.
Then for every \( p=(x,s,t)\in\mathbb{R}^{d+2}\),
\begin{equation}\label{dolif:ineq:P2-P3}
  \widetilde{\mathcal{P}}_2( p)\;\le\;\widetilde{\mathcal{P}}_1( p).
\end{equation}
\end{lemma}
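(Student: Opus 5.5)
The inequality \eqref{dolif:ineq:P2-P3} compares two functions that share the same affine part $-\frac{b^2\eta}{2}+bv$. After cancelling it and multiplying by $2\eta$, it is equivalent to
\[
(\|p-\tilde p\|-\delta)^2+(\|q-p_*\|-2\delta)^2-16\delta^2\le \|p-\tilde p\|^2+\|\tilde p-q\|^2-2\delta\bigl(\|p-\tilde p\|+\|\tilde p-q\|\bigr)-2\delta^2 .
\]
The plan follows the proof of Lemma~\ref{silif:lem:compareP1P2}, with $(1+L/a)\sqrt{2\eta/(d+1)}$ replaced by $\delta$.

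\emph{Terms in $\|p-\tilde p\|$.} Expanding $(\|p-\tilde p\|-\delta)^2=\|p-\tilde p\|^2-2\delta\|p-\tilde p\|+\delta^2$, these terms match exactly on both sides. It therefore remains to show
\[
(\|q-p_*\|-2\delta)^2-16\delta^2+\delta^2\le \|\tilde p-q\|^2-2\delta\|\tilde p-q\|-2\delta^2 .
\]

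\emph{Terms involving $q$.} I will use \eqref{dolif:difference:tildexi}, namely $\|\tilde p-p_*\|\le\delta$, proved in Lemma~\ref{dolif:lem:compareP1P2}. Here $p_*$ is the minimizer over $\Q_\local$, which equals $\proj_{\Q}(q)$ by Lemma~\ref{dolif:lem:minimizerinlocalball}.

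\begin{itemize}
\item[(i)] Upper bound on $-2\delta\|\tilde p-q\|$. The triangle inequality gives $\|\tilde p-q\|\le\|q-p_*\|+\delta$, hence $-2\delta\|\tilde p-q\|\ge -2\delta\|q-p_*\|-2\delta^2$.
\item[(ii)] Lower bound on $\|\tilde p-q\|^2$. We have $\|q-p_*\|\le\|\tilde p-q\|+\delta$. If $\|q-p_*\|\ge\delta$, squaring gives $\|\tilde p-q\|^2\ge \|q-p_*\|^2-2\delta\|q-p_*\|+\delta^2$. In the other case I use the weaker form $\|\tilde p-q\|^2\ge\|q-p_*\|^2-2\delta\|q-p_*\|-3\delta^2$, which holds in both cases: when $\|q-p_*\|<\delta$ its right-hand side is negative.
\end{itemize}

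Adding (i) and (ii), the right-hand side is at least
\[
\|q-p_*\|^2-4\delta\|q-p_*\|-7\delta^2 .
\]
The left-hand side equals $\|q-p_*\|^2-4\delta\|q-p_*\|+4\delta^2-15\delta^2=\|q-p_*\|^2-4\delta\|q-p_*\|-11\delta^2$. Since $-11\delta^2\le-7\delta^2$, the inequality holds with slack; this slack explains the generous constant $16$ in \eqref{dolif:def:P2}.

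The only delicate point is step (ii). One must square the triangle inequality correctly, handling the sign case when $\|q-p_*\|<\delta$, and track the $\delta^2$ constants so they stay within the $16\delta^2$ budget. Everything else is direct expansion. No Lipschitz assumption is needed, consistent with Lemma~\ref{dolif:lem:compareP1P2}.
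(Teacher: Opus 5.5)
Your proof is correct and takes essentially the same route as the paper. You cancel the common $r$-terms and then lower-bound $\|\tilde p-q\|^2-2\delta\|\tilde p-q\|$ using $\|\tilde p-q\|\le\|q-p_*\|+\delta$ and $\|\tilde p-q\|\ge\|q-p_*\|-\delta$, both coming from $\|\tilde p-p_*\|\le\delta$. The paper spends the constant budget exactly, using $-7\delta^2$ where you use $-3\delta^2$ and then have slack.

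Your explicit case split when $\|q-p_*\|<\delta$ is actually more careful than the paper. The paper squares $\|\tilde p-q\|\ge\|q-p_*\|-\delta$ directly even when the right side may be negative, so its intermediate inequality $\|\tilde p-q\|^2\ge(\|q-p_*\|-\delta)^2$ can fail. Its final bound still holds, for the same reason yours does: the right-hand side is then negative.
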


\begin{proof}  
For brevity, set $ r = \| p-\tilde p\|, n = \| q-\tilde p\|, D = \| q- p_*\|.$ Recall from \eqref{dolif:difference:tildexi} that
\begin{equation}\label{dolif:eq:delta-approx-norm-again}
  \|\tilde p- p_*\|\le \delta.
\end{equation}
By the triangle inequality, 
\begin{equation}\label{dolif:eq:estimatewihoutsquareP3-composite}
  n=\| q-\tilde p\|
  \le \| q- p_*\|+\| p_*-\tilde p\|
  \stackrel{\eqref{dolif:eq:delta-approx-norm-again}}{\le} D+\delta.
\end{equation}
Also, the reverse triangle inequality gives \[|n-D|
  =\left|\| q-\tilde p\|-\| q- p_*\|\right|
  \le \|\tilde p- p_*\|
  \stackrel{\eqref{dolif:eq:delta-approx-norm-again}}{\le} \delta.\] 
In particular, this leads to \(n\ge D-\delta\), hence $ n^2 \ge (D-\delta)^2 = D^2 - 2D\delta + \delta^2 \ge D^2 - 2D\delta - 7\delta^2$. Thus, 
\begin{equation}\label{dolif:eq:firstestimateP3-composite}
  D^2 - 2D\delta - 7\delta^2 \;\le\; n^2.
\end{equation}
Via \eqref{dolif:eq:estimatewihoutsquareP3-composite}, \eqref{dolif:eq:firstestimateP3-composite},
and the formula of \(\widetilde{\mathcal{P}}_1\) at \eqref{dolif:def:tildeP1}, we get
\begin{align*}
  \widetilde{\mathcal{P}}_1( p)
  &= -\frac{b^2\eta}{2} + b v
     + \frac{1}{2\eta}\left(
        r^2 + n^2 - 2\delta(r+n) - 2\delta^2
     \right) \\
  &\stackrel{\eqref{dolif:eq:firstestimateP3-composite}}{\ge}
     -\frac{b^2\eta}{2} + b v
     + \frac{1}{2\eta}\left(
        r^2 + (D^2-2D\delta-7\delta^2) - 2\delta r - 2\delta n - 2\delta^2
     \right) \\
  &\stackrel{\eqref{dolif:eq:estimatewihoutsquareP3-composite}}{\ge}
     -\frac{b^2\eta}{2} + b v
     + \frac{1}{2\eta}\left(
        r^2 - 2\delta r + D^2 - 4D\delta - 11\delta^2
     \right).
\end{align*}
Finally, note that $r^2 - 2\delta r = (r-\delta)^2 - \delta^2$ and  $D^2 - 4D\delta = (D-2\delta)^2 - 4\delta^2$, so that
\[
  r^2 - 2\delta r + D^2 - 4D\delta - 11\delta^2
  = (r-\delta)^2 + (D-2\delta)^2 - 16\delta^2.
\]
Substituting this identity into the previous display yields
\[
  \widetilde{\mathcal{P}}_1( p)
  \;\ge\;
  -\frac{b^2\eta}{2} + b v
  + \frac{1}{2\eta}\left(
       (r-\delta)^2 + (D-2\delta)^2 - 16\delta^2
    \right)
  = \widetilde{\mathcal{P}}_2( p),
\]
which completes the proof.
\end{proof}


The next result is analogous to Lemma~\ref{silif:lem:envelope}. The proof follows the same overall strategy, but the construction of the envelope set is different.
\begin{lemma}
\label{dolif:lem:envelope}
Assume condition~(B2) holds. For all $\tau>0$, we have 
\[\int_{0}^{\infty} \exp \left(-\frac{(r-\tau)^2}{2\eta}\right)
B_{\Q_r}  \D r
\ \le\
\widetilde{C}_L  Z_{\Q}  \sqrt{2\pi\eta}\;
\exp \left(\tau \widetilde{C}_L + \frac{\eta \widetilde{C}_L^2}{2}\right)+Z_{\Q}\brac{\exp(\tau \widetilde{C}_L)-1}. \]
\end{lemma}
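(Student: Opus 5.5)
The plan mirrors the proof of Lemma~\ref{silif:lem:envelope}. The only genuinely new ingredient is an envelope estimate of the form $Z_{\Q_r}\le e^{r\widetilde C_L}Z_{\Q}$ for every $r\ge0$. Once that bound is available, Step~2 of the earlier proof (the integration by parts) carries over verbatim with $C_L$ replaced by $\widetilde C_L$.

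\emph{Step 1 (envelope for the double epigraph).} Write $\Q=S_1\cap S_2\subset\R^{d+2}$, where
\[
S_1:=\{(x,s,t): h(x)/a\le s\},\qquad S_2:=\{(x,s,t): (f(x)+as)/b\le t\}.
\]
The set $S_1$ is the epigraph of $\phi_1(x)=h(x)/a$ in the variables $(x,s)$, extended trivially in $t$. Since $\phi_1$ is $(L_h/a)$-Lipschitz, Lemma~\ref{lem:parallelsetcontainment} gives
\[
(S_1)_r\subseteq\{s\ge h(x)/a-r\sqrt{1+L_h^2/a^2}\}=\{s\ge (h(x)-r\widetilde C_h)/a\}.
\]
The set $S_2$ is the epigraph of $\phi_2(x,s)=(f(x)+as)/b$ on $\R^{d+1}$. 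This function is Lipschitz with constant $\sqrt{L_f^2+a^2}/b$, so Lemma~\ref{lem:parallelsetcontainment} gives
\[
(S_2)_r\subseteq\{t\ge (f(x)+as-r\widetilde C_f)/b\},\qquad \widetilde C_f=\sqrt{b^2+a^2+L_f^2}.
\]
Using $(A\cap B)_r\subseteq A_r\cap B_r$, we obtain $\Q_r\subseteq\mathcal E_r$, where $\mathcal E_r$ is the intersection of the two displayed sets. Now integrate $e^{-bt}$ over $\mathcal E_r$, first in $t$ and then in $s$, applying Lemma~\ref{lem:basicfacts}(b) twice:
\begin{itemize}
\item The $t$-integral gives $\frac{e^{r\widetilde C_f}}{b}e^{-f(x)-as}$.
\item The $s$-integral over $s\ge (h(x)-r\widetilde C_h)/a$ then gives $\frac{e^{r\widetilde C_h}}{a}e^{-h(x)}$.
\end{itemize}
Hence
\[
Z_{\Q_r}\le \frac{e^{r(\widetilde C_f+\widetilde C_h)}}{ab}\int_{\R^d}e^{-f-h}\,\D x .
\]
The same computation with $r=0$ shows that $Z_{\Q}=\frac1{ab}\int e^{-f-h}\,\D x$. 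This yields $Z_{\Q_r}\le e^{r\widetilde C_L}Z_{\Q}$. Unlike the constrained case, no dilation argument ($K_r\subseteq(1+r)K$) is needed here, because the domain is all of $\R^d$.

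\emph{Step 2 (IBP).} Set $w_{\eta,\tau}(r)=e^{-(r-\tau)^2/(2\eta)}$. Lemma~\ref{lem:coarea} (with $m=d+2$ and weight $e^{-bt}$) gives $\frac{\D}{\D r}Z_{\Q_r}=B_{\Q_r}$ for a.e.\ $r$. Step~1 shows that $Z_{\Q_r}w_{\eta,\tau}(r)\to0$ as $r\to\infty$. Integrating by parts then gives
\[
\int_0^\infty B_{\Q_r}w_{\eta,\tau}\,\D r=-Z_{\Q}e^{-\tau^2/(2\eta)}+\int_0^\infty Z_{\Q_r}\tfrac{r-\tau}{\eta}w_{\eta,\tau}\,\D r .
\]
Split the last integral at $\tau$.
\begin{itemize}
\item On $[0,\tau]$ the integrand factor $\frac{r-\tau}{\eta}$ is negative and $Z_{\Q_r}\ge Z_{\Q}$. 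So this piece is bounded by $Z_{\Q}(e^{-\tau^2/(2\eta)}-1)$.
\item On $[\tau,\infty)$, use $Z_{\Q_r}\le e^{r\widetilde C_L}Z_{\Q}$ together with Lemma~\ref{lem:basicfacts}(a) with $C=\widetilde C_L$. This piece becomes $Z_{\Q}\bigl(e^{\widetilde C_L\tau}+\widetilde C_L\int_\tau^\infty e^{r\widetilde C_L}w_{\eta,\tau}\,\D r\bigr)$.
\end{itemize}
Finally, complete the square and extend the remaining integral to all of $\R$ using Lemma~\ref{lem:gaussianint}(a). This gives $\sqrt{2\pi\eta}\,e^{\tau\widetilde C_L+\eta\widetilde C_L^2/2}$, which is exactly the claimed bound.

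The main obstacle is Step~1, specifically getting the Lipschitz constant of $\phi_2$ right. The epigraph variable $t$ depends on $s$ as well as on $x$, so the constant must account for the $as$ term; this is why $a^2$ appears inside $\widetilde C_f$. One must also justify applying Lemma~\ref{lem:parallelsetcontainment} to the cylinders in $\R^{d+2}$: a parallel set of a cylinder is the cylinder over the parallel set of its base.
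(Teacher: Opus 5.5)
Your proposal is correct and follows the paper's proof essentially step for step. It uses the same decomposition of $\Q$ into the two epigraph constraints, the same Lipschitz constants $\widetilde C_h/a$ and $\widetilde C_f/b$ fed into Lemma~\ref{lem:parallelsetcontainment}, the same iterated integral giving $Z_{\Q_r}\le e^{r\widetilde C_L}Z_{\Q}$, and the identical integration-by-parts argument split at $\tau$. Your remark that the parallel set of a cylinder $A\times\R$ is $A_r\times\R$ fills in a small justification the paper leaves implicit.
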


\begin{proof}  
The proof consists of two steps. In Step 1, we construct an envelope set which contains $\Q_r$ and is much easier to integrate on compared to $\Q_r$; while in Step 2, we perform integration by parts (IBP) to obtain the desired bound. The reason we need IBP to bound the quantity $\int_{0}^{\infty} \exp \left(-\frac{(r-\tau)^2}{2\eta}\right)
B_{\Q_r}  \D r$ is that $B_{\Q_r}$ as an integral on the boundary $\partial \Q_r$ is quite hard to bound; so via IBP, $B_{\Q_r}$ is replaced by $Z_{\Q_r}$ which is easier to handle.

Note that relative to the proof of Lemma~\ref{silif:lem:envelope}, Step~1 below differs from the corresponding Step~1 there, whereas Step~2 is essentially the same.

\emph{Step 1.}
Write $\Q=\widetilde{C}_1\cap \widetilde{C}_2$ where \begin{align*}
    \widetilde{C}_1:=\left\{(x,s,t)\in\R^{d+2}:\ s\ge \frac{h(x)}a\right\},
\qquad
\widetilde{C}_2:=\left\{(x,s,t)\in\R^{d+2}:\ t\ge \frac{f(x)+as}b\right\}. 
\end{align*}
Fix $r>0$. Since $(A\cap B)+H\subseteq (A+H)\cap(B+H)$ for any sets $A,B,H$, let $A=\widetilde{C}_1,B=\widetilde{C}_2$ and $H=r\ball_{d+2}(0)$ to get
\begin{align}
\label{dolif:setQr}
\Q_r=(\widetilde{C}_1\cap \widetilde{C}_2)_r \subseteq (\widetilde{C}_1)_r\cap (\widetilde{C}_2)_r.
\end{align}
For $\widetilde{C}_1$, apply Lemma~\ref{lem:parallelsetcontainment} with $\phi=\frac{h}{a}$.
Since $\phi$ is $\frac{L_h}{a}$-Lipschitz, we obtain
\[
(\widetilde{C}_1)_r \subseteq \left\{(x,s,t):\ s\ge \frac{h(x)}{a}-r\sqrt{1+\frac{L_h^2}{a^2}}\right\}
=\left\{(x,s,t):\ s\ge \frac{h(x)}{a}-\frac r a \widetilde{C}_h\right\}.
\]
For $\widetilde{C}_2$, apply Lemma~\ref{lem:parallelsetcontainment} with $\phi(x,s):=\frac{f(x)+as}{b}$ for $(x,s)\in \R^{d+1}$. Since $\phi$ is $\frac{\sqrt{L_f^2+a^2}}{b}$-Lipschitz in $(x,s)$, we obtain \[(\widetilde{C}_2)_r \subseteq \left\{(x,s,t):\ t\ge \frac{f(x)+as}{b}-r\sqrt{1+\tfrac{L_f^2+a^2}{b^2}}\right\}
=\left\{(x,s,t):\ t\ge \frac{f(x)+as}{b}-\frac{r\widetilde{C}_f}{b} \right\}.\] 
Therefore, via \eqref{dolif:setQr}, we conclude $\Q_r\subseteq \widetilde{\mathcal{E}}_r$ where \[\widetilde{\mathcal{E}}_r
:=\left\{(x,s,t):\ s\ge \frac{h(x)}{a}-\frac r a \widetilde{C}_h,\quad
t\ge \frac{f(x)+as}{b}-\frac r b \widetilde{C}_f\right\}.\] This further implies
\begin{align}
\label{dolif:firstbound_Zer}
Z_{\Q_r}\le Z_{\widetilde{\mathcal{E}}_r},
\end{align}
for which 
\begin{align*}
Z_{\widetilde{\mathcal{E}}_r}
&=\int_{\R^d}\int_{s\ge \frac{h(x)}{a}-\frac r a \widetilde{C}_h}
\int_{t\ge \frac{f(x)+as}{b}-\frac r b \widetilde{C}_f} e^{-bt}  \D t  \D s  \D x \nonumber\\
&=\frac{e^{r\widetilde{C}_f}}{b}\int_{\R^d} e^{-f(x)}
\left(\int_{s\ge \frac{h(x)}{a}-\frac r a \widetilde{C}_h} e^{-as}  \D s\right)  \D x \nonumber\\
&\stackrel{\eqref{eq:intoverQ}}{=}\frac{e^{r\widetilde{C}_f}}{ab}\int_{\R^d} e^{-f(x)}e^{-h(x)} e^{r\widetilde{C}_h}  \D x
= e^{r(\widetilde{C}_f+\widetilde{C}_h)}  Z_{\Q}
= e^{r\widetilde{C}_L}  Z_{\Q}.
\end{align*}
In particular, we apply Lemma \ref{lem:basicfacts}(b) above with $F(x)=\frac{e^{r\widetilde C_f}}{b} e^{-f(x)},
\psi(x)=\frac{h(x)}{a}$ and $\gamma=\frac{r}{a}\widetilde C_h$. Therefore, we get $Z_{\Q_r}\le Z_{\widetilde{\mathcal{E}}_r}=e^{r\widetilde{C}_L}Z_{\Q}$.

\emph{Step 2.}
Set $w_{\eta,\tau}(r):=\exp \left(-\frac{(r-\tau)^2}{2\eta}\right)$.
By \eqref{eq:IrderivativeofQr} in Lemma~\ref{lem:coarea},
$\frac{d}{\D r}Z_{\Q_r}=B_{\Q_r}$ a.e.\ $r$.
Moreover, since $Z_{\Q_r}\le e^{r\widetilde{C}_L}Z_{\Q},\forall r\ge0$ from the conclusion of Step~1 and $w_{\eta,\tau}(r)$ decays exponentially,
we have $Z_{\Q_r}w_{\eta,\tau}(r)\to0$ as $r\to\infty$.
Also $Z_{\Q_0}=Z_{\Q}$ and $w_{\eta,\tau}(0)=e^{-\tau^2/(2\eta)}$.
Integration by parts and the fact that  $w_{\eta,\tau}'(r)=-(r-\tau)/\eta\; w_{\eta,\tau}(r)$ imply
\begin{align}
\label{dolif:eq:IBP-master}
&\int_{0}^{\infty} B_{\Q_r}   w_{\eta,\tau}(r)  \D r
=\int_{0}^{\infty} Z_{\Q_r}'   w_{\eta,\tau}(r)  \D r
= Z_{\Q_r}w_{\eta,\tau}(r)\bigg|_{0}^{\infty}
-\int_{0}^{\infty} Z_{\Q_r}   w_{\eta,\tau}'(r)  \D r\nonumber\\
\quad&= -Z_{\Q} e^{-\tau^2/(2\eta)}
+ \int_{0}^{\infty} Z_{\Q_r}  \frac{r-\tau}{\eta}  w_{\eta,\tau}(r)  \D r.
\end{align}
We will bound the second term on the right hand side of~\eqref{dolif:eq:IBP-master} using $Z_{\Q}\leq Z_{\Q_r}\leq e^{r\widetilde{C}_L}Z_{\Q}$ for $r\geq 0$.  
\begin{align*}
    \int_{0}^{\infty} Z_{\Q_r}  \frac{r-\tau}{\eta}  w_{\eta,\tau}(r)  \D r &= \int_0^\tau Z_{\Q_r}  \frac{r-\tau}{\eta}  w_{\eta,\tau}(r)  \D r + \int_\tau^\infty Z_{\Q_r}  \frac{r-\tau}{\eta}  w_{\eta,\tau}(r)  \D r\\
    &\leq Z_{\Q}\int_0^\tau \frac{r-\tau}{\eta}    w_{\eta,\tau}(r)  \D r+ Z_{\Q}\int_\tau^\infty \frac{r-\tau}{\eta}  e^{r\widetilde{C}_L}  w_{\eta,\tau}(r)  \D r \\
    &= - Z_{\Q} w_{\eta,\tau}(r) \bigg|_{0}^{\tau} + Z_{\Q}\int_\tau^\infty \frac{r-\tau}{\eta}  e^{r\widetilde{C}_L}  w_{\eta,\tau}(r)  \D r \\
    &= - Z_{\Q} (1 - e^{-\tau^2/(2\eta)}) + Z_{\Q}\int_\tau^\infty \frac{r-\tau}{\eta}  e^{r\widetilde{C}_L}  w_{\eta,\tau}(r)  \D r.
\end{align*}
Now via formula~\eqref{eq:kindofibp} in Lemma~\ref{lem:basicfacts}, 
\begin{align*}
     \int_{0}^{\infty} Z_{\Q_r}  \frac{r-\tau}{\eta}  w_{\eta,\tau}(r)  \D r &\leq - Z_{\Q} (1 - e^{-\tau^2/(2\eta)}) +Z_{\Q}\brac{ \widetilde{C}_L\int_\tau^{\infty} e^{\widetilde{C}_Lr}w_{\eta,\tau}(r) \D r + e^{\widetilde{C}_L \tau} }. 
\end{align*}
Then \eqref{dolif:eq:IBP-master} becomes
\begin{align}
\label{finalboundintBQr_comp}
\int_{0}^{\infty} B_{\Q_r}   w_{\eta,\tau}(r)  \D r
\le Z_{\Q}\left(e^{\widetilde{C}_L\tau}-1\right)+\widetilde{C}_L Z_{\Q}\int_{\tau}^{\infty} e^{r\widetilde{C}_L}w_{\eta,\tau}(r)  \D r.
\end{align}
Finally, we estimate the Gaussian integral. Via $\widetilde{C}_L r-\frac{(r-\tau)^2}{2\eta}
=-\frac{1}{2\eta}\left(r-(\tau+\eta \widetilde{C}_L)\right)^2+\tau \widetilde{C}_L+\frac{\eta \widetilde{C}_L^2}{2}$, we can further write 
\[\int_{0}^{\infty} e^{r\widetilde{C}_L}w_{\eta,\tau}(r)  \D r
\le e^{\tau \widetilde{C}_L+\frac{\eta \widetilde{C}_L^2}{2}}
\int_{-\infty}^{\infty}\exp \left(-\frac{(r-(\tau+\eta \widetilde{C}_L))^2}{2\eta}\right)  \D r=\sqrt{2\pi\eta}\;\exp \left(\tau \widetilde{C}_L+\frac{\eta \widetilde{C}_L^2}{2}\right).\] Combining with~\eqref{finalboundintBQr_comp} to reach the stated bound. 
\end{proof}

The next result is a consequence of Lemma~\ref{dolif:lem:envelope}. 
It is also analogous to Lemma \ref{silif:lem:keytechnical}.

\begin{lemma}
\label{dolif:lem:keytechnical}
Assume condition~(B2) holds. Let $Z_{\Q}:=Z_{\Q_0}$. Then for $\tau\geq 0$ and $ q_{d+2}=v-b\eta$, we have \begin{align*}&\int_{\R^{d+2}} e^{-bq_{d+2}} \exp\brac{-\frac{\brac{\dist ( q,\Q)-\tau}^2}{2\eta}}d q &\le Z_{\Q} \exp(\tau \widetilde{C}_L) \left[ 
     \widetilde{C}_L \sqrt{2\pi\eta} \exp\brac{ \frac{\eta \widetilde{C}_L^2}{2}} +1\right].
    \end{align*}
\end{lemma}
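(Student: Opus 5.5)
The argument mirrors the proof of Lemma~\ref{silif:lem:keytechnical}, with Lemma~\ref{dolif:lem:envelope} playing the role of Lemma~\ref{silif:lem:envelope}. Write the integration variable as $q=(y,u,v')\in\R^{d+2}$ and let $q_{d+2}$ denote its last coordinate. We split $\R^{d+2}=\Q\cup\Q^c$ and treat the two pieces separately.

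\emph{Contribution of $\Q$.} On $\Q$ we have $\dist(q,\Q)=0$, so the Gaussian factor is the constant $\exp(-\tau^2/(2\eta))$. Since the weight $e^{-bq_{d+2}}$ is exactly the weight in~\eqref{def:ZandB} with $a$ replaced by $b$, this piece equals $Z_{\Q}\exp(-\tau^2/(2\eta))\le Z_{\Q}$.

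\emph{Contribution of $\Q^c$.} We apply the co-area formula as in Lemma~\ref{lem:coarea}, with $\varphi(q)=\dist(q,\Q)$. This function is $1$-Lipschitz and satisfies $\|\nabla\varphi\|=1$ a.e.\ on $\Q^c$. The level sets $\{\varphi=r\}$ are $\partial\Q_r$, and on each of them the Gaussian factor equals $\exp(-(r-\tau)^2/(2\eta))$. Hence
\[
\int_{\Q^c} e^{-bq_{d+2}}\exp\left(-\frac{(\dist(q,\Q)-\tau)^2}{2\eta}\right)\D q=\int_0^\infty \exp\left(-\frac{(r-\tau)^2}{2\eta}\right)B_{\Q_r}\,\D r .
\]
We justify the use of co-area by applying it to truncations $\mathbf{1}_{\{\varphi\le R\}}$ and letting $R\to\infty$ by monotone convergence; the integrand is nonnegative, and finiteness follows a posteriori from the bound.

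\emph{Combining.} Lemma~\ref{dolif:lem:envelope} bounds the $\Q^c$ piece by
\[
\widetilde C_L Z_{\Q}\sqrt{2\pi\eta}\exp\left(\tau\widetilde C_L+\frac{\eta\widetilde C_L^2}{2}\right)+Z_{\Q}\left(e^{\tau\widetilde C_L}-1\right).
\]
That lemma is stated for $\tau>0$. The case $\tau=0$ follows either by letting $\tau\downarrow0$ with monotone/dominated convergence, or by rerunning its Step 2 at $\tau=0$, where the $\int_0^\tau$ term simply vanishes. Adding the $\Q$ piece, which is at most $Z_{\Q}$, the terms $+Z_{\Q}$ and $-Z_{\Q}$ cancel. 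What remains is
\[
Z_{\Q}e^{\tau\widetilde C_L}\left[\widetilde C_L\sqrt{2\pi\eta}\,e^{\eta\widetilde C_L^2/2}+1\right],
\]
which is the claim.

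The only delicate point is the co-area step: identifying the level sets of $\dist(\cdot,\Q)$ with $\partial\Q_r$ and handling the unbounded set $\Q$. Both are already covered by Lemma~\ref{lem:coarea}, which is stated for general measurable sets and makes no boundedness assumption. Everything else is bookkeeping.
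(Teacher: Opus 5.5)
Your proof is correct and follows the paper's argument: split into $\Q$ and $\Q^c$, use the co-area formula to write the $\Q^c$ piece as $\int_0^\infty e^{-(r-\tau)^2/(2\eta)}B_{\Q_r}\,\D r$, bound that by Lemma~\ref{dolif:lem:envelope}, and absorb the $\Q$ piece $e^{-\tau^2/(2\eta)}Z_{\Q}\le Z_{\Q}$ against the $-Z_{\Q}$ term. Your extra remarks add rigor but do not change the route: the truncation in the co-area step, and the endpoint $\tau=0$, which the paper leaves implicit because its envelope lemma is stated only for $\tau>0$.
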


\begin{proof}   We split the proof into two cases. 
~\\
\underline{Case I:} When $q\in \Q$, we have $\dist(q,\Q)=0$. Then
\begin{align*}
    \int_{\Q} \exp\brac{-bq_{d+2}} \exp\brac{-\frac{\brac{\dist (q,\Q)-\tau}^2}{2\eta}}\D q
    &=\int_{\Q} \exp\brac{-bq_{d+2}} \exp\brac{-\frac{\tau^2}{2\eta}}   \D q \stackrel{\eqref{def:ZandB}}{=} e^{-\frac{\tau^2}{2\eta}} Z_{\Q}.
\end{align*}

~\\
\underline{Case II:} When $q \notin \Q$, we denote $r(q) = \dist(q,\Q)$ and $w_{\eta,\tau}(r) = \exp\brac{-\frac{(r-\tau)^2}{2\eta}}$. Using the co-area formula, we write
\begin{align*}
    &\int_{\Q^C} \exp\brac{-bq_{d+2}} \exp\brac{-\frac{\brac{\dist (q,\Q)-\tau}^2}{2\eta}}\D q\\
    &=\int_0^\infty \exp\brac{-\frac{(r-\tau)^2}{2\eta}}\left(\int_{\partial \Q_r}\exp\brac{-bq_{d+2}}\D S^{d+1}(q)\right)\D r = \int_0^\infty B_{\Q_r} w_{\eta,\tau}(r) \D r.
\end{align*}

We apply the crucial bound in Lemma~\ref{dolif:lem:envelope} to get
\begin{align*}
    \int_0^\infty B_{\Q_r} w_{\eta,\tau}(r) \D r\le  Z_{\Q} \brac{\widetilde{C}_L\sqrt{2\pi\eta}\;\exp\brac{\tau \widetilde{C}_L + \frac{\eta \widetilde{C}_L^2}{2}}+\exp(\tau \widetilde{C}_L)-1}.
\end{align*}

Combining Case I, Case II and the fact that $e^{-\frac{\tau^2}{2\eta}} \leq 1$  yields the desired bound. 
\end{proof}


\subsection{Results about the cutting-plane method by \cite{jiang2020cuttingplane}}
\label{dolif:appen:cuttingplane}

Per Lemma~\ref{dolif:lem:minimizerinlocalball}, in order to find an approximate solution to $ p_*=\argmin_{ p\in \R^{d+2}}\widetilde{\Theta}^{\eta,\Q}_{y,u,v}( p)$, we will apply the CP method (Theorem~\ref{theo:jiang}) to optimize $\widetilde{\Theta}^{\eta,\Q}_{y,u,v}( p)$ over $\Q_{\local}=\Q\cap 2r_{\local}\ball_{d+2}( q)$. In view of Theorem~\ref{theo:jiang}, we need to ensure $\gamma$ is not too big, which is the content of the next result. 

This next result is also one of the subtle points of this appendix. It turns out via Lipschitz property of $f$ and $h$, we are able to  upper bound $\gamma$ if the CP method is applied to $\Q_{\local}(c):=\Q\cap cr_{\local}\ball_{d+2}( q)$ for any $c>1$. In fact, the upcoming proof argument (see specifically \eqref{dolif:whyc>1}) would not go through for $c=1$, which is why we choose to work with the local set $\Q_{\local}:=\Q_{\local}(2)=\Q\cap 2r_\local \ball_{d+2}( q)$ in Theorem~\ref{dolif:theo:separation}.

\begin{lemma}
\label{dolif:lem:localset}
Assume condition (B2) on $f$ and $h$. Fix any $c>1$ and define
\[
\ball_{c,\local}=c r_\local \ball_{d+2}( q),
\qquad
\Q_{\local}(c)=\Q\cap \ball_{c,\local},
\]
where $ q$ and $r_\local$ are defined at \eqref{dolif:def:localball}. Recall the notations $\widetilde{C}_h=\sqrt{L_h^2+a^2}, \widetilde{C}_f=\sqrt{L_f^2+a^2+b^2}$ at~\eqref{dolif:def:notation} and set
\[
\mu=\min\left\{\frac{a}{\widetilde{C}_h},\ \frac{b}{\widetilde{C}_f}\right\},\quad \lambda=2+\frac{a}{b},\quad \kappa=\frac{c-1}{\lambda+\mu}. 
\]
Moreover, define
\[
\bar p
=
 p_*+\left(0,\ \Delta s,\ \frac{a}{b}\Delta s+\Delta t\right),
\qquad
\Delta s=\Delta t=\kappa r_\local.
\]
Then we have the inclusion $B\left(\bar p,\ \kappa\mu r_\local\right)\subseteq \Q_{\local}(c)$. In particular,
\[
\mbox{\em minwidth}(\Q_{\local}(c))\ge 2\kappa\mu r_\local
\quad\text{and}\quad
\gamma(c):=\frac{c r_\local}{\mbox{\em minwidth}(\Q_{\local}(c))}
\le \frac{c(\lambda+\mu)}{(c-1)\mu}. 
\]
\end{lemma}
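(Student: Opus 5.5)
The inclusion $B(\bar p,\kappa\mu r_\local)\subseteq \Q\cap\ball_{c,\local}$ splits into two separate containments. I will show that $B(\bar p,\kappa\mu r_\local)\subseteq\Q$ and that $B(\bar p,\kappa\mu r_\local)\subseteq \ball_{c,\local}$.

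The minwidth and $\gamma$ bounds then follow at once. The minwidth is monotone under inclusion, and a ball of radius $\rho$ has minwidth $2\rho$. Hence $\mbox{minwidth}(\Q_\local(c))\ge 2\kappa\mu r_\local$. Dividing $cr_\local$ by this quantity gives $\gamma(c)\le c/(2\kappa\mu)\le c(\lambda+\mu)/((c-1)\mu)$.

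\emph{Containment in $\Q$.} Write $p_*=(x_*,s_*,t_*)$. Since $p_*\in\Q$, we have $h(x_*)\le as_*$ and $f(x_*)+as_*\le bt_*$. Take any point $p=\bar p+(\xi,\sigma,\theta)$ with $\|(\xi,\sigma,\theta)\|\le\rho:=\kappa\mu r_\local$. For the first constraint, the Lipschitz property of $h$ gives
\[
h(x_*+\xi)\le as_*+L_h\|\xi\|\le a(s_*+\Delta s+\sigma)-a\Delta s+a|\sigma|+L_h\|\xi\|.
\]
By Cauchy--Schwarz, $a|\sigma|+L_h\|\xi\|\le\widetilde C_h\rho\le a\kappa r_\local=a\Delta s$, where the last step uses $\mu\le a/\widetilde C_h$. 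This shows $h(x)\le as$ at the new point.

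For the second constraint, the $s$-shift $\Delta s$ is compensated exactly by the $\tfrac ab\Delta s$ component of the $t$-shift. It therefore remains to check
\[
L_f\|\xi\|+a|\sigma|+b|\theta|\le \widetilde C_f\rho\le b\Delta t .
\]
This holds by $\mu\le b/\widetilde C_f$ and $\Delta t=\kappa r_\local$.

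\emph{Containment in the local ball.} By Lemma~\ref{dolif:lem:minimizerinlocalball}, $\|p_*-q\|\le r_\local$. The shift satisfies $\|\bar p-p_*\|\le \Delta s+\tfrac ab\Delta s+\Delta t=\lambda\kappa r_\local$. Therefore every point of the ball lies within
\[
r_\local+\lambda\kappa r_\local+\mu\kappa r_\local=r_\local(1+(c-1))=cr_\local
\]
of $q$, using $\kappa=(c-1)/(\lambda+\mu)$. This is exactly where $c>1$ is needed: with $c=1$ we would get $\kappa=0$ and the ball would degenerate.

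The only delicate step is the bookkeeping in the second constraint. There, $\bar p$ must include the $\tfrac ab\Delta s$ term so that raising $s$ does not violate $f+as\le bt$. Once that is in place, every remaining step is a one-line Lipschitz and Cauchy--Schwarz estimate.
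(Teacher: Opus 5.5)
Your proof is correct and follows the paper's argument essentially step for step. The paper writes the two constraints as $g_1(p)=h(x)-as$ and $g_2(p)=f(x)+as-bt$, notes that they are $\widetilde C_h$- and $\widetilde C_f$-Lipschitz, and shows $g_1(\bar p)\le -a\Delta s$ and $g_2(\bar p)\le -b\Delta t$; this is exactly your inline Cauchy--Schwarz computation. It then uses the same triangle-inequality bound $\|\bar p-q\|+\kappa\mu r_\local\le cr_\local$ and the same minwidth/$\gamma$ step. One small point: the statement of Lemma~\ref{dolif:lem:minimizerinlocalball} only gives $\|p_*-q\|\le 2r_\local$, which would not suffice here. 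The sharper bound $\|p_*-q\|\le r_\local$ that you need comes from $p_*=\proj_{\Q}(q)$ and $p_{k-1}\in\Q$, as in that lemma's proof.
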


\begin{proof}  
Let us provide a sketch of the three steps of the proof.
To control $\gamma(c)=\frac{c r_{\local}}{\mbox{\em minwidth}(\Q_{\local}(c))}$, it suffices to lower bound
$\mbox{\em minwidth}(\Q_{\local}(c))$, which we do by constructing an inscribed ball inside
$\Q_{\local}(c)$.

Step~1 constructs a nontrivial ball $\ball_{\mathrm{in}}$ contained in $\Q$.
A subtlety is that Lemma~\ref{dolif:lem:minimizerinlocalball} only guarantees $ p_*\in \Q\cap r_{\local} \ball_{d+2}( q)$
and gives no quantitative slack for the constraints at $ p_*$; in particular, $ p_*$ may lie on or arbitrarily close to
$\partial\Q$, so a ball centered at $ p_*$ and inside $\Q$ need not exist.
We therefore shift $ p_*$ to
$\bar p= p_*+\left(0,\Delta s,\frac{a}{b}\Delta s+\Delta t\right)$ with $\Delta s=\Delta t=\kappa r_{\local}$,
which creates slack for both constraints: increasing $s$ relaxes $h(x)\le as$, while increasing $t$ by
$\frac{a}{b}\Delta s+\Delta t$ restores and adds slack for $f(x)+as\le bt$.
The Lipschitz bounds on $f$ and $h$ then imply that all points sufficiently close to $\bar p$ remain feasible,
yielding $\kappa\mu r_{\local} \ball_{d+2}(\bar p)\subseteq \Q$. Thus, $\kappa\mu r_{\local} \ball(\bar p)$ is the inscribed ball $\ball_{\mathrm{in}}$ in $\Q$ that we need. Note that this construction is nontrivial only when $c>1$, since then $\kappa=\frac{c-1}{\lambda+\mu}>0$.

Step~2 shows that $\ball_{\mathrm{in}}=\kappa\mu r_{\local}\ball_{d+2}(\bar p)$ also lies in $c r_{\local} \ball_{d+2}( q)$ and hence in $\Q_{\local}(c)$ (by combining with Step~1).
Using $\| p_*- q\|\le r_{\local}$ and
$\|\bar p- p_*\|=\Delta s+\left|\frac{a}{b}\Delta s+\Delta t\right|=\kappa\lambda r_{\local}$,
the triangle inequality gives
$\|\bar p- q\|+\kappa\mu r_{\local}\le r_{\local}+\kappa(\lambda+\mu)r_{\local}=c r_{\local}$. The last estimate implies $\kappa\mu r_{\local}\ball_{d+2}(\bar p)\subseteq c r_{\local}\ball_{d+2}( q)$. 

Step~3 uses the inclusion $\ball_{\mathrm{in}}=\kappa\mu r_{\local} \ball_{d+2}(\bar p)\subseteq \Q_{\local}(c)$ to lower bound $\mbox{\em minwidth}(\Q_{\local}(c))$ and finally upper bound $\gamma(c)$.

\smallskip
\underline{Step 1:} Showing the nontrivial ball $\ball_{\mathrm{in}}:=\kappa\mu r_\local \ball(\bar p)\subseteq \Q$ for $c>1$.

\smallskip
Set
\[
g_1( p)=g_1(x,s,t)=h(x)-a s,
\qquad
g_2( p)=g_2(x,s,t)=f(x)+a s-b t,
\]
so that 
\begin{align}
\label{dolif:Qrewrite}
    \Q=\{ p:\ g_1( p)\le 0,\ g_2( p)\le 0\}. 
\end{align}
Because $h$ is $L_h$-Lipschitz and $f$ is $L_f$-Lipschitz, for all $ p=(x,s,t)$ and $ p'=(x',s',t')$,
\begin{align}
\label{dolif:consequencelipschitz}
|g_1( p)-g_1( p')|
\le L_h\|x-x'\|+a|s-s'|
\le \sqrt{L_h^2+a^2} \| p- p'\|
= \widetilde{C}_h\| p- p'\|,
\end{align}
and similarly $|g_2( p)-g_2( p')|\le \widetilde{C}_f\| p- p'\|$.

Since $ p_*\in \Q$, we have $g_1( p_*)\le 0$ and $g_2( p_*)\le 0$.
Define $\bar p$ as in the statement of the lemma then
\begin{align}
\label{dolif:compareg1g2}
g_1(\bar p)=g_1( p_*)-a\Delta s\le -a\Delta s,
\qquad
g_2(\bar p)=g_2( p_*)-b\Delta t\le -b\Delta t.
\end{align}
Moreover, we have 
\begin{align}
\label{dolif:compareradiusincribedball}
\kappa\mu r_\local=\min\left\{\frac{a\Delta s}{\widetilde{C}_h},\ \frac{b\Delta t}{\widetilde{C}_f}\right\}.
\end{align}
Then for any $\zeta$ with $\|\zeta\|\le \kappa\mu r_\local$, by \eqref{dolif:consequencelipschitz}, \eqref{dolif:compareg1g2}, \eqref{dolif:compareradiusincribedball}, we can write
\begin{align*}
g_1(\bar p+\zeta)&\stackrel{\eqref{dolif:consequencelipschitz}}{\le} g_1(\bar p)+\widetilde{C}_h\|\zeta\|\stackrel{\eqref{dolif:compareg1g2}}{\le} (-a\Delta s)+\widetilde{C}_h \kappa\mu r_\local\stackrel{\eqref{dolif:compareradiusincribedball}}{\le} 0,\\
g_2(\bar p+\zeta)&\le (-b\Delta t)+\widetilde{C}_f \kappa\mu r_\local\le 0. 
\end{align*}
Thus, in view of \eqref{dolif:Qrewrite}, we conclude $\kappa\mu r_\local \ball_{d+2}(\bar p)\subseteq \Q$. Notice in particular that the inscribed ball $\kappa\mu r_\local \ball_{d+2}(\bar p)$ is non-trivial only if $\kappa=\frac{c-1}{\lambda+\mu}>0$, and for that we require
\begin{align}
\label{dolif:whyc>1}
    c>1. 
    \end{align}

\smallskip
\underline{Step 2:} Showing $\ball_{\mathrm{in}}=\kappa\mu r_\local \ball_{d+2}(\bar p)\subseteq \Q_\local(c)$.

\smallskip

Next, since $\| p_*- q\|\le \| p_{k-1}- q\|=r_\local$, we have
\begin{align*}
\|\bar p- q\|
\le \| p_*- q\|+\|\bar p- p_*\|
&\le r_\local+\left(\Delta s+\left|\frac{a}{b}\Delta s+\Delta t\right|\right)\\
&= r_\local+\left(\left(1+\frac{a}{b}\right)\Delta s+\Delta t\right)
= r_\local+\kappa \lambda r_\local,
\end{align*}
which combined with $\kappa=\frac{c-1}{\lambda+\mu}$ leads to
\begin{align}
\label{dolif:inequalityconnectcenters}
\|\bar p- q\|+\kappa\mu r_\local
\le r_\local+\kappa(\lambda+\mu)r_\local
= r_\local+(c-1)r_\local
= c r_\local. 
\end{align}
Take any point $p\in \kappa\mu r_\local \ball_{d+2}(\bar p)$, then we get $\norm{p- q}\leq \norm{p-\bar p}+\norm{ q-\bar p}\leq  \kappa\mu r_\local+\|q-\bar p\|\stackrel{\eqref{dolif:inequalityconnectcenters}}{\leq} c r_\local$.
This implies $\kappa\mu r_\local \ball_{d+2}(\bar p)\subseteq c r_\local \ball_{d+2}( q)=\ball_{c,\local}$. Combining with Step~1 to get
$\kappa\mu r_\local \ball_{d+2}(\bar p)\subseteq \Q\cap \ball_{c,\local}=\Q_\local(c)$. 

\smallskip
\underline{Step 3:} Finally, since $\ball_{\mathrm{in}}=\kappa\mu r_\local \ball_{d+2}(\bar p)\subseteq \Q_\local(c)$ per Step 2, we have $\mbox{minwidth}(\Q_\local(c))\ge 2\kappa\mu r_\local$,
so that
\[
\gamma(c)=\frac{c r_\local}{\mbox{\em minwidth}(\Q_\local(c))}
\le \frac{c r_\local}{2\kappa\mu r_\local}
= \frac{c(\lambda+\mu)}{2(c-1)\mu}
\le \frac{c(\lambda+\mu)}{(c-1)\mu}.
\]
\end{proof}


In the upcoming result, via Theorem~\ref{theo:jiang}, we describe the oracle complexities of solving the minimization problem $\argmin_{p\in \Q_{\local}} \widetilde{\Theta}^{\eta,\Q}_{y,u,v}(p)$. We then invoke Lemma~\ref{dolif:lem:localset} to bound the parameters appearing in these complexity estimates.

\begin{theorem}
\label{dolif:theo:cuttingplane}
Assume conditions~(B1),~(B2) and recall
\[
   p_*=\underset{p\in \Q_\local}\argmin  \,\widetilde{\Theta}^{\eta,\Q}_{y,u,v}( p).
\]
and the local radius $r_\local$ at \eqref{dolif:def:localball}. Let 
\begin{align}
\label{dolif:def:deltaandalpha}
   \delta = \sqrt{\frac{2\eta}{d+2}}, \qquad \alpha=\min\left\{\frac{\delta^2}{4r_{\local}^2},\frac{1}{2}\right\}.
\end{align}

Then the CP method by \cite{jiang2020cuttingplane} makes
${\cal O} \left((d+2) \log \frac{(d+2) \gamma}{\alpha}\right)$ calls to the separation
oracle of $\Q$ in order to generate a $1/(d+2)$-solution
$\tilde p\in \Q_{\local}$ to the optimization problem
$\min_{ p\in \Q_{\local}} \widetilde{\Theta}^{\eta,\Q}_{y,u,v}( p)$. Here $\gamma=\frac{R}{\mbox{minwidth}(\Q_{\local})}$
with $R$ any radius such that $\Q_{\local}\subseteq R \ball_{d+2}( q)$.

Moreover, under the additional assumption $a=b=d$, $\eta=1/d^2$ and $R=2r_\local$,  we have 
\begin{enumerate}[label=(\alph*)]
    \item the upper bound
    \begin{align}
    \label{dolif:conc:gamma}
    \gamma\ \le\ \frac{2(3+\mu)}{\mu} \quad\text{for}\quad \mu=\min\left\{\frac{d}{\sqrt{d^2+L_h^2}},\ \frac{d}{\sqrt{2d^2+L_f^2}}\right\};
\end{align}
\item the concentration inequality
\begin{align}
\label{dolif:conc:alpha}
    \Pr \left(
        \alpha\ \le\
        \min\left\{
        \frac{1}{2(d+2) (\sqrt{d+2}+d/2+1)^2},\ \frac{1}{2}
        \right\}
    \right)
    \le 2\exp \left(-\frac{d^2}{8}\right).
\end{align}
\end{enumerate}
\end{theorem}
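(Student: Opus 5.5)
The proof splits into three pieces: the oracle count from Theorem~\ref{theo:jiang}, the bound \eqref{dolif:conc:gamma} on $\gamma$, and the tail bound \eqref{dolif:conc:alpha} on $\alpha$.

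\textbf{Oracle count.} I would apply Theorem~\ref{theo:jiang} to the objective $\widetilde{\Theta}^{\eta,\Q}_{y,u,v}$ restricted to $\Q_{\local}$, in dimension $d+2$, with initial box $B_\infty(q,R)\supseteq R\ball_{d+2}(q)\supseteq\Q_{\local}$. Lemma~\ref{dolif:lem:minimizerinlocalball} guarantees that $\Q_{\local}$ contains the minimizer $p_*$.

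The required cutting planes come from two sources: the separation oracle of $\Q$, and the trivial separator $p-q$ for the ball $2r_{\local}\ball_{d+2}(q)$. The subgradient of the objective on $\Q$ is $(p-q)/\eta$, which is explicit.

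To see that the CP output is a $(d+2)^{-1}$-solution, I bound the range of the objective over $\Q_{\local}$. Every $p\in\Q_{\local}$ satisfies $\|p-q\|\le 2r_{\local}$, so
\[
\max_{\Q_{\local}}\widetilde{\Theta}-\min_{\Q_{\local}}\widetilde{\Theta}\le \frac{(2r_{\local})^2}{2\eta}=\frac{2r_{\local}^2}{\eta}.
\]
With $\alpha\le \delta^2/(4r_{\local}^2)=\eta/(2(d+2)r_{\local}^2)$, inequality \eqref{ineq:opt} then gives error at most $\alpha\cdot 2r_{\local}^2/\eta\le 1/(d+2)$. The separation-oracle count ${\cal O}((d+2)\log\frac{(d+2)\gamma}{\alpha})$ follows directly from Theorem~\ref{theo:jiang}.

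\textbf{Part (a).} This should follow from Lemma~\ref{dolif:lem:localset} with $c=2$ and $a=b=d$. Then $\lambda=3$, and $\mu=\min\{d/\widetilde C_h,\ d/\widetilde C_f\}$ matches the stated expression. The lemma yields
\[
\gamma\le \frac{c(\lambda+\mu)}{(c-1)\mu}=\frac{2(3+\mu)}{\mu}.
\]

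\textbf{Part (b).} This is the main work. Since $\alpha=\min\{\eta/(2(d+2)r_{\local}^2),1/2\}$ with $\eta=1/d^2$, the event $\{\alpha\le \text{threshold}\}$ is contained in $\{d\,r_{\local}\ge \sqrt{d+2}+d/2+1\}$, up to checking constants.

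Next I express $r_{\local}$ through the Gaussian step. Here $q=(y,u,v-d\eta)$ and $(y,u,v)=p_{k-1}+\frac1d Z$ with $Z\sim\mathcal N(0,I_{d+2})$. This gives
\[
r_{\local}=\Bigl\|p_{k-1}-q\Bigr\|=\Bigl\|{-\tfrac1d Z}+(0,0,d\eta)\Bigr\|\le \frac{\|Z\|}{d}+\frac1d,
\]
so $d\,r_{\local}\le \|Z\|+1$. The event is therefore contained in $\{\|Z\|\ge \sqrt{d+2}+d/2\}$. Using $\E\|Z\|\le\sqrt{d+2}$ and Gaussian concentration for the $1$-Lipschitz map $\|\cdot\|$, its probability is at most $2\exp(-(d/2)^2/2)=2\exp(-d^2/8)$.

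\textbf{Main obstacle.} The delicate step is reconciling the constants in the threshold for $\alpha$ with this chain of inclusions. I may need to use a slightly loose comparison, for instance dropping the factor $2$ versus $4$ in $\delta^2/(4r_{\local}^2)$ or absorbing it into the threshold, to land exactly on the displayed expression.
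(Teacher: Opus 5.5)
Your proposal is correct and follows the paper's proof essentially step for step. The three pieces match: the range bound $2r_{\local}^2/\eta$ over $\Q_{\local}$ fed into Theorem~\ref{theo:jiang}, Lemma~\ref{dolif:lem:localset} with $c=2$ (so $\lambda=3$) for part (a), and $d\,r_{\local}\le\|Z\|+1$ combined with Gaussian concentration at $\ell=d/2$ for part (b).

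The obstacle you anticipate does not arise. With $\eta=1/d^2$ one has $\delta^2/(4r_{\local}^2)=1/\bigl(2(d+2)(d\,r_{\local})^2\bigr)$, and the threshold is strictly below $1/2$. So the event in \eqref{dolif:conc:alpha} is exactly $\{d\,r_{\local}\ge\sqrt{d+2}+d/2+1\}$, with no slack needed in the constants.
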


\begin{proof}  
As noted at the start of the proof of Lemma~\ref{dolif:lem:compareP1P2}, Condition~(B1) together with $a,b>0$ implies that the set $\Q$ in~\eqref{dolif:def:piandQ} is convex. Now let $p^*$ and $p_*$ be respectively the maximizer and minimizer over $\Q_\local$ of  the objective $\widetilde{\Theta}^{\eta,\Q}_{y,u,v}$ at~\eqref{dolif:def:Theta}. We can write
\begin{align*}
    \widetilde{\Theta}^{\eta,\Q}_{y,u,v}( p^*)-\widetilde{\Theta}^{\eta,\Q}_{y,u,v}( p_*)
    &=
    \frac{1}{2\eta}\left(\| p^*- q\|^2-\| p_*- q\|^2\right)
    \le \frac{1}{2\eta}\| p^*- q\|^2.
\end{align*}

Combining with the fact that 
$\| p- q\|\le 2r_{\local}$ for all $ p\in \Q_{\local}$ to get
\begin{align}
\label{dolif:range:Theta}
    \widetilde{\Theta}^{\eta,\Q}_{y,u,v}( p^*)-\widetilde{\Theta}^{\eta,\Q}_{y,u,v}( p_*)
    \le \frac{(2r_{\local})^2}{2\eta}
    =\frac{2r_{\local}^2}{\eta}.
\end{align}
In view of convexity of $\Q$, \eqref{dolif:range:Theta}, the inequality \eqref{ineq:opt} from
Theorem~\ref{theo:jiang}, and the choice of $\alpha\in(0,1)$ at
\eqref{dolif:def:deltaandalpha}, Theorem~\ref{theo:jiang} guarantees the CP method by~\cite{jiang2020cuttingplane} produces a $1/(d+2)$ solution $\tilde{p}$ to the optimization problem
$\min_{ p\in \Q_{\local}} \widetilde{\Theta}^{\eta,\Q}_{y,u,v}( p)$, i.e.,
\begin{align*}
    \widetilde{\Theta}^{\eta,\Q}_{y,u,v}(\tilde p)-\min_{ p\in \Q_{\local}}\widetilde{\Theta}^{\eta,\Q}_{y,u,v}( p)
    &\stackrel{\eqref{ineq:opt}}{\le}
    \alpha\left(\widetilde{\Theta}^{\eta,\Q}_{y,u,v}( p^*)-\widetilde{\Theta}^{\eta,\Q}_{y,u,v}( p_*)\right)
    \stackrel{\eqref{dolif:def:deltaandalpha},\eqref{dolif:range:Theta}}{\le}
    \frac{\delta^2}{2\eta}\stackrel{\eqref{dolif:def:deltaandalpha}}{=}\frac{1}{d+2}. 
\end{align*}
In particular, the CP method by~\cite{jiang2020cuttingplane} requires a subgradient oracle for the objective $\widetilde{\Theta}^{\eta,\Q}_{y,u,v}$, which is available explicitly; and a separation oracle for $\Q_{\local}$. The latter is obtained by combining a free separation oracle for $2r_{\local}\ball_{d+2}(q)$ with a separation oracle for $\Q$. Hence, per Theorem~\ref{theo:jiang}, we need ${\cal O} \left((d+2) \log \frac{(d+2) \gamma}{\alpha}\right)$ calls to the separation
oracle of $\Q_{\local}$, which translate to ${\cal O} \left((d+2) \log \frac{(d+2) \gamma}{\alpha}\right)$ calls to the separation
oracle of $\Q$. 

Next, we will set $a=b=d$, $\eta=1/d^2$. The upper bound on $\gamma$ is derived in Lemma~\ref{dolif:lem:localset} with $c=2$ and this result requires condition~(B1). What remains is to derive the
concentration inequalities for $r_{\local}$ and $\alpha$.
Write $Z\sim \mathcal{N}(0,I_{d+2})$. Since $ q=(y,u,v-b\eta)= p_{k-1}+\sqrt{\eta} Z-b\eta e_t$ for $e_t=\brac{0,\ldots,0,1}$, we have $ r_{\local}=\| p_{k-1}- q\|=\|\sqrt{\eta} Z-b\eta e_t\|
  \le \sqrt{\eta} \|Z\|+b\eta$
and hence
\begin{align}
\label{relation_rlocZ}
  \frac{r_{\local}}{\sqrt{\eta}}-b\sqrt{\eta}\le \|Z\|.
\end{align}
The Gaussian concentration inequality from \cite[Equation (3.5)]{ledoux2013probability} says $ \Pr\{\|Z\|\ge \sqrt{d+2}+\ell\}\le 2\exp \left(-\frac{\ell^2}{2}\right)$, so combined with~\eqref{relation_rlocZ}, we get $ \Pr \left\{
    r_{\local}\ge (\sqrt{d+2}+\ell+b\sqrt{\eta})\sqrt{\eta}
  \right\}
  \le 2\exp \left(-\frac{\ell^2}{2}\right)$.  Plugging in $\eta=1/d^2$, $a=b=d$, and $\ell=d/2$ gives
\begin{align}
\label{dolif:conc:rloc}
    \Pr \left(
        r_{\local}\ \ge\ \frac{\sqrt{d+2}+d/2+1}{d}
    \right)
    \le 2\exp \left(-\frac{d^2}{8}\right).
\end{align}

Finally, on the event $ r_{\local}\le \frac{\sqrt{d+2}+d/2+1}{d}$, we have
\begin{align*}
  \alpha=\min\left\{\frac{\delta^2}{4r_{\local}^2},\frac{1}{2}\right\}
  &\ge
  \min\left\{
  \frac{\delta^2}{4\left(\frac{\sqrt{d+2}+d/2+1}{d}\right)^2},\ \frac{1}{2}
  \right\}\\
  &=\min\left\{\frac{1}{2(d+2)(\sqrt{d+2}+d/2+1)^2},\frac12\right\}, 
\end{align*}
and therefore
\[
  \left\{\alpha\le \min\left\{
  \frac{1}{2(d+2)(\sqrt{d+2}+d/2+1)^2},\frac12\right\}\right\}
  \subseteq
  \left\{r_{\local}> \frac{\sqrt{d+2}+d/2+1}{d}\right\}.
\]
Combining this inclusion with \eqref{dolif:conc:rloc} yields \eqref{dolif:conc:alpha}.
\end{proof}


\subsection{Sampling $ p\sim \exp\brac{-\widetilde{\mathcal{P}}_1( p)}$ in Algorithm~\ref{dolif:RGO}}
\label{dolif:appen:specialalgo}

This part is similar to Appendix~\ref{silif:appen:specialalgo}. Recall $\widetilde{\mathcal{P}}_1$ is defined at~\eqref{dolif:def:tildeP1} and $\tilde p=(\tilde x,\tilde s,\tilde t)$ is defined in Algorithm~\ref{dolif:RGO}.
By completing the square, one can easily see that sampling $w\sim \exp\brac{-\widetilde{\mathcal{P}}_1(p)}$ is equivalent to
\begin{align*}
  p\sim \Lambda(p) \propto
 \exp\brac{-\frac{1}{2\eta}\brac{\norm{ p-\tilde p}-\sqrt{\frac{2\eta}{d+2}} }^2}.
\end{align*}

While $\Lambda( p)$ is not a Gaussian density, generating $p\sim \Lambda(p)$ is straightforward since it can be turned into a one-dimensional sampling problem. We state here a generic procedure for this sampling problem. 
\begin{algorithm}[H]
	\caption{Sample $ p\sim  \Lambda( p)$ } 
	\label{dolif:alg:samplingP1}
	\begin{algorithmic}
		\State 1. Generate $W\sim {\cal N}(0,I_{d+2})$  and set $\theta = W/\|W\|$; 
		\State 2. Generate $r \propto r^{d+1} \exp\left(-\frac{1}{2\eta} \left(r - \sqrt{\frac{2\eta}{d+2}}\right)^2\right)$ by Adaptive Rejection Sampling for one-dimensional log-concave distribution by \cite{gilks1992adaptive}. 
		\State 3. Output $ p = \tilde p + r \theta$.
	\end{algorithmic}
\end{algorithm}


\end{document}